\newtheorem{theorem}{Theorem}[section]
\newtheorem{lemma}[theorem]{Lemma}
\newtheorem{num_proof}[theorem]{Proof}
\newtheorem{definition}[theorem]{Definition}
\begin{document}
% The file aaai.sty is the style file for AAAI Press
% proceedings, working notes, and technical reports.
%
\title{Towards Training Probabilistic Topic Models on \\ Neuromorphic Multi-chip Systems}
\author{Zihao Xiao,\ Jianfei Chen,\ Jun Zhu\thanks{corresponding author.}\\
Dept. of Comp. Sci. \& Tech., TNList Lab, State Key Lab for Intell. Tech. \& Sys.\\
Center for Bio-Inspired Computing Research, Tsinghua University, Beijing, 100084, China\\
\{xiaozh15,~chenjian14\}@mails.tsinghua.edu.cn,~ dcszj@tsinghua.edu.cn
}
\maketitle
\begin{abstract}
Probabilistic topic models are popular unsupervised learning methods, including probabilistic latent semantic indexing (pLSI) and latent Dirichlet allocation (LDA).
By now, their training is implemented on general purpose computers (GPCs), which are flexible in programming but energy-consuming.
Towards low-energy implementations, this paper investigates their training on an emerging hardware technology called the neuromorphic multi-chip systems (NMSs).
NMSs are very effective for a family of algorithms called spiking neural networks (SNNs).
We present three SNNs to train topic models.
The first SNN is a batch algorithm combining the conventional collapsed Gibbs sampling (CGS) algorithm and an inference SNN to train LDA.
The other two SNNs are online algorithms targeting at both energy- and storage-limited environments.
The two online algorithms are equivalent with training LDA by using maximum-a-posterior estimation and maximizing the semi-collapsed likelihood, respectively.
They use novel, tailored ordinary differential equations for stochastic optimization.
We simulate the new algorithms and show that they are comparable with the GPC algorithms, while being suitable for NMS implementation.
We also propose an extension to train pLSI and a method to prune the network to obey the limited fan-in of some NMSs.
\end{abstract}

\section{Introduction}\label{sec.Introduction}
Topic models have been widely used to discover latent semantic structures from a large corpus of text documents or images in a bag-of-words format~\cite{Sivic2005}.
The most popular models are probabilistic Latent Semantic Indexing (pLSI)~\cite{Hofmann1999} and its Bayesian formulation of Latent Dirichlet Allocation (LDA)~\cite{Blei2003}.
%pLSI models a document as an admixture of topics, and a topic is a unigram distribution over a vocabulary.
%LDA further imposes a Dirichlet prior on the topic mixing proportions to obtain better generalization.
On mobile applications, LDA is a robust Bayesian model that is suitable to learn from small, noisy data, e.g., images, texts and context logs~\cite{Bao2012}.
Moreover, some researchers have reported that learning LDA with a hybrid architecture consisting of mobile devices and servers reduces costs on the server end, as well as the response times on the mobile end~\cite{Robinson2015}.
On servers, much recent progress has been made on developing efficient algorithms to learn a large number of topics on massive-scale datasets~\cite{Wang2014,Chen2016,Yuan2015}.
However, all these algorithms are implemented on general purpose computers (GPCs), which are powerful in computing and flexible in programming but energy-consuming.

Neuromorphic multi-chip systems (NMSs) represent an emerging hardware technology for low-energy implementations of spiking neural networks (SNNs).
SNNs were originally studied to understand the computation in the brain, where a neuron communicates with other neurons via voltage spikes.
Different from GPCs, NMSs have some special designs, making it highly nontrivial to implement an ordinary leaning algorithm.
First, there are dedicated computing units calculating a weighted sum of the inputs and triggering spikes accordingly.
Second, NMSs use distributed memory~\cite{Merolla2014}, and the on-chip memory is typically limited (e.g., 52MB for TruthNorth~\cite{Merolla2014} and 128MB for SpiNNaker~\cite{Furber2014}).
When a SNN is large, its neurons and model parameters (e.g., synaptic weights) must reside on several chips.
%and the computing on a chip requires inter-chip parameter communication to get the high-precision model parameters (e.g., synaptic weights) on other chips.
As the inter-chip communication is only efficient for spikes by using the AER protocol~\cite{Mahowald1994}, parameter communication is either impossible or inefficient.
To extend the storage size, a NMS can be integrated with an external memory~\cite{Connor2013} to store some infrequently accessed data.
But in many scenarios where low-energy computing is required, the external memory can also be limited (e.g., on a mobile phone).
Thus, it is important to consider the limited memory issue when implementing new models.

For topic models, existing learning algorithms do not satisfy the basic computation and communication designs of NMSs; thus cannot be directly implemented on NMSs.
For the popular collapsed Gibbs sampling (CGS)~\cite{Griffiths2004}, the update of a model parameter (i.e., a count) only depends on its latest value, without needing parameter communication.
But the sampling operation in CGS is not implemented by SNN dynamics compatible with NMSs.
When the limited external memory is considered, CGS is undesirable because it stores the whole corpus and all topic assignments.
Moreover, though the existing online or stochastic algorithms are memory-efficient, they require intensive parameter communications, i.e., the update of one parameter depends on the exact value of another parameter.
Specifically, the variational inference (VI) methods~\cite{Blei2003,Hoffman2013,Broderick2013} have mutual dependency between the local parameters and global ones during learning;
and the stochastic gradient MCMC (SGMCMC)~\cite{Patterson2013} also has strong coupling between different dimensions when computing the log-likelihood gradient as it has a normalization term.

In this paper, we aim to fill up the gap between exiting algorithms for topic models (particularly LDA and pLSI) and the special requirements of NMSs by designing novel algorithms suitable for NMS implementation.
We draw inspirations from the work on the NMS implementation of multinomial mixtures~\cite{Nessler2013}, where an online SNN algorithm is designed to meet all the design requirements of NMSs.
We significantly extend this work by presenting three new SNNs to learn topic models, which are much more complicated than mixture models by introducing more random variables and Bayesian priors.

The proposed SNNs introduce an additional document layer and additional connections to represent the documents and their topic-mixing proportions.
The first SNN is a SNN implementation of CGS by leveraging its nice locality property.
The second SNN applies the theory behind the online SNN algorithm in \cite{Nessler2013} (i.e., ordinary differential equation (ODE) and its stochastic approximation) to solve a conceived optimization problem that is equivalent to the Maximum-a-Posteriori (MAP) problem of LDA.
The third SNN solves a conceived optimization problem that is equivalent to maximizing the semi-collapsed likelihood of LDA.
It is a hybrid of the former two networks that has a CGS component to sample the local latent variables, and an optimization component to update the global parameters.
Empirical results show that our online SNN algorithms are comparable with existing GPC algorithms while they have the advantage of being suitable for NMS implementation.

In the appendix, we propose a SNN implementation of training pLSI as a special case of LDA.
And we also propose a network pruning scheme to satisfy the limited fan-in in some NMS designs, e.g., TruthNorth~\cite{Merolla2014}.

\section{Preliminary of Topic Models}\label{sec.Preliminary}
For clarity, we focus on Latent Dirichlet Allocation (LDA)~\cite{Blei2003}.
All our techniques can be applied to pLSI, a special case of LDA.
We defer the details to Appendix.
%LDA defines priors on model parameters and has a better generalization ability than pLSI.

\subsection{Latent Dirichlet Allocation}\label{sec.Latent Dirichlet Allocation}
Consider a corpus $\mathbf{W}\triangleq\{\mathbf{w}_d\}_{d=1}^D$ with $D$ documents and $V$ unique words in its vocabulary.
$\mathbf{w}_d\triangleq (w_{d1},...,w_{dN_d})$ denotes document $d$ with $N_d$ words and $w\in \{1,...,V\}$ denotes the occurrence of one word.
LDA assumes a generative process for the corpus $\mathbf{W}$ as follows: %is generated in the following way:
\begin{enumerate}
    \item[]for each document $d=1,...,D$,
        \begin{enumerate}%
            \item[] draw a topic mixing proportion $\boldsymbol\theta_d\sim\text{Dir}(\boldsymbol\lambda)$;
            \item[] for each position in the document, $i=1,...,N_d$,
                \begin{enumerate}%\vspace{-.15cm}
                    \item[] draw a topic assignment $z_{di}\sim\text{Multi}(\boldsymbol\theta_d)$;%\vspace{-.1cm}
                    \item[] draw a word $w_{di}\sim\text{Multi}(\boldsymbol\phi_{z_{di}})$,
                \end{enumerate}
        \end{enumerate}
\end{enumerate}
where $\boldsymbol\theta_d$ is a $K$-dimensional topic-mixing proportion vector of document $d$; $\boldsymbol\lambda$ is the hyper-parameter of the Dirichlet prior; $K$ is the pre-specified number of topics; $\boldsymbol\phi_k$ is a $V$-dimensional topic distribution vector for topic $k$; and $\text{Multi}(\cdot)$ and $\text{Dir}(\cdot)$ denotes the Multinomial distribution and the Dirichlet distribution respectively.
We will use $\mathbf{z}_d\triangleq(z_{d1},...,z_{dN_d})$ to denote the collection of topic assignments for document $d$, and use the notations $\mathbf{Z}\triangleq\{\mathbf{z}_d\}_{d=1}^D$, $\boldsymbol\Phi \triangleq \{\boldsymbol\phi_k\}_{k=1}^K, \boldsymbol\Theta \triangleq \{\boldsymbol\theta_d\}_{d=1}^D$ in the sequel.

In this paper, we focus on two parameter estimation methods for LDA: the Maximum-a-Posteriori (MAP) estimation and the Maximum-likelihood (ML) estimation on the semi-collapsed distribution.
The MAP training has proven effective in \cite{Asuncion2009,Chen2016}\footnote{Strictly speaking, they use a smoothed LDA described next.}.
While MAP is a point-estimate, using the semi-collapsed distribution provides a better parameter estimation on $\boldsymbol\Phi$ by integrating out the latent variable $\boldsymbol\Theta$~\cite{Patterson2013,Griffiths2004}.
In the following, we extend the MAP and ML problems from the batch setting to the online setting, which is explored in this paper.

{\bf MAP:} The MAP problem in the batch setting is to maximize $\log p(\boldsymbol\Theta ; \mathbf{W}, \boldsymbol\Phi, \boldsymbol\lambda)$, which is a summation of the log-likelihood $\log p(\mathbf{W};\boldsymbol\Phi,\boldsymbol\Theta)$ and the log-prior $ \log p(\boldsymbol\Theta|\boldsymbol\lambda)$.
According to the i.i.d. assumptions, the log-likelihood can be re-written as
{\small
\begin{align}
    \log p(\mathbf{W};\boldsymbol\Phi,\boldsymbol\Theta)
    &= \sum_{d=1}^D\sum_{w=1}^V N_{wd}\log p(w|d;\boldsymbol\Phi, \boldsymbol\Theta),\nonumber
\end{align}
}%
where $N_{wd}$ is the number of times that a word $w$ occurs in document $d$.
Let $\pi(w,\!d)$ denote the empirical distribution of the co-occurrence of word $w$ and document $d$, we have
{\small
\begin{align}
    \log p(\mathbf{W};\boldsymbol\Phi,\boldsymbol\Theta)  &= N\sum_{d=1}^D\sum_{w=1}^V \frac{N_{wd}}{N}\log p(w|d;\boldsymbol\Phi, \boldsymbol\Theta)\nonumber\\
    &=N\mathbb{E}_{\pi(w,d)} \big[\log p(w|d;\boldsymbol\Phi,\boldsymbol\Theta) \big],\label{equ.L stochastic form}
\end{align}
}%
where $N=\sum_{d=1}^{D}\sum_{w=1}^V N_{wd}$ is the total number of tokens in the corpus. Furthermore, as the prior distributions of $\boldsymbol\theta_d$'s are mutually independent, the log-prior term can be re-written as
{\small
\begin{align}
     \log p(\boldsymbol\Theta|\boldsymbol\lambda)\!=\!\sum_{d=1}^D\log p(\boldsymbol\theta_{d}|\boldsymbol\lambda)
    \!=\!N\mathbb{E}_{\pi(w,d)} \!\!\big[ \frac{1}{N_d}\log p(\boldsymbol\theta_d|\boldsymbol\lambda) \big]. \nonumber %\label{equ.prior stochastic form}
\end{align}
}%
%Substituting Eq.~(\ref{equ.L stochastic form}) and (\ref{equ.prior stochastic form}) into Eq.~(\ref{equ.map batch problem}),
With the above derivations, the MAP problem becomes
{\small
\begin{align}
\max_{\boldsymbol\Phi,\boldsymbol\Theta}~ \mathbb{E}_{\pi(w,d)}\Big[\log p(w|d;\boldsymbol\Phi,\boldsymbol\Theta) + \frac{1}{N_d}\log p(\boldsymbol\theta_d|\boldsymbol\lambda)\Big].
    \label{equ.MAP stochastic problem}
\end{align}
}%
In the new formulation of Eq.~(\ref{equ.MAP stochastic problem}), $\pi(w,d)$ is not limited to the empirical distribution of a fixed corpus.
Instead, it can also be the unknown environment distribution representing the underlying probability of the co-occurrence of word $w$ and document $d$.
As long as we can draw samples (e.g., data comes in a stream), an unbiased estimate of the objective can be constructed.

{\bf ML on the semi-collapsed distribution:} Leveraging the semi-collapsed distribution can attain a better parameter estimation on $\boldsymbol\Phi$.
Specifically, we consider maximizing the semi-collapsed likelihood $p(\mathbf{W};\boldsymbol\Phi,\boldsymbol\lambda) = \int p(\mathbf{W}|\boldsymbol\Theta;\boldsymbol\Phi)p(\boldsymbol\Theta;\boldsymbol\lambda)d\boldsymbol\Theta$,
where $\boldsymbol\Theta$ is integrated out.

Using the evidence lower bound (ELBO), the logarithm of this semi-collapsed likelihood can be re-written as
{\small
\begin{align}
    \log p(\mathbf{W}|\boldsymbol\Phi,\boldsymbol\lambda) = \mathbb{E}_{p(\mathbf{Z}|\mathbf{W};\boldsymbol\Phi,\boldsymbol\lambda)}\log p(\mathbf{W}|\mathbf{Z};\boldsymbol\Phi) + C\label{equ.semi ELBO}
\end{align}
}%
where $C$ denotes a constant w.r.t. $\boldsymbol\Phi$.
Detailed derivation of this equality is shown in Appendix~\ref{appendix.elbo}.
This transformation allows us to leverage the many i.i.d. structures in LDA:
{\small
\begin{align*}
    \log p(\mathbf{W}|\boldsymbol\Phi,\boldsymbol\lambda) %= &\sum_{d=1}^D\mathbb{E}_{p(\mathbf{z}_d|\mathbf{w}_d;\boldsymbol\Phi,\boldsymbol\varphi)}\log p(\mathbf{w}_d|\mathbf{z}_d; \boldsymbol\Phi) + C\nonumber\\
    =& D\mathbb{E}_{\pi(d)}\mathbb{E}_{p(\mathbf{z}_d|\mathbf{w}_d;\boldsymbol\Phi,\boldsymbol\lambda)}\log p(\mathbf{w}_d|\mathbf{z}_d;\boldsymbol\Phi) + C,
\end{align*}
\begin{align*}
    =D\mathbb{E}_{\pi(d)}\sum_{w=1}^V\sum_{z=1}^K\mathbb{E}_{p(\mathbf{z}_d|\mathbf{w}_d;\boldsymbol\Phi,\boldsymbol\lambda)}[C_{d,z,w}]\log p(w|z;\boldsymbol\Phi) + C,
\end{align*}
}%
where $\pi(d)$ denotes the empirical distribution of documents; $C_{d,z,w}$ denotes the number of words $w$ assigned topic $z$ in document $d$.
The first equality is from the i.i.d. documents, and the second is from the i.i.d. words given topics.
Overall, the ML problem on the semi-collapsed likelihood is
{\small
\begin{align}
    \max_{\boldsymbol\Phi}\mathbb{E}_{\pi(d)}\mathbb{E}_{p(\mathbf{z}_d|\mathbf{w}_d;\boldsymbol\Phi,\boldsymbol\lambda)}[C_{d,z,w}]\log p(w|z;\boldsymbol\Phi),\label{equ.semi stochastic problem}
\end{align}
}%
where $\pi(d)$ can be extended to represent the unknown environment distribution where only its samples are accessible.

We use Gibbs sampling to approximately infer the semi-collapsed posterior distribution $p(\mathbf{z}_d|\mathbf{w}_d;\boldsymbol\Phi,\boldsymbol\lambda)$ in Eq.~(\ref{equ.semi stochastic problem}).
Gibbs sampling iteratively performs the following steps until convergence~\cite{Patterson2013}:
\begin{enumerate}
\item Uniformly sample a token $w_{di}$ from document $d$
\item Sample its topic assignment $z_{di}$ from the local conditional distribution:
{\small
\begin{align}
p(z_{di}=z|\mathbf{z}_d^{\neg di}, \mathbf{w}_d;\boldsymbol\Phi,\boldsymbol\lambda)
           \propto \phi_{zw} \cdot (C_{z,d}^{\neg di} + \lambda_z),\label{equ.semi cgs update rule}
\end{align}
}%
\end{enumerate}
where $C_{z,d}$ is the count of times that topic $z$ is assigned to any token in document $d$; the superscript $\neg di$ means that the $i$ position in document $d$ is eliminated from the counts or variable collections.
In this paper, we call this Gibbs sampling as semi-CGS.

Note that the corpus size $D$ is not required in problems (\ref{equ.MAP stochastic problem}) and (\ref{equ.semi stochastic problem}).
This is different from the stochastic methods \cite{Hoffman2013,Patterson2013} which need to know $D$.
As a result, formulation~(\ref{equ.MAP stochastic problem}) can be used in streaming data settings, as explored in this paper.

The reason for omitting $D$ is that we don't define priors on the global parameter $\boldsymbol\Phi$.
This is the case that our online SNN algorithms, under the hardware constraints, can only deal with at present.
Although no prior for $\boldsymbol\Phi$ might slightly harm the generalization performance in comparison with the smoothed LDA introduced below, this vanilla LDA model is originally used in~\cite{Blei2003}.

\subsection{Smoothed LDA and collapsed Gibbs sampling}\label{sec.CGS}
The smoothed LDA further defines a Dirichlet prior for the topic distributions $\boldsymbol\phi_k$ upon a LDA to improve the generalization performance:
{\small
\begin{align}
    \boldsymbol\phi_k\sim\text{Dir}(\boldsymbol\varphi), ~~ k = 1,...,K.\nonumber
\end{align}
}%
A popular algorithm to train the smoothed LDA is Collapsed Gibbs Sampling (CGS)~\cite{Griffiths2004}.
CGS is a batch algorithm that uses Gibbs sampling to sample from the collapsed posterior distribution $p(\mathbf{Z}|\mathbf{W},\boldsymbol\lambda,\boldsymbol\varphi)$  and the inference can have a high accuracy.
CGS iteratively performs the following steps until convergence:
\begin{enumerate}
   \item Uniformly sample a token $w_{di}$ from the corpus.
   \item Sample its topic assignment $ z_{di}$ from the local conditional distribution:
\end{enumerate}
{\small
\begin{align}
p(z_{di}\!=\!z|\mathbf{Z}^{\neg di}, \mathbf{W};\boldsymbol\lambda,\boldsymbol\varphi)
           \!\propto\! \frac{C_{w_{di},z}^{\neg di} \!+\! \varphi_{w_{di}}}{C_{\cdot,z}^{\neg di} \!+\! \bar{\varphi}} \!\cdot\! (C_{z,d}^{\neg di} \!+\! \lambda_z),\label{equ.cgs update rule}
\end{align}
}%
where $C_{w,z}$ is the count of times that word $w$ is assigned to topic $z$, $C_{\cdot,z} = \sum_{w=1}^VC_{w,z}$; $\bar{\varphi} = \sum_{w=1}^V\varphi_w$.

\section{The SNN Algorithms}\label{sec.The SNN algorithms}
We now introduce our SNN algorithms to train LDA.
First, the network architecture and activation method are introduced to encode the word, the topic and the document.
Then we present a SNN implementation of CGS, and two online SNN algorithms to solve problem (\ref{equ.MAP stochastic problem}) and (\ref{equ.semi stochastic problem}) respectively.

\subsection{Network architecture}\label{sec.SNN representation}
The proposed neural network architecture is shown in Fig.~\ref{fig.SNN architecture}.
There are two observable layers (the word and the document) and one latent layer (the topic).
The three layers use one-hot representations to encode a word, a document and a topic respectively; see Tab.~\ref{tlb.Parameter relation}(a).
The latent layer is fully connected by the observable layers, where $\mathbf{M} \triangleq \{ \mathbf{M}^{\alpha}, \mathbf{M}^{\beta} \}$ denotes all synaptic weights.
In the SpikeCGS algorithm (see Tab.~\ref{tlb.Parameter relation}(b)), there are self-excitations $\mathbf{b}$ in the latent layer.

\begin{figure}[!t]
    \centering
    \resizebox{0.35\textwidth}{!}{
    \begin{tikzpicture}[node distance=\nodedist]
        \def\nodesize{15pt};

        \tikzstyle{every pin edge}=[<-,shorten <=2pt]
        \tikzstyle{neuron}=[circle, draw=black!100, line width=0.5mm, inner sep=0pt, minimum size=\nodesize]
        \tikzstyle{dot}=[circle, draw=black!100, fill=black!100, line width=0.5pt, inner sep=0.5pt, minimum size=0.1mm]
        \tikzstyle{annot} = [text width=10em, text centered]

        % Draw the word neurons
        \foreach \name / \x in {0,1,4,5}
            \path[xshift=-2cm]
                node[neuron] (w-\name) at ({-\x * \nodesize * 1}, 0) {};
        \foreach \name / \x in {3.1,2.5,1.9}
            \path[xshift=-2cm]
                node[dot] (w-\name) at ({-\x * \nodesize * 1}, 0) {};
        \draw[xshift=-2cm, line width = 0.5mm] ({0.5 * \nodesize}, {0.5 * \nodesize + 0.1}) rectangle ({-5.5 * \nodesize}, {-0.5 * \nodesize - 0.1}) node[right = {3 * \nodesize}, below = {0.1*\nodesize}]{word layer  $\mathbf{x}^{\alpha}$};

        % Draw the document neurons
        \foreach \name / \x in {0,3}
            \path[xshift=-0.5cm]
                node[neuron] (d-\name) at ({\x * \nodesize * 1}, 0) {};
        \foreach \name / \x in {0.9,1.5,2.1}
            \path[xshift=-0.5cm]
                node[dot] (d-\name) at ({\x * \nodesize * 1}, 0) {};
        \draw[xshift=-0.5cm, line width = 0.5mm] ({-0.5 * \nodesize}, {0.5 * \nodesize + 0.1}) rectangle ({3.5 * \nodesize}, {-0.5 * \nodesize - 0.1}) node[left = {2*\nodesize}, below = {0.1*\nodesize}]{document layer  $\mathbf{x}^{\beta}$};

        % Draw the hidden layer nodes
        \foreach \name / \x in {-0.5,2.5}
            \path[xshift=-2.25cm]
                node[neuron] (t-\name) at ({\x * \nodesize * 1}, {3 * \nodesize}) {};
        \foreach \name / \x in {0.5,1,1.5}
            \path[xshift=-2.25cm]
                node[dot] (t-\name) at ({\x * \nodesize * 1}, {3 * \nodesize}) {};
        \draw[xshift=-2.25cm, yshift={3 * \nodesize}, line width = 0.5mm] ({-1 * \nodesize}, {0.5 * \nodesize + 0.1}) rectangle ({3 * \nodesize}, {-0.5 * \nodesize - 0.1}) node[left = {2 * \nodesize}, above = \nodesize] {topic layer  $\mathbf{h}$} node[right = {0.5 * \nodesize}, above = 0 * \nodesize] {$\mathbf{b}$};

        % Connect every node in the input layer with every node in the
        % hidden layer.
        \draw[line width =1mm,>=latex,->,draw=black!100] ({-2cm - 3 * \nodesize}, {\nodesize * 0.75}) -- ({-2.25cm + \nodesize * 0.7}, {2.25 * \nodesize}) node[left = 1.5cm, below = 0.00cm, scale = 1.2] {$\mathbf{M}^{\alpha}$};
        \draw[line width =1mm,>=latex,->,draw=black!100] ({-0.5cm + 1 * \nodesize}, {\nodesize * 0.75}) -- ({-2.25cm + \nodesize * 1.5}, {2.25 * \nodesize}) node[right = 1.6cm, below = 0.00cm, scale = 1.2] {$\mathbf{M}^{\beta}$};
    \end{tikzpicture}
    }
    \caption{SNN architecture}
    \label{fig.SNN architecture}
\end{figure}
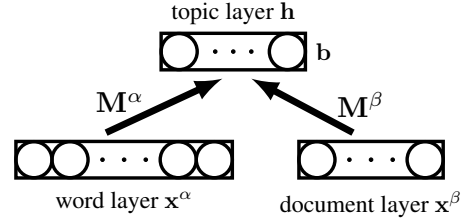

\begin{table}[t!]
\begin{center}
\resizebox{0.45\textwidth}{!}{%
    \begin{tabular}{ |c| c |c| }\hline
        Topic model & Network & Relation\\
        variables &variables & \\\hline
        $z$ & $\mathbf{h}$ & $z=z^*\leftrightarrow h_{z^{'}}=\mathbb{I}(z^{'}=z^*),\forall z^{'}$\\
        $w$ & $\mathbf{x}^{\alpha}$ & $w=w^*\leftrightarrow x_{w^{'}}^{\alpha}=\mathbb{I}(w^{'}=w^*),\forall w^{'}$\\
        $d$ & $\mathbf{x}^{\beta}$ & $d=d^*\leftrightarrow x_{d^{'}}^{\beta}=\mathbb{I}(d^{'}=d^*),\forall d^{'}$\\\hline
        \multicolumn{3}{c}{(a) for all SNNs}\\\hline
        $C_{w,z}^{\neg di}+\varphi_w$ & $M_{zw}^{\alpha}$ & $C_{w,z}^{\neg di}+\varphi_w = \exp M_{zw}^{\alpha}$\\
        $C_{z,d}^{\neg di}+\lambda_z$ & $M_{zd}^{\beta}$ & $C_{z,d}^{\neg di}+\lambda_z = \exp M_{zd}^{\beta}$\\
        $C_{\cdot,z}^{\neg di}+\bar{\varphi}$ & $b_z$ & $C_{\cdot,z}^{\neg di}+\bar{\varphi} = \exp b_z$\\\hline
        \multicolumn{3}{c}{(b) for SpikeCGS (when resampling)}\\\hline
        $\phi_{zw}$ & $M_{zw}^{\alpha}$ & $\phi_{zw} \propto \exp M_{zw}^{\alpha}$\\
        $\theta_{dz}$ & $M_{zd}^{\beta}$ & $\theta_{dz} \propto \exp M_{zd}^{\beta}$\\\hline
        \multicolumn{3}{c}{(c) for SpikeLDA}\\\hline
        $\phi_{zw}$ & $M_{zw}^{\alpha}$ & $\phi_{zw} \propto \exp M_{zw}^{\alpha}$\\
        $C_{z,d}^{\neg di}+\lambda_z$ & $M_{zd}^{\beta}$ & $C_{z,d}^{\neg di}+\lambda_z = \exp M_{zd}^{\beta}$\\\hline
        \multicolumn{3}{c}{(d) for semi-SpikeLDA}
    \end{tabular}%
    }
    \end{center}
    \caption{Re-parametrization, where $\mathbb{I(\cdot)}$ represents the indicator function.}
    \label{tlb.Parameter relation}
\end{table}

\subsection{Activation method}
Suppose a token $(w,d)$ is observed.
The corresponding neurons in the observable layers are clamped and excited following the one-hot encoding (Tab.~\ref{tlb.Parameter relation}(a)).
Then the neurons in the latent layer trigger their spikes following Poisson processes asynchronously until any latent neuron succeeds in firing a spike.
Specifically, the $z$th latent neuron calculates the input from the precedent layer $u_z\!=\!\mathbf{M}_{z\cdot}^{\alpha} \cdot \mathbf{x}^{\alpha}\!+\! \mathbf{M}_{z\cdot}^{\beta}\cdot\mathbf{x}^{\beta}$\footnote{When there is self-excitation, $u_z = \mathbf{M}_{z\cdot}^{\alpha} \cdot \mathbf{x}^{\alpha}  + \mathbf{M}_{z\cdot}^{\beta}\cdot\mathbf{x}^{\beta} - b_z$.}, and fires spikes following an inhomogeneous Poisson process of rate $\exp(u_z)$.
If the $z$th latent neuron fires a spike, then $h_z=1$ and topic $z$ is sampled; otherwise $h_z=0$.
This uses the one-hot encoding as well (Tab.~\ref{tlb.Parameter relation}(a)).
%If the SNN satisfies Def.~\ref{prop.distribution definition} and Lemma~\ref{prop.re-parametrization},  Eq.~(\ref{equ.topic inference}) is inferred.
Alg.~\ref{alg.E-step}
%\footnote{Line~\ref{alg.line.one hot} in the algorithm assumes a synchronous implementation of encoding $(w,d)$, e.g., by clamping the neurons. For an asynchronous implementation, which is more biologically plausible, interested reader may refer to \cite{Nessler2013}.}
summarizes this activation dynamics.
This activation dynamics was previously used in~\cite{Nessler2013}, and they prove that the sample, i.e. the output $\hat{z}$ in Alg.~\ref{alg.E-step}, is a sample from a softmax distribution $p(z|\cdot)\!\propto\!\exp(u_z)$.

\begin{algorithm}[t!]
    \caption{Sample the topic assignment $\hat{z}$ for a token $(w,d)$}
    \begin{algorithmic}[1]
        \Function{InferenceDynamics}{$w,d$}
        \State Set $x_{w'}^{\alpha} = \mathbb{I}(w=w'),\forall w';x_{d'}^{\beta} = \mathbb{I}(d=d'),\forall d'$ \label{alg.line.one hot}
        \While{no latent neuron trigger a spike yet}
            \For{latent neuron $z\!=\!1,...K$, asynchronously}
                \State $u_z = \mathbf{M}_{z\cdot}^{\alpha} \cdot \mathbf{x}^{\alpha}  + \mathbf{M}_{z\cdot}^{\beta}\cdot\mathbf{x}^{\beta}$
                \State fire spikes using the rate $\exp(u_z)$
            \EndFor
        \EndWhile
        \State \Return index of the fired neuron, $\hat{z}$
        \EndFunction
    \end{algorithmic}
    \label{alg.E-step}
\end{algorithm}

\subsection{The SNN implementation of CGS}\label{sec.cgs snn}
CGS has the good locality property that the update of a count only depends on its latest value, without intensive parameter communication.
This satisfies the communication constraints of NMSs.
However, it does not sample the topic assignment, i.e. Eq.~(\ref{equ.cgs update rule}), using SNN dynamics.
We solve this problem by using the SNN dynamics Alg.~\ref{alg.E-step}.

Suppose a token $w_{di}$ is sampled in an iteration of CGS and now
its topic assignment should be resampled from Eq.~(\ref{equ.cgs update rule}).
If we represent the word $w_{di}$, its document $d$ and its topic assignment using the spiking neurons via Tab.~\ref{tlb.Parameter relation}(a), the conditional distribution to sample from is equivalent with
{\small
\begin{align*}
p(h_z\!=\!1|\cdot)
           \!\propto\! \frac{C_{w_{di},z}^{\neg di} \!+\! \varphi_{w_{di}}}{C_{\cdot,z}^{\neg di} \!+\! \bar{\varphi}} \!\cdot\! (C_{z,d}^{\neg di} \!+\! \lambda_z),~~~\forall z,
\end{align*}
}%
on the SNN.
If the network parameters relate to the counts via Tab.~\ref{tlb.Parameter relation}(b), the conditional distribution becomes
{\small
\begin{align}
    p(h_z=1|\cdot) \propto \exp(M_{zw_{di}}^{\alpha} + M_{zd}^{\beta} - b_z),~~~\forall z,\label{equ.cgs snn inference generalization}
\end{align}
}%
which can be implemented by the activation method Alg.~\ref{alg.E-step}.
Because $M_{zw}$ and $M_{zd}$ have one-to-one correspondence with the counts, the locality property of CGS is inherited.
The SNN implementation of CGS is called SpikeCGS and is summarized in Alg.~\ref{alg.SpikeCGS}.

Using the relations between the network parameters and the counts in Tab.~\ref{tlb.Parameter relation}(b), one can easily prove that line 4 corresponds to eliminating the current token from the counts, and line 6 corresponds to updating the counts according to the resampling result, similar as a regular implementation of CGS.
Obviously, the updates in lines 4 and 6 require no parameter communication.
The positive-negative phases are similar with the construction-reconstruction phases in \cite{Neftci2014}, where they propose an event-driven CD algorithm that uses SNN dynamics to train RBM.
They argue that one can use global signals to modulate the two learning phases on NMSs.
We fill in the gap between CGS and a NMS implementation by using a SNN dynamics to resample the topic, i.e. line 5.
\begin{algorithm}[t!]
    \caption{SpikeCGS, where $\tau_1(x) = \log( \exp x -1)$ and $\tau_2(x) = \log( \exp x +1)$.}
    \begin{algorithmic}[1]
        \Require{A corpus}
        \State Initialization: Using the initialization method in CGS. And then initialize the network parameters as:
        {\small
        \begin{align}
            M_{zw}^{\alpha} &= \log(C_{w,z}+\lambda_w),\forall w,z;\nonumber\\
            M_{zd}^{\beta} &= \log(C_{z,d}+\varphi_z),\forall d,z;~~~
            b_{z} = \log(C_{\cdot,z}+\bar{\lambda}),\forall z\nonumber
        \end{align}
        }%
        \Repeat
        \State Uniformly sample a token $w := w_{di}$ from the corpus, and %Suppose it is a word $w$ from document $d$, and its last topic assignment is $z'$
                let $z'$ denote its last topic assignment.
        \State Negative phase:~neurons $x_w^{\alpha},x_{d}^{\beta}$ and $h_{z'}$ fire spikes and the connections between them are updated:
        {\small
        \begin{align}
             M_{z'w}^{\alpha} = \tau_1(M_{z'w}^{\alpha}),~~~
            M_{z'd}^{\beta} = \tau_1(M_{z'd}^{\beta}),~~~
            b_{z'} =& \tau_1(b_{z'})\nonumber
        \end{align}
        }%
        \State Resample:~$z$ = \Call{InferenceDynamics}{$w,d$}\label{alg.line.snn resample}
        \State Positive phase:%~Update the connections between $x_w^{\alpha},x_{d}^{\beta}$ and $h_{z}$:
        {\small
        \begin{align}
             M_{zw}^{\alpha} = \tau_2(M_{zw}^{\alpha}),~~~
            M_{zd}^{\beta} = \tau_2(M_{zd}^{\beta}),~~~
            b_{z} =& \tau_2(b_{z})\nonumber%\
        \end{align}
        }%
        \Until{A pre-specified number of iterations is reached.}
    \end{algorithmic}
    \label{alg.SpikeCGS}
\end{algorithm}

\subsection{Online learning SNN: MAP}\label{sec.online snn}
SpikeCGS is a SNN implementation of CGS.
It should store the whole corpus and all topic assignments, which is undesirable on a storage-limited environment.
To improve the memory efficiency, we propose online algorithms.

In this section, we propose an online SNN algorithm that does stochastic optimization for problem (\ref{equ.MAP stochastic problem}).
As it is based on optimization instead of sampling, new tools are required to formalize the algorithm.
First a probabilistic model for the SNN variables is defined.
And then a conceived optimization problem on this probabilistic model is proposed and shown to be equivalent with problem (\ref{equ.MAP stochastic problem}).
Lastly, the optimizer to the conceived problem that is suitable for SNN implementation is proposed and analyzed.

\subsubsection{Define a probabilistic model}
A probabilistic model on the SNN is defined in Def.~\ref{def.distribution}.
It describes a joint distribution on the word layer $\mathbf{x}^{\alpha}$ and hidden topic layer $\mathbf{h}$ given the document layer $\mathbf{x}^{\beta}$.
To simplify notations, we define $\zeta(\mathbf{x})\!=\!\sum_{j=1}^J\exp(x_j)$, where $J$ is the dimensionality of $\mathbf{x}$.
\begin{definition}\label{def.distribution}
A distribution of the network variables is defined as:
{\small
\begin{align}
 p(x_w^{\alpha} &= 1, h_z = 1|x_d^{\beta} = 1;\mathbf{M})\nonumber \\
 = \exp\big[&\mathbf{M}_{z\cdot}^{\alpha}\cdot\mathbf{x}^{\alpha}  +  \mathbf{M}_{z\cdot}^{\beta}\cdot\mathbf{x}^{\beta}  - A(\mathbf{M}_{z\cdot}^{\alpha}, \mathbf{M}_{\cdot d}^{\beta})\big],\label{equ.SNN ML}
\end{align}
}%
where $A(\mathbf{M}_{z\cdot}^{\alpha}, \mathbf{M}_{\cdot d}^{\beta})= \log(\zeta(\mathbf{M}_{z\cdot}^{\alpha}))+\log(\zeta(\mathbf{M}_{\cdot d}^{\beta}))$ is the log-partition function to ensure normalization.
Note that $\mathbf{x}^{\alpha},\mathbf{x}^{\beta},\mathbf{h}$ are one-hot vectors.
\end{definition}

Before formalizing the learning problem, we show that Eq.~(\ref{equ.SNN ML}) is closely related to the LDA likelihood $p(w,z|d;\boldsymbol\Phi,\boldsymbol\Theta)$.

\begin{lemma}\label{prop.re-parametrization}(Proof in Appendix~\ref{proof.re-parameterization})
If the following conditions holds,
\begin{enumerate}
\item $\zeta(\mathbf{M}_{z\cdot}^{\alpha})\!=\!1,\forall z$ and $ \zeta(\mathbf{M}_{\cdot d}^{\beta})\!=\!\kappa
,\forall d$,
where $\kappa$ is some constant;
\item the variables and parameters are related by Tab.~\ref{tlb.Parameter relation}(a,c);
\end{enumerate}
then Eq.~(\ref{equ.SNN ML}) equals the complete likelihood that a word $w$ in document $d$ is assigned the topic $z$ in LDA:
{\small
\begin{align}
p(x_w^{\alpha} \!=\! 1, h_z \!=\! 1|x_d^{\beta} \!=\! 1;\mathbf{M}) = p(w,z|d;\boldsymbol\Phi,\boldsymbol\Theta),\label{equ.equivalent}
\end{align}
}%
Moreover, the conditional distribution of the topic assignment for a token is:
{\small
\begin{align}
    p(h_z=1|\mathbf{x}^{\alpha},\mathbf{x}^{\beta};\mathbf{M})&\propto\exp(u_z),\forall z,\label{equ.topic inference}
\end{align}
}%
where $u_z = \mathbf{M}_{z\cdot}^{\alpha} \cdot \mathbf{x}^{\alpha}  + \mathbf{M}_{z\cdot}^{\beta}\cdot\mathbf{x}^{\beta}$ is the weighted sum of the input $\mathbf{x}^{\alpha},\mathbf{x}^{\beta}$.
\end{lemma}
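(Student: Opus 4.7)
The plan is to unpack Eq.~(\ref{equ.SNN ML}) under the two stated conditions and match it term by term against the LDA complete likelihood $p(w,z|d;\boldsymbol\Phi,\boldsymbol\Theta) = \theta_{dz}\phi_{zw}$. First, I would use condition~1 together with the proportionality in Tab.~\ref{tlb.Parameter relation}(c) to pin down the normalizing constants. Since $\zeta(\mathbf{M}_{z\cdot}^{\alpha})=\sum_{w}\exp(M_{zw}^{\alpha})=1$ for every $z$, the values $\exp(M_{zw}^{\alpha})$ already form a probability distribution over $w$, so the relation $\phi_{zw}\propto\exp(M_{zw}^{\alpha})$ upgrades to the equality $\phi_{zw}=\exp(M_{zw}^{\alpha})$. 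Analogously, $\zeta(\mathbf{M}_{\cdot d}^{\beta})=\kappa$ combined with $\theta_{dz}\propto\exp(M_{zd}^{\beta})$ forces $\theta_{dz}=\exp(M_{zd}^{\beta})/\kappa$.

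Next I would exploit the one-hot encoding of Tab.~\ref{tlb.Parameter relation}(a). Because $\mathbf{x}^{\alpha}$, $\mathbf{x}^{\beta}$ are indicators that select word $w$ and document $d$, the inner products in Eq.~(\ref{equ.SNN ML}) collapse to $\mathbf{M}_{z\cdot}^{\alpha}\cdot\mathbf{x}^{\alpha}=M_{zw}^{\alpha}$ and $\mathbf{M}_{z\cdot}^{\beta}\cdot\mathbf{x}^{\beta}=M_{zd}^{\beta}$, while the log-partition reduces to $A(\mathbf{M}_{z\cdot}^{\alpha},\mathbf{M}_{\cdot d}^{\beta})=\log 1+\log\kappa=\log\kappa$ by condition~1. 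Substituting back yields
\[
p(x_w^{\alpha}=1,h_z=1\mid x_d^{\beta}=1;\mathbf{M})=\frac{\exp(M_{zw}^{\alpha})\exp(M_{zd}^{\beta})}{\kappa}=\phi_{zw}\theta_{dz},
\]
which equals $p(w,z|d;\boldsymbol\Phi,\boldsymbol\Theta)$ by the LDA generative process (topic drawn from $\boldsymbol\theta_d$, word drawn from $\boldsymbol\phi_z$). This establishes Eq.~(\ref{equ.equivalent}).

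For the ``moreover'' part, I would apply Bayes' rule on the joint given by Eq.~(\ref{equ.SNN ML}):
\[
p(h_z=1\mid\mathbf{x}^{\alpha},\mathbf{x}^{\beta};\mathbf{M})=\frac{p(x_w^{\alpha}=1,h_z=1\mid x_d^{\beta}=1;\mathbf{M})}{\sum_{z'}p(x_w^{\alpha}=1,h_{z'}=1\mid x_d^{\beta}=1;\mathbf{M})}.
\]
Under condition~1 the log-partition $A$ is independent of $z$ (since $\log\zeta(\mathbf{M}_{z\cdot}^{\alpha})=0$ for all $z$ and $\log\zeta(\mathbf{M}_{\cdot d}^{\beta})$ has no $z$ dependence), so it cancels between numerator and normalizer, leaving $p(h_z=1\mid\cdot)\propto\exp(M_{zw}^{\alpha}+M_{zd}^{\beta})=\exp(u_z)$, which is Eq.~(\ref{equ.topic inference}). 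I do not expect a real obstacle: the proof is essentially substitution and bookkeeping. The only subtle point worth highlighting is that condition~1 plays a dual role — it both promotes the proportionalities in Tab.~\ref{tlb.Parameter relation}(c) to equalities and makes $A$ a $z$-independent constant, which is precisely what allows the clean softmax form in $u_z$ to emerge.
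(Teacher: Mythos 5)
Your proposal is correct and follows essentially the same route as the paper's proof: one-hot encoding collapses the inner products, condition~1 reduces the log-partition to $\log\kappa$, and the identifications $\phi_{zw}=\exp(M_{zw}^{\alpha})$, $\theta_{dz}=\exp(M_{zd}^{\beta})/\kappa$ give $\phi_{zw}\theta_{dz}$, with the posterior following because $A$ is $z$-independent. The only cosmetic difference is that you derive these identifications by promoting the proportionalities of Tab.~\ref{tlb.Parameter relation}(c) to equalities, whereas the paper writes them as equalities directly and additionally remarks that the change of variables is one-to-one and normalized.
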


One consequence of Lemma~\ref{prop.re-parametrization} is that one can use Alg.~\ref{alg.E-step} to draw samples from the posterior Eq.~(\ref{equ.topic inference}).

\subsubsection{Conceived learning problem}
A constrained optimization problem is defined in Def.~\ref{def.SpikeLDA} to fit the model Eq.~(\ref{equ.SNN ML}) to streaming tokens $(w,d)$ from an unknown environment distribution $\pi(w,d)$.
And then we show this conceived problem is equivalent with the LDA problem (\ref{equ.MAP stochastic problem}) in Lemma~\ref{prop.MAP equivalence}.

\begin{definition}\label{def.SpikeLDA}% (Proof in Appendix \ref{proof.MAP equivalence})
A SpikeLDA problem is defined as:
{\small
\begin{align}
    \max_{\mathbf{M}}&~~\mathbb{E}_{\pi(w,d)} \! \big[\log \! p(x_w^{\alpha}\!=\!1|x_d^{\beta}\!=\!1;\mathbf{M}) %\nonumber\\ \qquad
    \!+\! \frac{1}{N_d} \! \log p(\mathbf{M}^{\beta}_{\cdot d};\boldsymbol\lambda)\big],\nonumber\\
    \text{s.t. }&\zeta(\mathbf{M}_{z\cdot}^{\alpha}) = 1,\forall z,\quad \zeta(\mathbf{M}_{\cdot d}^{\beta}) = \kappa,\forall d,\label{equ.MAP SNN problem}
    \end{align}
}%
where $p(x_w^{\alpha}=1|x_d^{\beta}=1;\mathbf{M})$ is the marginal distribution with the latent topic layer variable $h_z$ summed out.
The prior is defined as:
{\small
\begin{align}
    p(\mathbf{M}^{\beta}_{\cdot d};\boldsymbol\lambda)&=\prod_{z=1}^K p(M_{zd}^{\beta}; \lambda_z),\forall d,\nonumber\\
    p(\exp(M_{zd}^{\beta});\lambda_z)&=\text{Gamma}((\lambda_z - 1),1),\forall d,z,
\end{align}
}%
and Gamma$(\cdot,\cdot)$ is the Gamma distribution.
\end{definition}

The objective function consists of the marginal likelihood and the prior defined for the network connections $\mathbf{M}^{\beta}$.
The problem is defined on some normalization manifold where using independent Gamma's is equivalent with using a Dirichlet~\cite{Aitchison1986}.
Now we show the conceived learning problem is equivalent with problem (\ref{equ.MAP stochastic problem}), the MAP problem of LDA.

\begin{lemma}\label{prop.MAP equivalence} (Proof in Appendix \ref{proof.MAP equivalence})
The SpikeLDA problem, Def.~\ref{def.SpikeLDA}, is equivalent to the LDA problem (\ref{equ.MAP stochastic problem}), when $\kappa\!=\!\sum_{z=1}^K (\varphi_z\!-\!1); \forall z, \varphi_z\!\geq\!0$ and using the parameter relations in Tab.~\ref{tlb.Parameter relation}(c).
\end{lemma}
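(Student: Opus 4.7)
The plan is to verify Lemma~\ref{prop.MAP equivalence} in three stages: first set up the bijection between the SpikeLDA parameter space and the LDA parameter space, then show that the likelihood terms match via Lemma~\ref{prop.re-parametrization}, and finally show that the prior terms agree up to an additive constant via a change of variables combined with the Aitchison Gamma-to-Dirichlet representation.

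First I will unpack Tab.~\ref{tlb.Parameter relation}(c) together with the SpikeLDA constraints. The constraint $\zeta(\mathbf{M}_{z\cdot}^{\alpha})=1$ forces the proportionality constant in $\phi_{zw}\propto\exp M_{zw}^{\alpha}$ to be $1$, so each row of $\mathbf{M}^{\alpha}$ is in bijection with a topic-word distribution $\boldsymbol\phi_k$ on the $V$-simplex. Likewise $\zeta(\mathbf{M}_{\cdot d}^{\beta})=\kappa$ gives $\theta_{dz}=\exp(M_{zd}^{\beta})/\kappa$, a bijection between feasible columns of $\mathbf{M}^{\beta}$ and simplex points $\boldsymbol\theta_d$. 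With this established, Lemma~\ref{prop.re-parametrization} applies directly, since its two hypotheses are exactly the SpikeLDA feasibility conditions and Tab.~\ref{tlb.Parameter relation}(c). Marginalizing the one-hot latent layer in the identity $p(x_w^{\alpha}=1,h_z=1|x_d^{\beta}=1;\mathbf{M})=p(w,z|d;\boldsymbol\Phi,\boldsymbol\Theta)$ yields $p(x_w^{\alpha}=1|x_d^{\beta}=1;\mathbf{M})=p(w|d;\boldsymbol\Phi,\boldsymbol\Theta)$, so the first summand of the SpikeLDA objective equals the log-likelihood term in problem~(\ref{equ.MAP stochastic problem}) token-by-token and hence in expectation under $\pi(w,d)$.

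Next I will handle the prior term. Set $y_z:=\exp(M_{zd}^{\beta})=\kappa\,\theta_{dz}$ and apply the $M\!\mapsto\!y$ change of variables with Jacobian $e^{M_{zd}^{\beta}}=y_z$:
\begin{align*}
p(M_{zd}^{\beta};\lambda_z) \;=\; \mathrm{Gamma}(y_z;\lambda_z-1,1)\,y_z \;=\; \frac{y_z^{\lambda_z-1}e^{-y_z}}{\Gamma(\lambda_z-1)}.
\end{align*}
The extra $y_z$ Jacobian is precisely what promotes the Gamma shape from $\lambda_z-1$ to $\lambda_z$ inside the exponent, and this is where the condition $\kappa=\sum_z(\lambda_z-1)$ (the statement's ``$\varphi_z$'' should read ``$\lambda_z$'') earns its keep: it places the constraint surface $\sum_z y_z=\kappa$ at the mean of the product of Gammas so that the Aitchison result applies cleanly. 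Summing over $z$ and substituting $y_z=\kappa\theta_{dz}$ gives
\begin{align*}
\log p(\mathbf{M}_{\cdot d}^{\beta};\boldsymbol\lambda)
\;=\;\sum_{z=1}^{K}(\lambda_z-1)\log\theta_{dz}\;+\;C(\boldsymbol\lambda,\kappa),
\end{align*}
where $C(\boldsymbol\lambda,\kappa)$ is independent of $\mathbf{M}$. The first term is $\log p(\boldsymbol\theta_d|\boldsymbol\lambda)$ up to a constant (the Dirichlet log-normalizer), so the second summand of the SpikeLDA objective also matches the prior term of (\ref{equ.MAP stochastic problem}) up to an additive constant that vanishes under the $\max$.

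Combining the two matches and the bijection of feasible sets shows the SpikeLDA objective and the MAP objective differ by an $\mathbf{M}$-independent constant on the feasible region, hence have the same maximizers under the correspondence in Tab.~\ref{tlb.Parameter relation}(c). The main technical care is in the prior step: the Jacobian, the restriction to the normalization manifold, and Aitchison's representation must be threaded together so that the Gamma product on an ambient space really collapses to a Dirichlet density on the simplex. Once that step is executed cleanly, everything else follows from Lemma~\ref{prop.re-parametrization} and elementary marginalization.
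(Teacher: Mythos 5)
Your proposal is correct and follows essentially the same route as the paper's own proof: invoke Lemma~\ref{prop.re-parametrization} on the constraint manifold to match the likelihood terms, then show the product-of-Gammas prior on $\mathbf{M}^{\beta}_{\cdot d}$ (with the $e^{M}$ Jacobian and the constraint $\sum_z e^{M_{zd}^{\beta}}=\kappa$) reduces to the Dirichlet log-prior on $\boldsymbol\theta_d$ up to an additive constant, and conclude via the one-to-one change of variables. You are also right that the $\varphi_z$ in the lemma statement is a typo for $\lambda_z$, consistent with the appendix.
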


In a related work, Nessler et al.~\shortcite{Nessler2013} conceive a constrained optimization problem on their SNN, which is equivalent with the online MLE of multinomial mixture (MM), and we denote it as the SpikeMM problem.
It is novel for SpikeLDA to extend from $D=1$ to $D>1$ and incorporate the prior by modifying the constraints and the objective simultaneously.
The idea of independent Gamma prior is critical for designing the optimizer suitable for NMS implementation, i.e., without intensive parameter communication.

\subsubsection{Optimization method}
A traditional way to train latent variable models in the online setting is the stochastic gradient EM~\cite{Cappe2009}.
But to compute the gradient for the topic models requires intensive parameter communication~\cite{Patterson2013}, which is inefficient for NMS implementation.
Alternatively, we devise new stochastic optimization methods based on "mean-limit" ordinary differential equation (ML-ODE).
Here we briefly introduce how this method works in general.

Suppose a stochastic parameter update rule
{\small
\begin{align}
    \mathbf{M}(t+1)\leftarrow\mathbf{M}(t) + \eta_t g(\mathbf{M}(t)),\label{equ.stochastic ODE}
\end{align}
}%
where $t$ denotes the discrete time, $g(\mathbf{M})$ is a noisy update direction, and $\eta_t$ is the step size.
The ML-ODE of this update rule is defined as
{\small
\begin{align}
    \text{ML-ODE:}~~~\frac{d}{ds}\mathbf{M}(s) = \mathbb{E}\big[g\big(\mathbf{M}(s)\big)\big],\label{equ.continuous ODE}
\end{align}
}%
where $s$ denotes the continuous time.
If all trajectories of the ML-ODE converge and the set of stable convergence points is the same as the set of local optima of an optimization problem, then we say that this ML-ODE solves the problem.
Moreover, if the stepsizes in Eq.~(\ref{equ.stochastic ODE}) satisfy the Robin-Monro condition $\sum_{t=1}^{\infty}\eta_t\!=\!\infty,\sum_{t=1}^{\infty}\eta_t^2\!<\!\infty$,
Kushner and Yin~\shortcite{Kushner2003} show that all sequences $\{\mathbf{M}(t)\}_{t=1}^{\infty}$ converge and the stochastic update rule Eq.~(\ref{equ.stochastic ODE}) can solve the optimization problem as well.
For instance, stochastic gradient EM~\cite{Cappe2009} is a special case where $g(\cdot)$ is chosen to be the gradient.

A stochastic parameter update rule is helpful when one can only deal with samples from some distributions to approximate the expectation in Eq.~(\ref{equ.continuous ODE}), such as when dealing with the unknown environment $\pi(w,d)$ and when using SNN dynamics Alg.~\ref{alg.E-step} to stochastically infer the topics.

The proposed online optimization algorithm is summarized in Alg.~\ref{alg.online algorithm}.
At each iteration (line~\ref{alg.repeate}), a single token is sampled (line~\ref{alg.alg3 token}) and a latent topic assignment is sampled (line~\ref{alg.alg3 resample}).
Once the sampling is finished, the corresponding synaptic weights are updated (line~\ref{alg.learning step}).
In the following, we prove that the algorithm solves the SpikeLDA problem~(\ref{equ.MAP SNN problem}).

\begin{algorithm}[t!]
    \caption{The online ed-SpikeLDA algorithm}
    \begin{algorithmic}[1]
        \Require{A corpus}
        \Require{The step sizes $\eta_t$ obey $\sum_t\eta_t=\infty,\sum_t\eta_t^2<\infty$}
        \State Randomly initialize $\mathbf{M}$.
        \Repeat\label{alg.repeate}
        \State Uniformly sample a token $w \triangleq w_{di}$ from the corpus\label{alg.alg3 token}
        \State Inference:~$z$ = \Call{InferenceDynamics}{$w,d$}\label{alg.alg3 resample}
        \State Learning:%~Update the connections between $x_w^{\alpha},x_{d}^{\beta}$ and $h_{z}$:
        \begin{align}
             M_{zw}^{\alpha} &\leftarrow M_{zw}^{\alpha} + \eta_t h_z\left.(x_w^{\alpha}\exp(-M_{zw}^{\alpha})-1\right.),\forall w,z,\nonumber\\
             M_{zd}^{\beta} &\leftarrow M_{zd}^{\beta} + \eta_t x_d^{\beta}\big[( h_z+\frac{\lambda_z - 1}{N_d})\exp(-M_{zd}^{\beta})\nonumber\\&-\frac{1}{\kappa}-\frac{1}{N_d}\big],\forall d,z.\label{equ.MAP discrete dynamics}
        \end{align}
        \label{alg.learning step}
        \Until{A pre-specified number of iterations is reached.}
    \end{algorithmic}\label{alg.online algorithm}
\end{algorithm}

\begin{theorem}\label{prop.MAP stochastic update} (Proof in Appendix \ref{proof.ML dynamics})
%(Proof in Appendix \ref{proof.MAP discrete}.)
In Alg.~\ref{alg.online algorithm}
the ML-ODE of the update rule Eq.~(\ref{equ.MAP discrete dynamics}) solve the SpikeLDA problem (\ref{equ.MAP SNN problem}).
So does the stochastic update rule Eq.~(\ref{equ.MAP discrete dynamics}). This algorithm is called the SpikeLDA algorithm.
\end{theorem}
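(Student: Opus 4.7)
The plan is to establish the two claims in sequence: first that the mean-limit ODE of Eq.~(\ref{equ.MAP discrete dynamics}) has its stable fixed points coincide exactly with the local optima of the SpikeLDA problem~(\ref{equ.MAP SNN problem}), and second that the stochastic iteration inherits this by the Kushner--Yin stochastic approximation theorem under the given Robbins--Monro step sizes. Lemma~\ref{prop.MAP equivalence} already ties those SpikeLDA optima back to the LDA MAP problem~(\ref{equ.MAP stochastic problem}), so establishing the ODE claim is the crux.

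First I would write down the ML-ODE explicitly by averaging the stochastic increments over the two sources of randomness: the token $(w,d)\sim\pi(w,d)$, and the topic $z$ sampled by Alg.~\ref{alg.E-step}, whose distribution by Lemma~\ref{prop.re-parametrization} is the softmax posterior $p(h_z=1\mid\mathbf{x}^\alpha,\mathbf{x}^\beta;\mathbf{M})$. Since $h_z$, $x_w^\alpha$, $x_d^\beta$ are one-hot, the expectation of the $\alpha$-update collapses into a difference between $\exp(-M_{zw}^\alpha)\sum_{d'}\pi(w,d')p(h_z{=}1\mid w,d';\mathbf{M})$ and the analogous sum marginalized over $w$, and similarly for $\beta$ but with the additional $(\lambda_z-1)/N_d$ term from the Gamma prior. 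I would then verify that the feasibility manifold $\zeta(\mathbf{M}_{z\cdot}^\alpha)=1$ and $\zeta(\mathbf{M}_{\cdot d}^\beta)=\kappa$ is invariant: differentiating $\zeta(\mathbf{M}_{z\cdot}^\alpha)$ along the ODE and substituting the constraint makes the sum telescope to zero, and a parallel computation for $\beta$ works precisely because the offsets $1/\kappa$ and $1/N_d$ inside the bracket of Eq.~(\ref{equ.MAP discrete dynamics}) were chosen to cancel the $\exp(-M_{zd}^\beta)$ contributions at the constraint value.

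Next I would match fixed points of the ML-ODE with KKT points of~(\ref{equ.MAP SNN problem}). Writing the Lagrangian with multipliers $\mu_z$ and $\nu_d$ for the two families of equality constraints and differentiating with respect to $M_{zw}^\alpha$ and $M_{zd}^\beta$, I would marginalize $h_z$ out of Eq.~(\ref{equ.SNN ML}) to express $\partial_{M_{zw}^\alpha}\log p(x_w^\alpha{=}1\mid x_d^\beta{=}1;\mathbf{M})$ as a softmax responsibility and show algebraically that setting the ODE right-hand sides to zero and eliminating the multipliers using the normalization constraint yields the same equation system. This identifies fixed points on the manifold with KKT points, and via Lemma~\ref{prop.MAP equivalence} with stationary points of~(\ref{equ.MAP stochastic problem}).

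The main obstacle, as with any latent-variable learning scheme, is upgrading a fixed-point characterization to actual convergence of trajectories, given that the objective is non-convex. I would use the SpikeLDA objective itself as a Lyapunov function; the cleanest route is to write $\log p(x^\alpha\mid x^\beta;\mathbf{M})=\max_q\{\mathbb{E}_q[\log p(x^\alpha,h\mid x^\beta;\mathbf{M})]+H(q)\}$ so that the ODE decouples into an implicit ``E-step'' that realizes the inner maximum via the posterior of Lemma~\ref{prop.re-parametrization} and an ``M-step'' whose continuous-time monotonicity is a direct exponential-family calculation (the gradient of the expected complete-data log-likelihood, restricted to the manifold, agrees with the ML-ODE direction up to a positive definite rescaling by $\exp(M)$). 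LaSalle's invariance principle then promotes monotonicity to convergence to the set of KKT points. Finally, the stochastic statement follows from Kushner and Yin~\shortcite{Kushner2003}: the increments are bounded on the compact constraint manifold (being products of indicators and bounded functions of $\exp(-M)$), and the noise is a martingale difference with bounded conditional variance, so under $\sum_t\eta_t=\infty$, $\sum_t\eta_t^2<\infty$ the iterates almost surely track the ML-ODE and converge to the same limit set.
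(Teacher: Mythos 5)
Your overall architecture matches the paper's: write the ML-ODE by averaging over $\pi(w,d)$ and the softmax posterior of Lemma~\ref{prop.re-parametrization}, identify its stationary points on the normalization manifold with critical points of problem~(\ref{equ.MAP SNN problem}) by showing the ODE direction is a positive-diagonal ($\exp(M)$) rescaling of the objective gradient, use the objective as a Lyapunov function there, and finish with Kushner--Yin under the Robbins--Monro condition. The genuine gap is your treatment of the constraints: you only verify that the feasible manifold is \emph{invariant}, by differentiating $\zeta(\mathbf{M}^{\alpha}_{z\cdot})$ and $\zeta(\mathbf{M}^{\beta}_{\cdot d})$ along the ODE and then substituting the constraint values. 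But Alg.~\ref{alg.online algorithm} initializes $\mathbf{M}$ randomly, so trajectories generically start off the manifold, and the theorem (via the definition of an ODE ``solving'' a problem, which quantifies over all trajectories) requires convergence from such initializations. The paper's proof devotes its entire first phase to this point: carrying out the same expectation computation \emph{without} substituting the constraint yields closed linear decay equations, $\frac{d}{ds}\zeta_z^{\alpha}(\mathbf{M}(s)) = -\zeta_z^{\alpha}(\mathbf{M}(s))\,\mathbb{E}_{\hat\pi}[\mathbb{I}(z=z^*)]$ and $\frac{d}{ds}\zeta_d^{\beta}(\mathbf{M}(s)) = -\frac{N_d+\kappa}{\kappa N_d}\pi(d)\,\zeta_d^{\beta}(\mathbf{M}(s))$ (this is exactly where the $\frac{1}{\kappa}+\frac{1}{N_d}$ offsets and $\kappa=\sum_z(\lambda_z-1)$ earn their keep), so the manifold is globally attracting, trajectories that reach it never leave, and in addition every stationary point of the ODE already satisfies the constraints ($S(C)\subseteq H$). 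Your invariance check delivers none of these facts, and since your Lyapunov/LaSalle argument is explicitly conditioned on $\mathbf{M}(s)$ lying on the manifold, the convergence claim for the algorithm as stated is not established.

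Two smaller points. Your Lagrangian/KKT route to matching fixed points with constrained critical points is a workable alternative to the paper's direct identification $\partial_{M}\mathbb{L} = \exp(M)\,g(M)$ (the natural-gradient view), and your EM/ELBO decomposition together with Fisher's identity reproduces the same posterior-responsibility calculation the paper performs, so those steps are sound in spirit. However, your justification of the stochastic-approximation step is incorrect as written: the constraint manifold is not compact (coordinates $M_{zw}^{\alpha}$ may tend to $-\infty$ while $\sum_w e^{M_{zw}^{\alpha}}=1$), the factor $\exp(-M_{zw}^{\alpha})$ appearing in the increments is therefore unbounded on it, and the stochastic iterates are not confined to the manifold in any case; the paper simply invokes Kushner and Yin under $\sum_t\eta_t=\infty$, $\sum_t\eta_t^2<\infty$ without the compactness/boundedness claim, and you should drop or replace that part of your argument.
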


We briefly explain how the theorem is proved and leave the details to the appendix.
A particle $\mathbf{M}$ starts at a randomly position in the parameter space.
The ML-ODE characterizes the particle's temporal dynamics in the space.
The particle undergoing our ML-ODE experience two successive phases.
In the first phase, $\mathbf{M}$ is driven monotonically\footnote{The monotonicity is defined in the proof. } to the manifold defined by the normalization constraints.
In the second phase, $\mathbf{M}$ travels on the manifold with $\frac{d}{ds}\mathbf{M}$ equals the natural gradient, until it reaches a local maximum.

\subsubsection{Remark 1: Locality}
It is obvious that the update of a synapse $M_{zd}^{\alpha}$ only locally depends on itself, its pre-synaptic neuron $x_w^{\alpha}$ and post-synaptic neuron $h_z$; and so does $M_{zd}^{\beta}$.
This locality behavior is called STDP in neuroscience and is suitable for NMS implementation.

In the related work, Nessler~\shortcite{Nessler2013} propose an optimizer to solve their SpikeMM problem.
The stochastic update rule for $\mathbf{M}^{\alpha}$ in our SpikeLDA is the same as theirs, but that for
$\mathbf{M}^{\beta}$ is a novel design.
It is inspired by the many independent structures of the Dirichlet distribution, particularly its relation with the Gamma distribution~\cite{Aitchison1986}.
The difficulty in designing the algorithm is the co-design of (1) a new problem equivalent with the original one (Lemma~\ref{prop.MAP equivalence}), and a update rule that (2) has a good local structure to implement on SNN and (3) the sequence $\{\mathbf{M}(t)\}_{t=1}^{\infty}$ converges to the manifold defined by the constrains and the local optimum simultaneously (Thm.~\ref{prop.MAP stochastic update}).

\subsubsection{Remark 2: Scalability}
In SpikeLDA, synaptic weights are updated once a latent neuron triggers a spike event.
This is called event-driven update, which has the advantage of reducing energy consumption~\cite{Merolla2014}.
However, the estimated stochastic update direction is noisy if only one latent sample is used, resulting in very slow convergence.
To develop a scalable algorithm, we replace event-driven update by delayed update.
But delayed update might introduce energy overhead to maintain the intermediate results.
We call the event-driven version as ed-SpikeLDA, and the delayed update one as du-SpikeLDA.

Specifically, du-SpikeLDA processes a mini-batch of tokens before the parameters update once.
First, it stochastically infers the topic assignments like ed-SpikeLDA, but maintains the samples until the whole mini-batch is processed.
Then the parameters are updated using the average of results from this mini-batch, resulting in
an unbiased estimate of Eq.~(\ref{equ.MAP discrete dynamics}) of less variance.

\subsubsection{Remark 3: pLSI}
pLSI is a special case of LDA.
We propose an ed-SpikePLSI algorithm as a direct extension of ed-SpikeLDA to train pLSI in the Appendix.

\subsection{Online learning SNN : ML on semi-collapsed distribution}
The SpikeLDA does point estimate in the joint space of $\{\boldsymbol\Phi,\boldsymbol\Theta\}$.
When delayed update is used, we can propose another learning algorithm where the local parameter $\boldsymbol\Theta$ is integrated out, to provide a better parameter estimation.

\subsubsection{Conceived learning problem}
This new algorithm optimizes the semi-collapsed likelihood Eq.~(\ref{equ.semi stochastic problem}).
We re-parameterize it to a constrained optimization problem on the SNN, which resembles how SpikeLDA is developed.
\begin{lemma}\label{prop.semi equivalence} (Proof in Appendix)
A semi-SpikeLDA problem is defined as:
{\small
 \begin{align}
    \max_{\mathbf{M}^{\alpha}}\mathbb{E}_{\pi(d)}&\mathbb{E}_{p(\mathbf{z}_d|\mathbf{w}_d;\mathbf{M}^{\alpha},\mathbf{M}^{\beta})}[C_{d,z,w}]\log p(x_w^{\alpha}=1|h_z=1;\mathbf{M}^{\alpha}),\nonumber\\
    &\text{s.t.}~~\zeta(\mathbf{M}_{z\cdot}^{\alpha})=1,~~~\forall z.\label{equ.semi SNN problem}
    \end{align}
}%
The semi-SpikeLDA problem is equivalent to the LDA problem (\ref{equ.semi stochastic problem}), when using the parameter relations in Tab.~\ref{tlb.Parameter relation}(d).
\end{lemma}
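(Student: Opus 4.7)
The plan is to verify the equivalence by mechanically matching (i) the inner log term in the objective and (ii) the distribution used in the inner expectation, under the parameter relations of Tab.~\ref{tlb.Parameter relation}(d) together with the normalization constraint $\zeta(\mathbf{M}^\alpha_{z\cdot})=1$.

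First I would handle the log term. By Definition~\ref{def.distribution}, the joint $p(x^\alpha_w=1,h_z=1|x^\beta_d=1;\mathbf{M})$ factorizes with log-partition $A(\mathbf{M}^\alpha_{z\cdot},\mathbf{M}^\beta_{\cdot d})=\log\zeta(\mathbf{M}^\alpha_{z\cdot})+\log\zeta(\mathbf{M}^\beta_{\cdot d})$, so that
\begin{align*}
p(x^\alpha_w=1\mid h_z=1;\mathbf{M}^\alpha)=\frac{\exp(M^\alpha_{zw})}{\zeta(\mathbf{M}^\alpha_{z\cdot})}.
\end{align*}
Imposing the constraint $\zeta(\mathbf{M}^\alpha_{z\cdot})=1$ and applying Tab.~\ref{tlb.Parameter relation}(d), namely $\phi_{zw}\propto\exp(M^\alpha_{zw})$, the normalizer of the proportionality is $\zeta(\mathbf{M}^\alpha_{z\cdot})=1$, so $\phi_{zw}=\exp(M^\alpha_{zw})$ and therefore $\log p(x^\alpha_w=1\mid h_z=1;\mathbf{M}^\alpha)=\log p(w\mid z;\boldsymbol\Phi)$.

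Next I would identify the two semi-collapsed posteriors. On the SNN side, the posterior $p(\mathbf{z}_d|\mathbf{w}_d;\mathbf{M}^\alpha,\mathbf{M}^\beta)$ is characterized by its full conditionals; using Lemma~\ref{prop.re-parametrization} (or re-deriving from Def.~\ref{def.distribution}) the conditional for a single token is $p(h_z=1|\mathbf{x}^\alpha,\mathbf{x}^\beta;\mathbf{M})\propto\exp(M^\alpha_{zw}+M^\beta_{zd})$. Substituting Tab.~\ref{tlb.Parameter relation}(d) gives $\phi_{zw}\cdot(C^{\neg di}_{z,d}+\lambda_z)$, which is exactly the semi-CGS update rule~(\ref{equ.semi cgs update rule}) for the LDA semi-collapsed posterior $p(\mathbf{z}_d|\mathbf{w}_d;\boldsymbol\Phi,\boldsymbol\lambda)$. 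Since the local conditionals coincide and both targets are properly normalized over the same discrete state space, the two joint posteriors are identical. Consequently, for every $(d,z,w)$ the inner expected counts $\mathbb{E}_{p(\mathbf{z}_d|\mathbf{w}_d;\mathbf{M}^\alpha,\mathbf{M}^\beta)}[C_{d,z,w}]$ agree with $\mathbb{E}_{p(\mathbf{z}_d|\mathbf{w}_d;\boldsymbol\Phi,\boldsymbol\lambda)}[C_{d,z,w}]$.

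Combining the two pieces, the semi-SpikeLDA objective~(\ref{equ.semi SNN problem}) coincides term-by-term with the LDA objective~(\ref{equ.semi stochastic problem}), and the feasible sets match: the constraint $\zeta(\mathbf{M}^\alpha_{z\cdot})=1$ is precisely the simplex constraint on each $\boldsymbol\phi_k$ under the bijection $\phi_{zw}=\exp(M^\alpha_{zw})$. Hence the two optimization problems are equivalent. The main obstacle will be making the posterior-matching argument airtight: one has to justify that using the count-based re-parameterization $C^{\neg di}_{z,d}+\lambda_z=\exp(M^\beta_{zd})$ consistently across all conditionals actually defines a valid Gibbs sampler whose stationary distribution is the LDA semi-collapsed posterior, rather than a different joint obtained by naive plug-in. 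This is handled by noting that the counts $C_{z,d}$ are deterministic functions of $\mathbf{z}_d$, so each conditional in the SNN Gibbs chain equals the corresponding LDA conditional on the same configuration, and the agreement of all conditionals on a connected, finite state space pins down a unique joint distribution.
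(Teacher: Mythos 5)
Your proof is correct and follows essentially the same route the paper takes for its analogous equivalence results (Lemmas~\ref{prop.ML equivalence} and~\ref{prop.MAP equivalence}): identify $\log p(x_w^{\alpha}=1|h_z=1;\mathbf{M}^{\alpha})$ with $\log\phi_{zw}$ under the constraint $\zeta(\mathbf{M}^{\alpha}_{z\cdot})=1$ and Tab.~\ref{tlb.Parameter relation}(d), match the semi-collapsed posterior with the semi-CGS conditionals of Eq.~(\ref{equ.semi cgs update rule}), and conclude by a one-to-one change of variables. Note that, despite the pointer, the appendix never spells out a dedicated proof of this particular lemma, so your full-conditional argument that the count-based re-parameterization reproduces the LDA semi-collapsed posterior supplies a detail the paper leaves implicit rather than conflicting with it.
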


At each iteration, this SNN algorithm subsamples a mini-batch $\hat{D}$ of documents, performs semi-CGS on the topic assignments, and then optimizes the global parameters.
The implementation of semi-CGS mimics SpikeCGS and the implementation of optimization mimics SpikeLDA.
So the new algorithm is a hybrid of these two.
We outline how the semi-CGS and the optimization are implemented on SNN below, and leave the complete algorithm Alg.~\ref{alg.semiSpikeLDA} in the Appendix.
\subsubsection{semi-CGS}
At each iteration, suppose a token $w_{di}$ is sampled and its topic assignment should be resampled from Eq.~(\ref{equ.semi cgs update rule}).
If we represent the word $w_{di}$, its document $d$ and its topic assignment $z$ using the network variables via Tab.~\ref{tlb.Parameter relation}(a), the conditional distribution to sample from is
{\small
\begin{align*}
p(h_z\!=\!1|\cdot)
           \propto \phi_{zw} \cdot (C_{z,d}^{\neg di} + \lambda_z),~~~\forall z
\end{align*}
}%
on the SNN.
Furthermore, if the network parameters relate to the parameters and counts via Tab.~\ref{tlb.Parameter relation}(d), the conditional distribution becomes
{\small
\begin{align}
    p(h_z=1|\cdot) \propto \exp(M_{zw_{di}}^{\alpha} + M_{zd}^{\beta}),~~~\forall z.\label{equ.cgs snn inference generalization}
\end{align}
}%
Then the SNN implementation of semi-CGS is similar with SpikeCGS by using Alg.~\ref{alg.E-step} to sample the topic assignment, and the negative and positive phases to eliminate and update the counts represented by $\mathbf{M}^{\beta}$.
During this procedure, we collect some statistics: $\hat{N}_{z,w}$ denotes the number of times a word $w$ is assigned topic $z$ within $T$ iterations on the mini-batch $\hat{D}$, and $\hat{N}_{z}\triangleq\sum_{w=1}^V\hat{N}_{z,w}$.

\subsubsection{Optimization}
After collecting the statistics, $\mathbf{M}^{\alpha}$ is updated as
{\small
 \begin{align}
M_{zw}^{\alpha}\!\leftarrow M_{zw}^{\alpha}\!+\!\eta_t \frac{1}{|\hat{D}|T}\Big[\hat{N}_{z,w}\exp(-M_{zw}^{\alpha})\!-\!\hat{N}_{z}\Big],\forall w,z,\label{equ.semi-likelihood discrete dynamics}
    \end{align}
}%
It is the same as Eq.~(\ref{equ.MAP discrete dynamics}) of SpikeLDA, except that the former uses an empirical average over samples (delayed update) and the latter uses only one sample (event-driven).

The following theorem shows that the semi-SpikeLDA algorithm solves the semi-SpikeLDA problem (\ref{equ.semi SNN problem}).
The main idea of the proof is the same as Thm.~\ref{prop.MAP stochastic update}.

\begin{theorem}\label{prop.semi stochastic update} (Proof in Appendix \ref{proof.ML dynamics})
In Alg.~\ref{alg.semiSpikeLDA}, the ML-ODE of the update rule Eq.~(\ref{equ.semi-likelihood discrete dynamics}) solves the semi-SpikeLDA problem (\ref{equ.semi SNN problem}).
So does the stochastic update rule Eq.~(\ref{equ.semi-likelihood discrete dynamics}).
This algorithm is called the semi-SpikeLDA algorithm.
\end{theorem}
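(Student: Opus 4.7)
The plan is to mirror the two-phase ML-ODE argument used for Thm.~\ref{prop.MAP stochastic update}, adapted to the simpler semi-collapsed case where only $\mathbf{M}^{\alpha}$ is being optimized while $\mathbf{M}^{\beta}$ is maintained by the embedded semi-CGS loop. First I would take the mean limit of Eq.~(\ref{equ.semi-likelihood discrete dynamics}). Conditioning on the current value of $\mathbf{M}$, the inner semi-CGS draws tokens uniformly from the subsampled documents $\hat{D}$ and, by Lemma~\ref{prop.re-parametrization} applied to Tab.~\ref{tlb.Parameter relation}(d), resamples topic assignments from exactly $p(z_{di}|\mathbf{z}_d^{\neg di},\mathbf{w}_d;\mathbf{M}^{\alpha},\mathbf{M}^{\beta})$. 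Writing $a_{z,w}(\mathbf{M})\triangleq\mathbb{E}[\hat{N}_{z,w}]/(|\hat{D}|T)$ and $a_z(\mathbf{M})\triangleq\sum_w a_{z,w}(\mathbf{M})$, the ML-ODE of the update is
\begin{align*}
\frac{d}{ds}M_{zw}^{\alpha}=a_{z,w}(\mathbf{M})\exp(-M_{zw}^{\alpha})-a_z(\mathbf{M}),\qquad\forall z,w.
\end{align*}
The key identification, to be made precise, is that $a_{z,w}(\mathbf{M})\propto\mathbb{E}_{\pi(d)}\mathbb{E}_{p(\mathbf{z}_d|\mathbf{w}_d;\mathbf{M}^{\alpha},\mathbf{M}^{\beta})}[C_{d,z,w}]/N_d$, matching the coefficient appearing in the objective of (\ref{equ.semi SNN problem}).

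Next I would carry out the two-phase analysis. For phase~1, differentiate the partition $\zeta(\mathbf{M}_{z\cdot}^{\alpha})=\sum_{w}\exp(M_{zw}^{\alpha})$ along the ML-ODE:
\begin{align*}
\frac{d}{ds}\zeta(\mathbf{M}_{z\cdot}^{\alpha})=\sum_{w}\exp(M_{zw}^{\alpha})\bigl[a_{z,w}\exp(-M_{zw}^{\alpha})-a_z\bigr]=a_z(\mathbf{M})\bigl(1-\zeta(\mathbf{M}_{z\cdot}^{\alpha})\bigr),
\end{align*}
so each $\zeta(\mathbf{M}_{z\cdot}^{\alpha})$ is driven monotonically toward $1$ and the trajectory enters the feasible manifold of (\ref{equ.semi SNN problem}). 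For phase~2, restrict to the manifold $\zeta(\mathbf{M}_{z\cdot}^{\alpha})=1$, i.e.\ $\exp(M_{zw}^{\alpha})=p(x_w^{\alpha}=1|h_z=1;\mathbf{M}^{\alpha})$. The update direction then becomes $a_{z,w}/\exp(M_{zw}^{\alpha})-a_z$, which I would recognize as the natural gradient of the Lagrangian of (\ref{equ.semi SNN problem}) with multiplier $a_z$ enforcing $\zeta(\mathbf{M}_{z\cdot}^{\alpha})=1$ (this is the same manifold-gradient identification used in Thm.~\ref{prop.MAP stochastic update}, now with $\mathbf{M}^{\beta}$ held fixed). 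Hence the set of stable fixed points of the ML-ODE coincides with the set of KKT points of (\ref{equ.semi SNN problem}), i.e.\ the ML-ODE solves the conceived problem.

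Finally, having the ML-ODE result, I would invoke Kushner--Yin under the Robbins--Monro step-size condition to transfer convergence from the continuous-time dynamics to the discrete stochastic update Eq.~(\ref{equ.semi-likelihood discrete dynamics}): the increments have bounded variance once $M_{zw}^{\alpha}$ stays in a compact region (guaranteed by the contractive phase~1 estimate above), and the noise has zero conditional mean by construction of the mini-batch average $\hat{N}_{z,w}/(|\hat{D}|T)$. The main obstacle I expect is justifying the mean-limit identity $\mathbb{E}[\hat{N}_{z,w}/(|\hat{D}|T)]\approx a_{z,w}(\mathbf{M})$ rigorously: because $\mathbf{M}^{\beta}$ is itself updated during semi-CGS (via positive/negative phases encoding the document counts), one must argue that within a single outer iteration the inner chain is close to its stationary distribution conditional on the frozen $\mathbf{M}^{\alpha}$, so that the collected statistic is an unbiased (or asymptotically unbiased) estimator of the posterior expectation appearing in (\ref{equ.semi SNN problem}); modulo this standard two-time-scale argument, the remainder follows the template of Thm.~\ref{prop.MAP stochastic update} essentially verbatim.
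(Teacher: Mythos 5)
Your proposal is correct and follows essentially the same route as the paper's appendix proof: take the mean-limit ODE of Eq.~(\ref{equ.semi-likelihood discrete dynamics}), show in a first phase that $\zeta(\mathbf{M}^{\alpha}_{z\cdot})$ contracts monotonically to the constraint manifold, identify the drift on the manifold as the natural (Poisson--Fisher) gradient of the objective of (\ref{equ.semi SNN problem}) with $\mathbf{M}^{\beta}$ maintained by the inner semi-CGS, and then invoke Kushner--Yin under the Robbins--Monro condition; the paper likewise treats the inner chain as sampling from a stationary $q(z|w,d)$, so your two-time-scale caveat simply makes explicit an assumption the paper leaves implicit. One small correction: since each inner sweep visits all $N_d$ positions, $\mathbb{E}[\hat{N}_{z,w}]/(|\hat{D}|T)$ equals $\mathbb{E}_{\pi(d)}\mathbb{E}_{p(\mathbf{z}_d|\mathbf{w}_d;\mathbf{M})}[C_{d,z,w}]$ with no $1/N_d$ factor --- keeping that factor would reweight documents and spoil the match with the objective of (\ref{equ.semi SNN problem}) (the $1/N_d$ belongs only to the prior term of SpikeLDA in Thm.~\ref{prop.MAP stochastic update}).
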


\section{Experiments}\label{sec.Experiment}

\begin{figure}[!t]
    \centering
    \captionsetup[subfigure]{labelformat=empty}
    \begin{subfigure}{0.15\textwidth}
        \centering
        \caption{KOS}
        \includegraphics[width=1.2\textwidth]{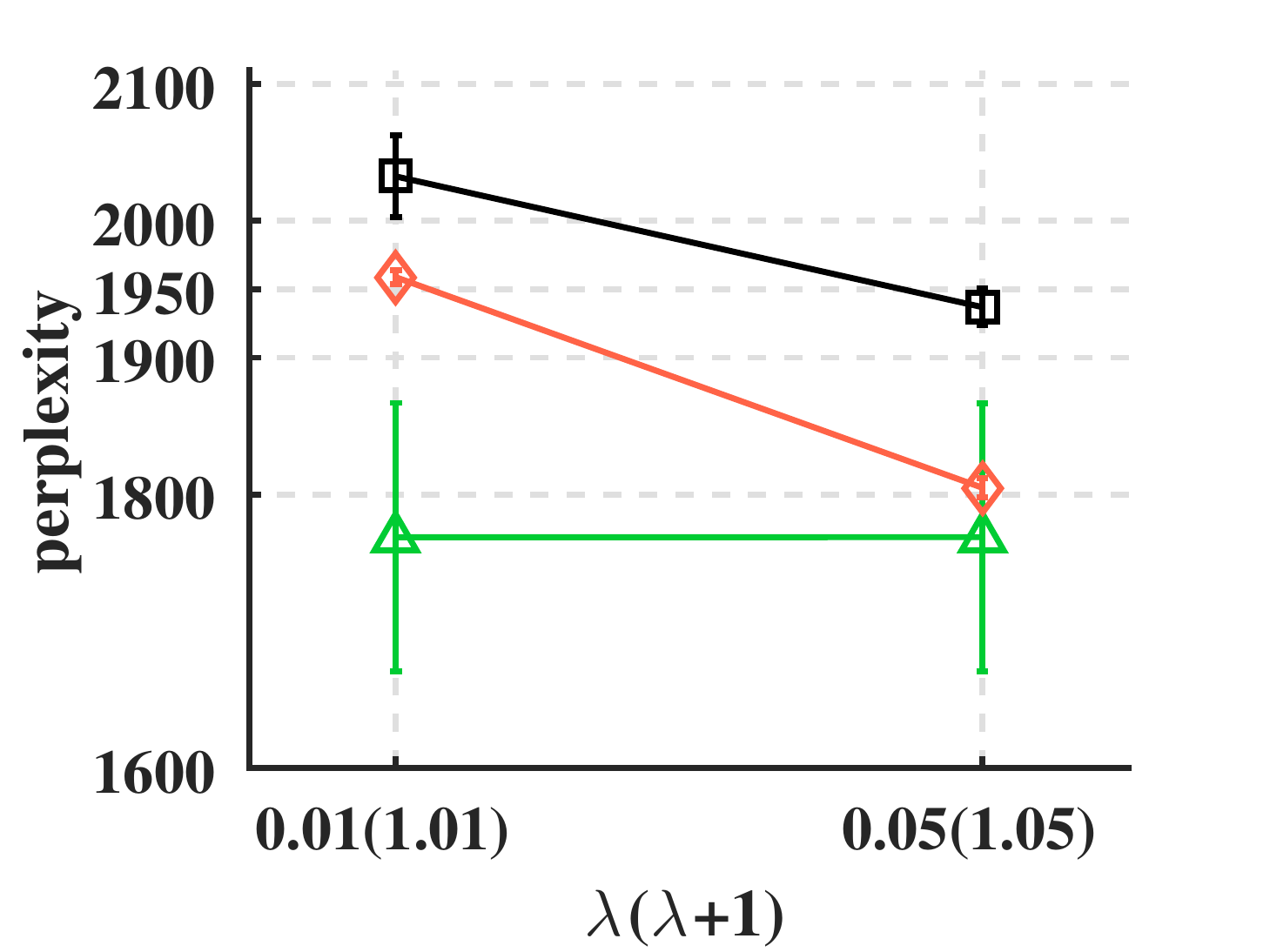}
    \end{subfigure}
    \begin{subfigure}{0.15\textwidth}
        \centering
        \caption{20NG}
        \includegraphics[width=1.2\textwidth]{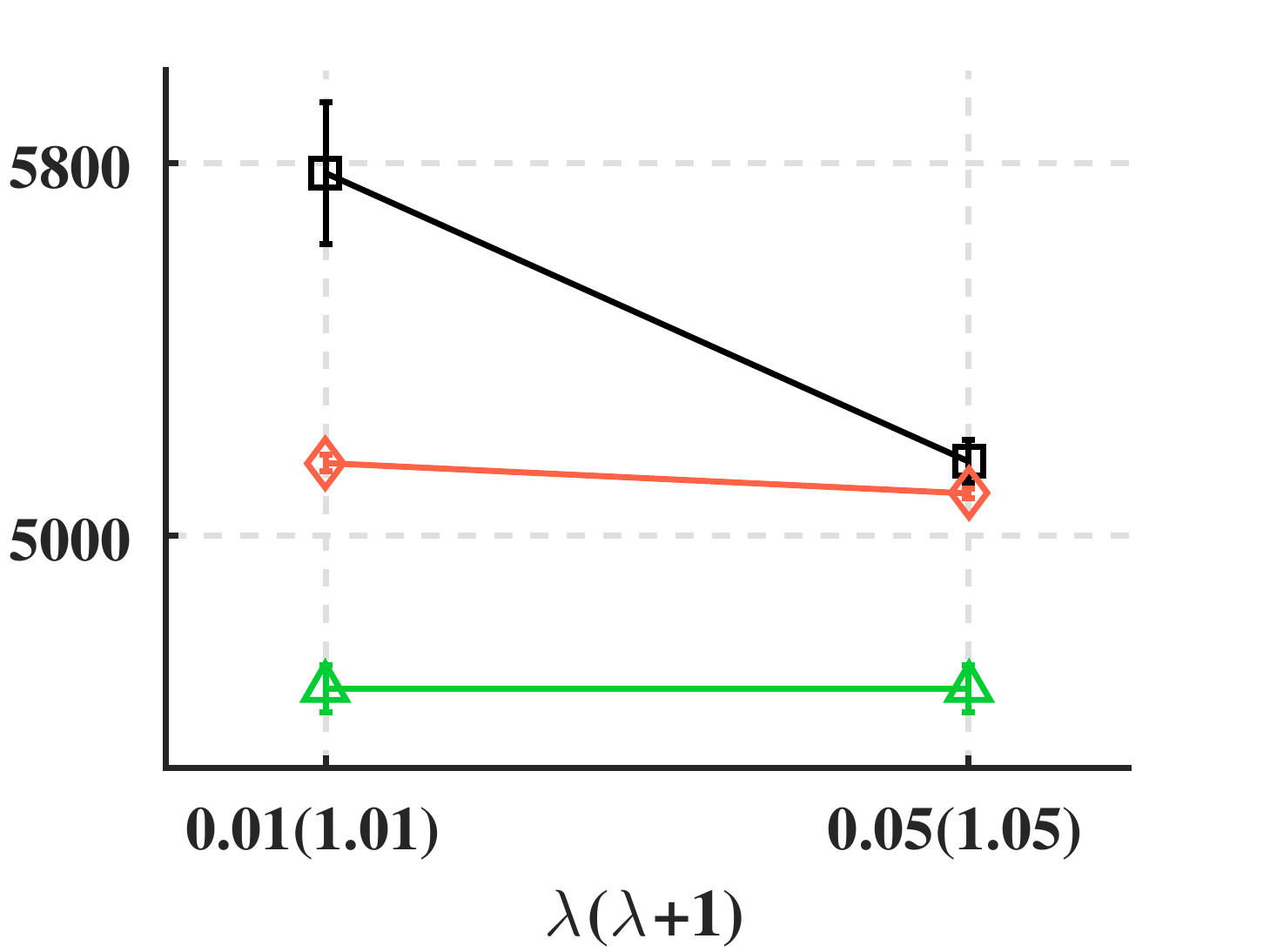}
    \end{subfigure}
     \begin{subfigure}{0.15\textwidth}
        \centering
        \caption{Enron}
        \includegraphics[width=1.2\textwidth]{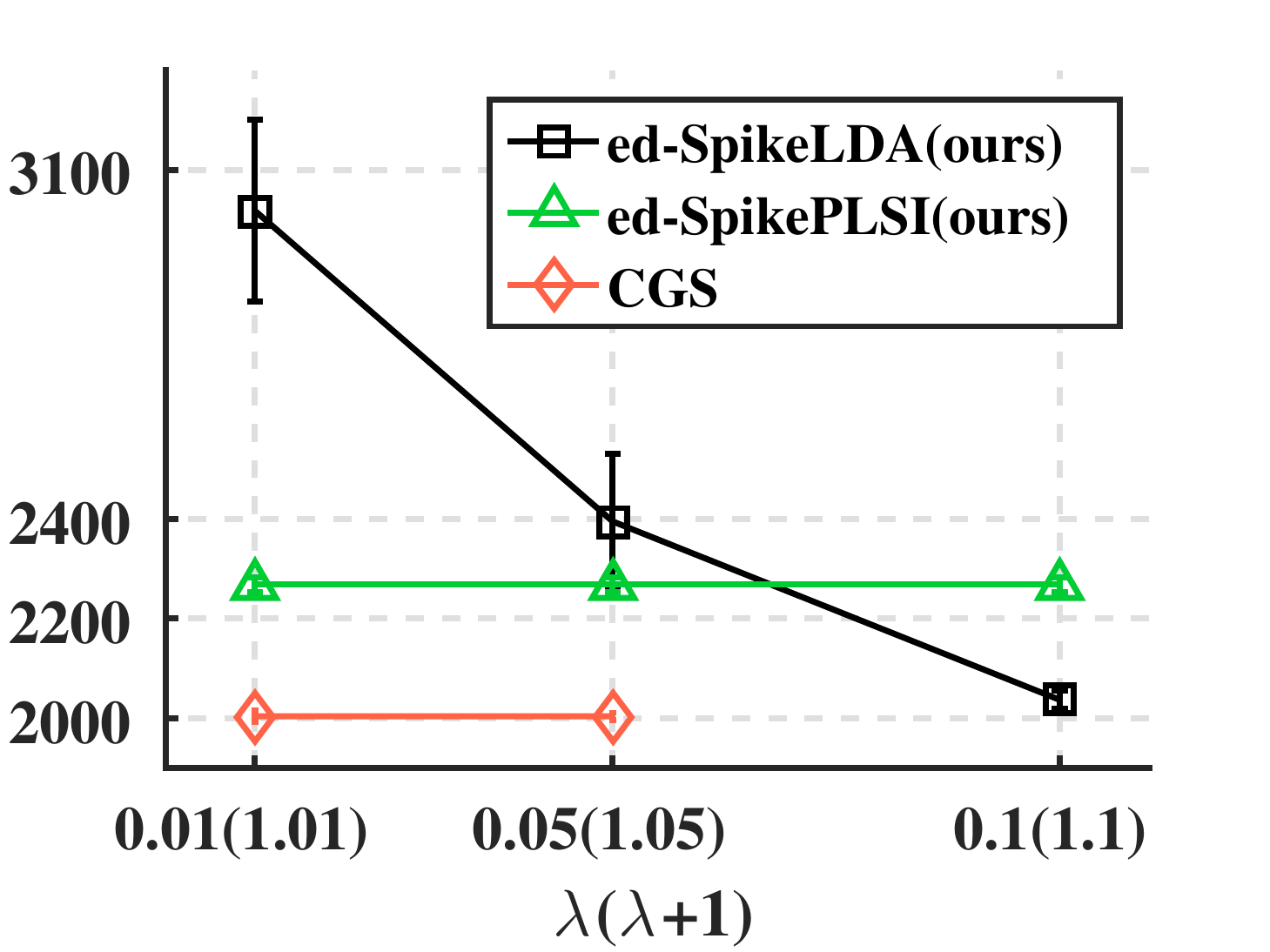}
    \end{subfigure}
    \caption{Impact of $\lambda$. Small datasets. In the x-axis, $\lambda(1+\lambda)$ means  $\lambda$ is for CGS and $\lambda+1$ is for ed-SpikeLDA; ed-SpikePLSI has no $\lambda$ to tune. $K=200$.}
    \label{fig.Final Perplexity}
\end{figure}

\begin{figure}[!t]
    \centering
    \captionsetup[subfigure]{labelformat=empty}
    \begin{subfigure}{0.23\textwidth}
        \centering
        \caption{KOS}
        \includegraphics[width=\textwidth]{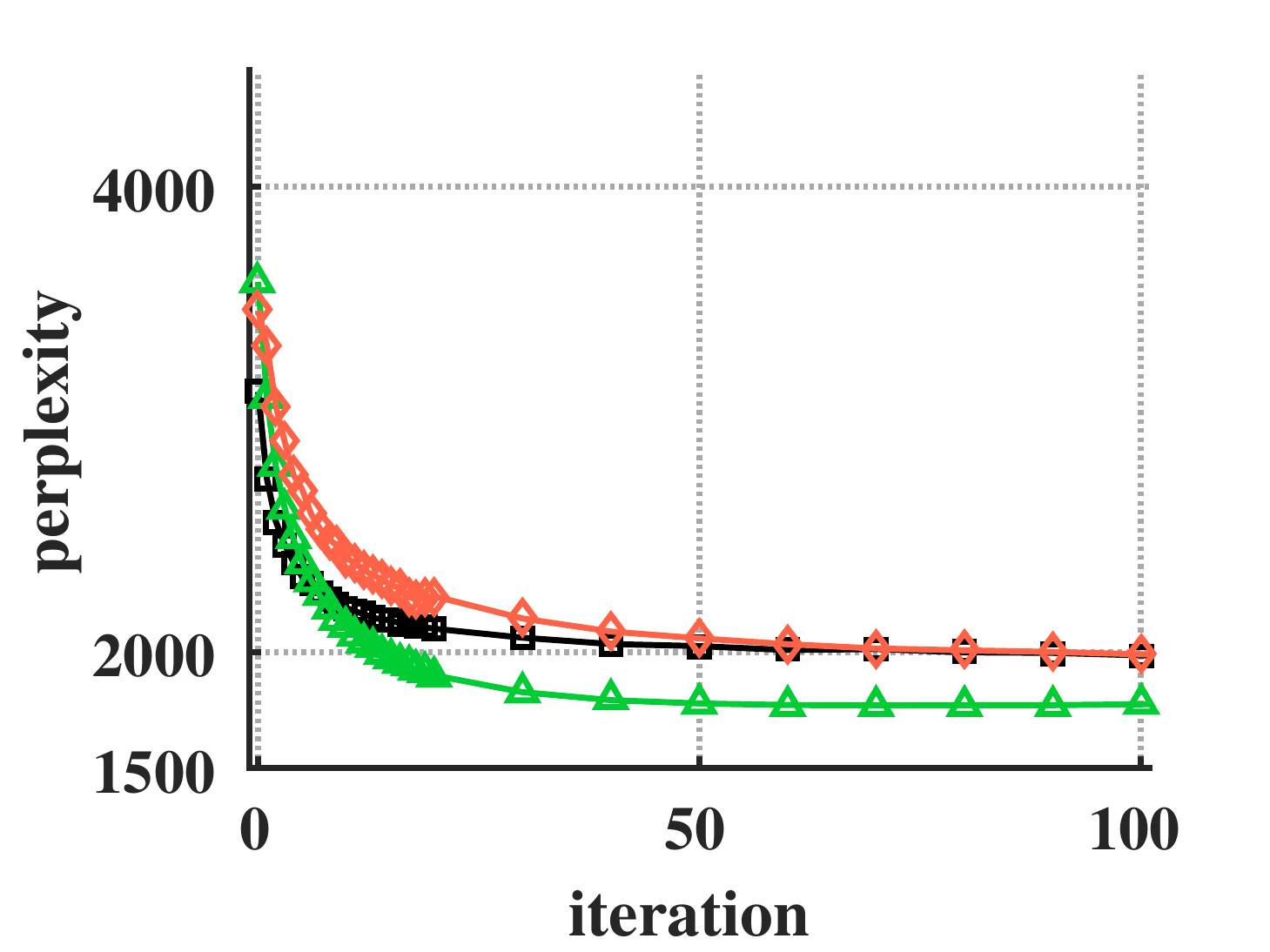}
    \end{subfigure}
    \hspace{0mm}
    \begin{subfigure}{0.23\textwidth}
        \centering
        \caption{20NG}
        \includegraphics[width=\textwidth]{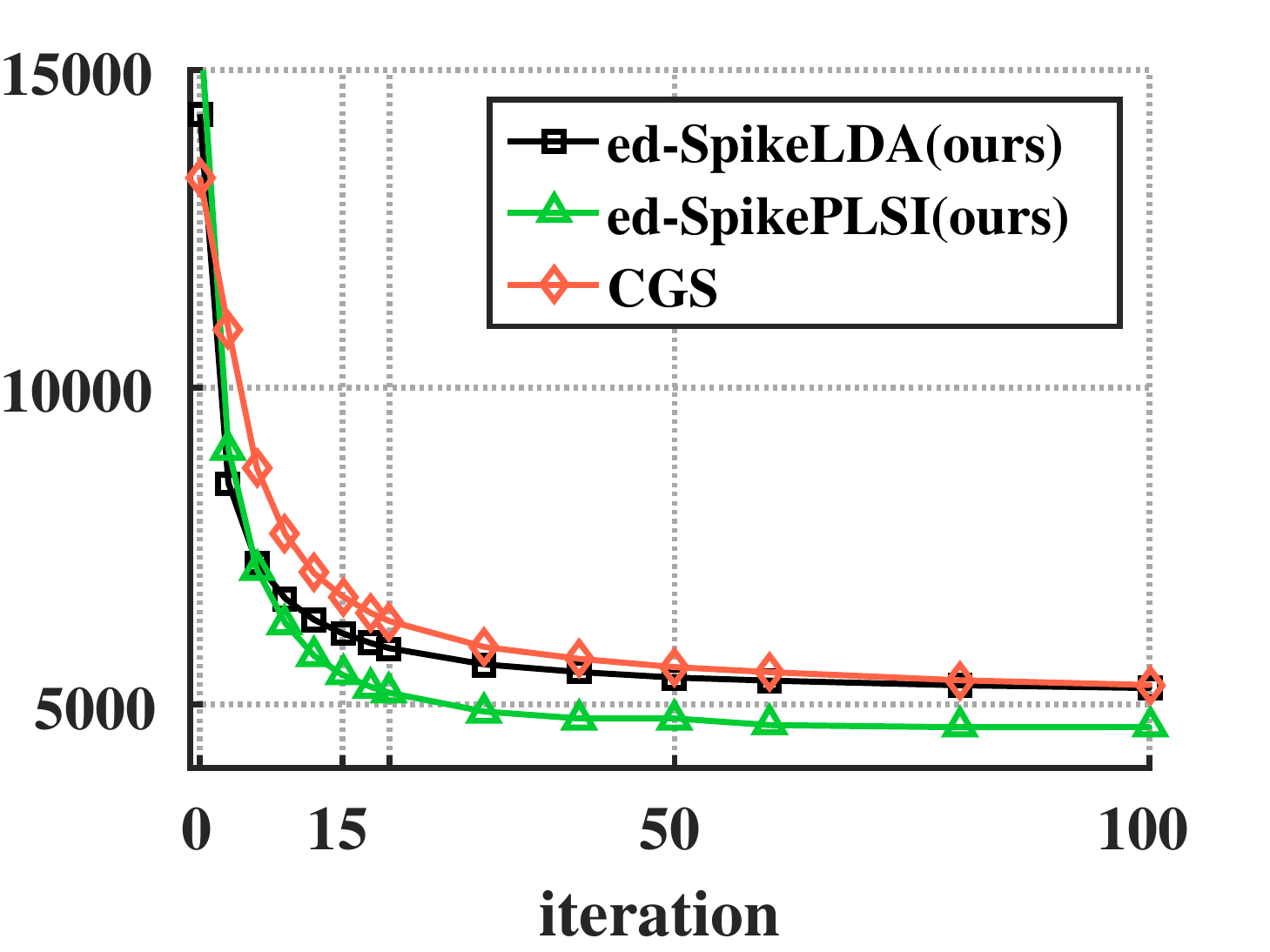}
    \end{subfigure}
    \caption{Convergence speed. Small datasets. $K=200$. Error bars are small and omitted.}
    \label{fig.Perplexity vs iteration}
\end{figure}

In the experiment, we assess (1) the generalization performance and (2) the discriminative power of the learned document representations of the proposed online SNN algorithms.
Because SpikeCGS is an implementation of CGS, we don't particularly do experiments on it.
All algorithms are simulated on GPCs.
Evaluating and optimizing their runtime performance on NMSs will be a future work.

The datasets are KOS, Enron, NIPS, 20NG, NyTimes, and Pubmed.
The NIPS results and statistics of the datasets are summarized in the appendix.

The baselines are GPC algorithms to train smoothed LDA, including CGS~\cite{Griffiths2004}, stochastic VI~\cite{Hoffman2013} and SGMCMC~\cite{Patterson2013}.
We use symmetric Dirichlet prior, $\boldsymbol\lambda = \lambda\mathbf{1}$ and $\boldsymbol\varphi = \varphi\mathbf{1}$.
$\lambda$ and $\varphi$ are offset by $+0.5$ for VI and $+1$ for MAP, including our SpikeLDA.
To evaluate the generalization performance, we use fold-in method to calculating perplexity following~\cite{Asuncion2009}.
Specifically, for the baselines, VI-based methods use the alternative estimate which generally reports lower perplexity; CGS and SGMCMC use only one sample as estimate.
For our new SNN models, the weights $\mathbf{M}^{\alpha}, \mathbf{M}^{\beta}$ are first converted to $\boldsymbol\Phi,\boldsymbol\Theta$ following Tab.~\ref{tlb.Parameter relation}, which is then used to
calculate the perplexity following~\cite{Asuncion2009} or train classifiers.

All results are averaged from $3$ different runs of the algorithms.
More details of the experiments are in the Appendix.

\subsection{Perplexity results}\label{sec.convergence behavior}
On small datasets where the ed-SpikeLDA can handle in an acceptable time, we compare convergence speeds and the final perplexities of ed-SpikePLSI and ed-SpikeLDA.
Fig.~\ref{fig.Final Perplexity} shows the final perplexities with varying hyper-parameter $\lambda$.
The ed-SpikePLSI usually finds good solutions.
But ed-SpikeLDA can outperform it by tuning its hyper-parameter $\lambda$ as observed in the Enron experiment.
Moreover, Fig.~\ref{fig.Perplexity vs iteration} compares the convergence speeds under the hyper-parameter setting when they converge to similar results.

\begin{figure}[t!]
        \begin{subfigure}{0.23\textwidth}
            \centering
            \caption{NYTimes}
            \includegraphics[width=\textwidth]{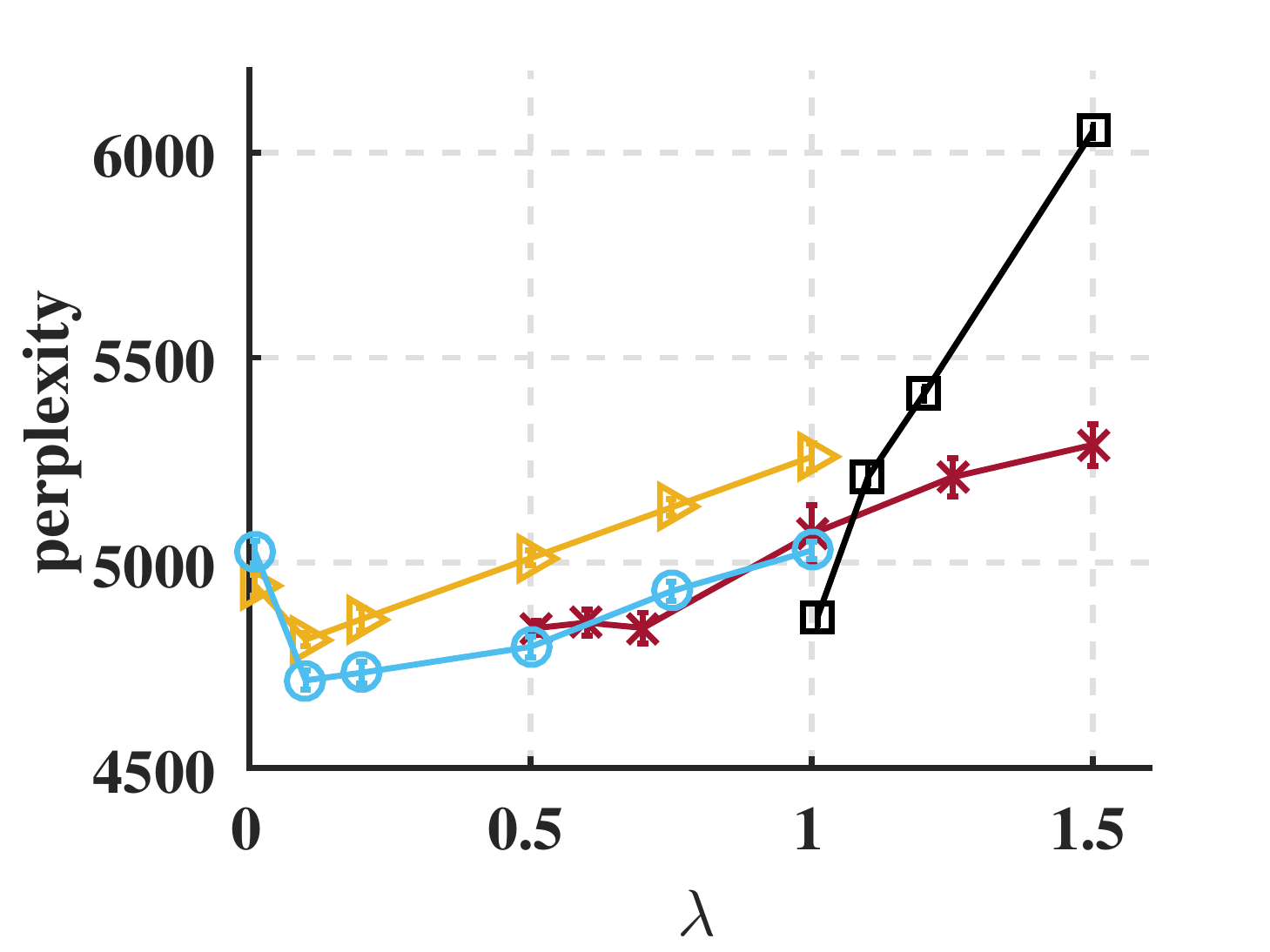}
        \end{subfigure}
        \begin{subfigure}{0.23\textwidth}
            \centering
            \caption{Pubmed}
            \includegraphics[width=\textwidth]{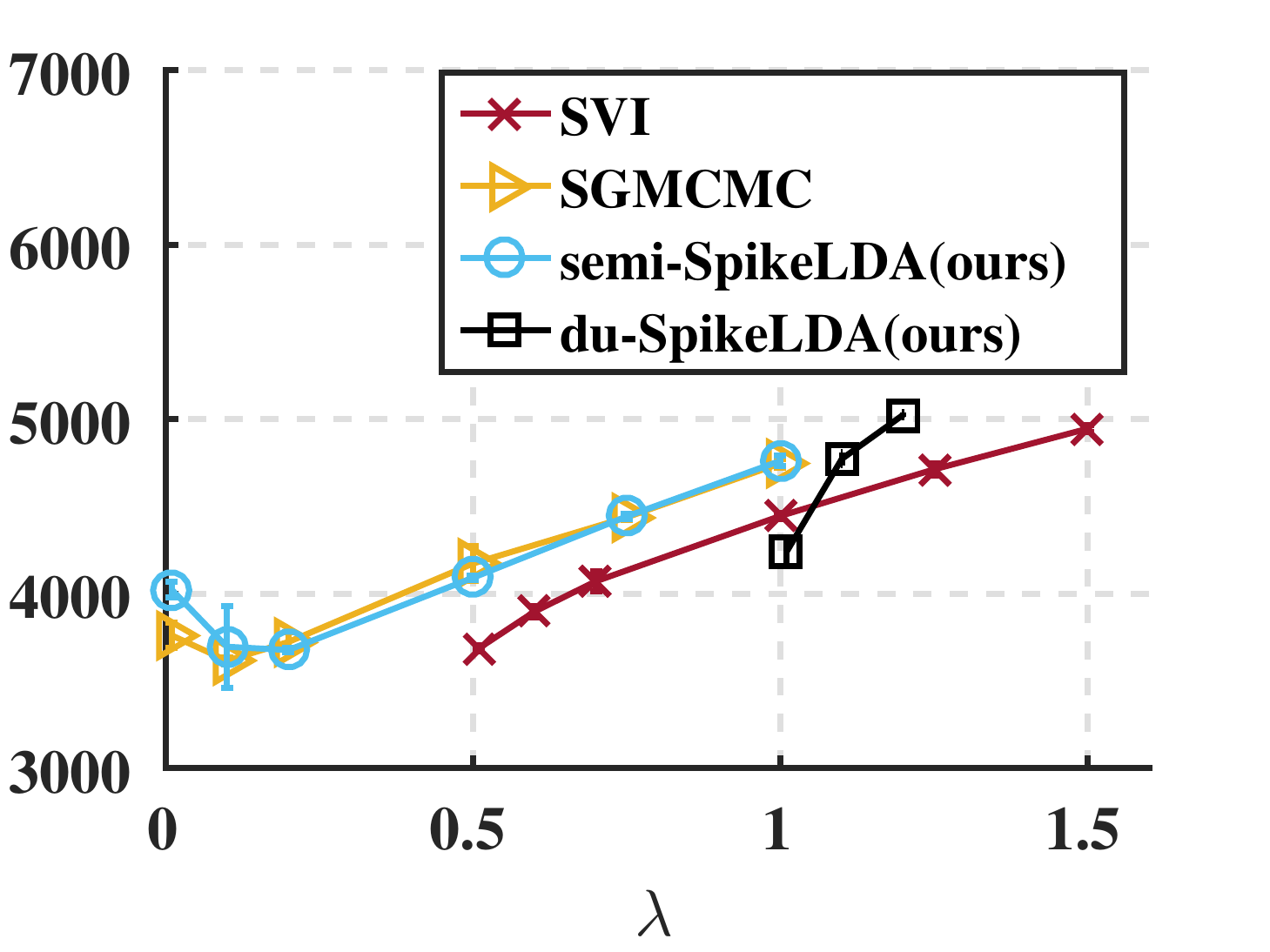}
        \end{subfigure}
        \caption{Impact of $\lambda$. Large datasets. Iterations number: $3000$ for NYTimes, $5000$ for Pubmed. $K=50$.}
        \label{fig.large final solution}
\end{figure}
\begin{figure}[t!]
        \begin{subfigure}{0.23\textwidth}
            \centering
            \caption{NYTimes}
            \includegraphics[width=\textwidth]{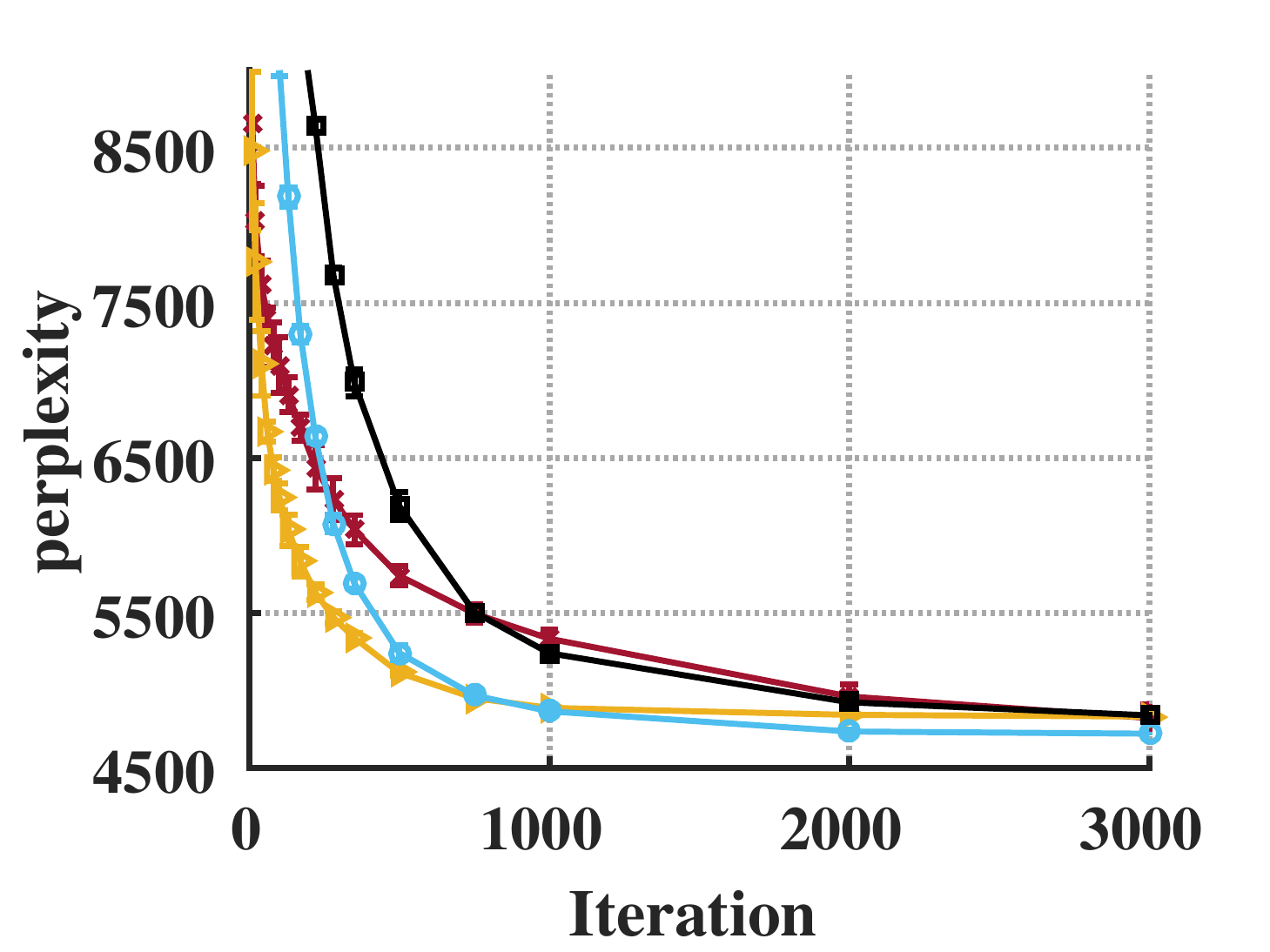}
        \end{subfigure}
        \begin{subfigure}{0.23\textwidth}
            \centering
            \caption{Pubmed}
            \includegraphics[width=\textwidth]{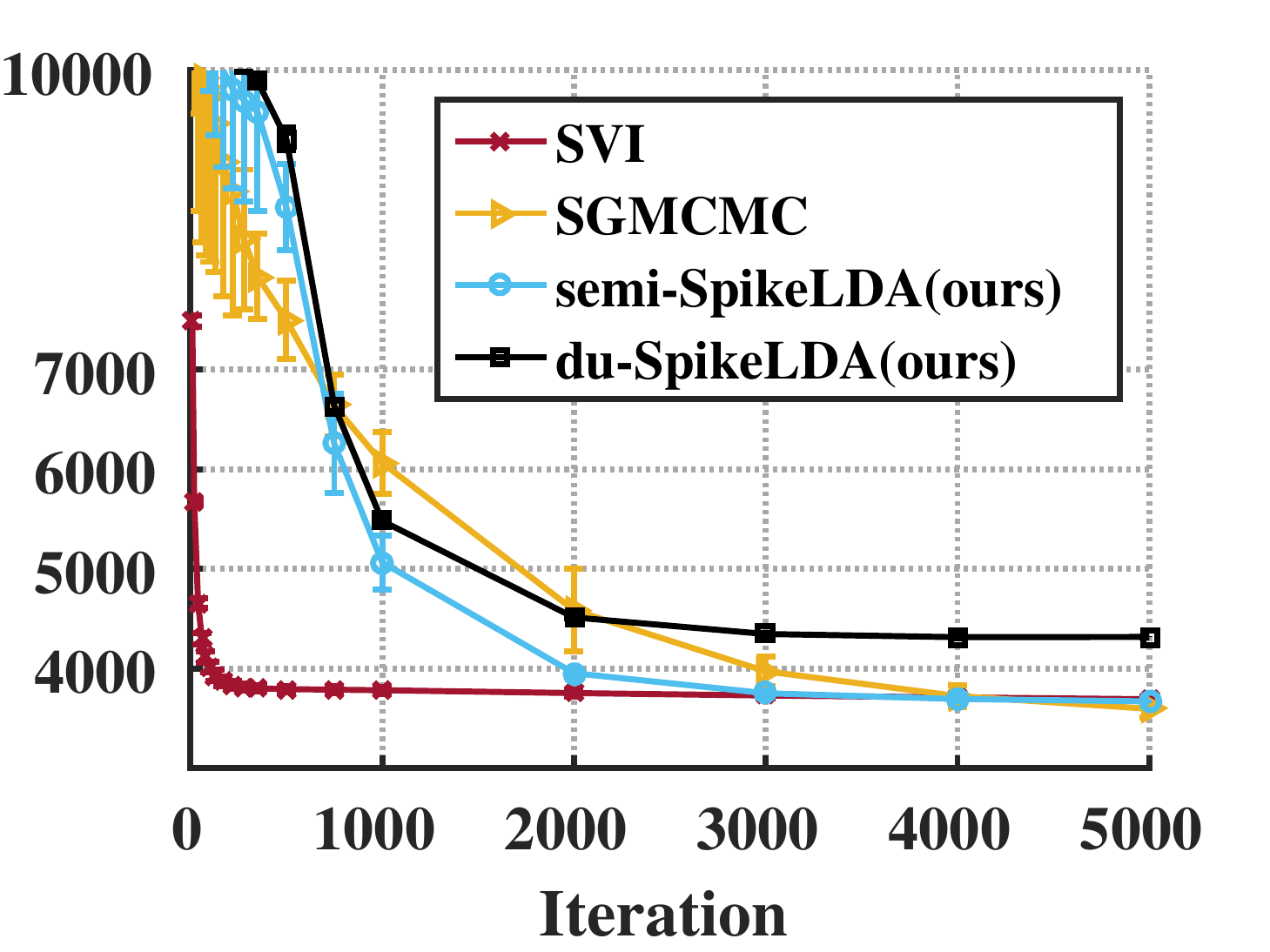}
        \end{subfigure}
        \caption{Convergence results. Large datasets. $K=50$.}
        \label{fig.large convergence}
\end{figure}

On larger datasets, we compare the delayed update algorithms, du-SpikeLDA and semi-SpikeLDA, with some existing stochastic algorithms.
Fig.~\ref{fig.large final solution} shows that the SNN algorithms find similar solutions with the baseline methods after a fixed number of iterations of training.
The semi-SpikeLDA is less sensitive to hyper-parameters than du-SpikeLDA and competitive with SVI and SGMCMC.
Fig.~\ref{fig.large convergence} moreover shows the convergence behaviors of different algorithms under the hyper-parameter settings when they converge to similar results.
Our SNN algorithms perform reasonably while they are suitable for NMS implementations.

\subsection{Discriminative results}
We examine the discriminative ability of the learnt latent representations on 20NG (see Tab.~\ref{tlb.classifcation}).
The latent representations of the training documents are used as features to build a binary/multi-class SVM classifier.
As in \cite{Zhu2009}, the binary classification is to distinguish groups \textit{alt.athesism} and \textit{talk.religion.misc}.
We use the LIBLINEAR tool-kit \cite{Fan2008} and choose the L2-regularized L1-loss with $C=1$ to build the SVM.
%Fig.\ref{fig.Network pruning} show the impacts of the proposed pruning scheme.
We set $\lambda = 0.05$ for CGS and $\lambda=1.05$ for ed-SpikeLDA.
The network outputs useful representations for discrimination.
\begin{table}[h!]
    \centering
    \resizebox{0.33\textwidth}{!}{%
    \begin{tabular}{|c|c c |}\hline
       & Binary &  Mutli-class\\ \hline
    CGS& $72.8\pm 3.8$& $63.5\pm 0.2$ \\
    ed-SpikeLDA&  $72.5\pm 2.4$& $57.6\pm 3.3$\\\hline
    \end{tabular}%
    }
    \caption{Classification accuracy on 20NG.}
    \label{tlb.classifcation}
\end{table}

\section{Conclusions and Future work}\label{Sec.Conclusion}
We propose three SNN algorithms to train LDA, which are competitive to the GPC algorithms in generalization performance and discriminative power.
The first one is a batch algorithm based on CGS.
The second one is an online algorithm based on optimization.
The last one is a hybrid of the former two algorithms, but uses delayed update.
A future work is to assess the algorithms on real NMSs.
As the first step, we propose a network pruning method for ed-SpikeLDA in the Appendix.

\section{Acknowledgements}
This work is supported by the National NSF of China (Nos. 61620106010, 61621136008, 61332007). We thank Nan Jiang for helpful discussions.

{\small
\bibliographystyle{aaai}
\bibliography{Xiao}

\begin{thebibliography}{}

\bibitem[\protect\citeauthoryear{Aitchison}{1986}]{Aitchison1986}
Aitchison, J.
\newblock 1986.
\newblock The statistical analysis of compositional data.
\newblock {\em Journal of the Royal Statistical Society. Series B
  (Methodological)} 44(2):139--177.

\bibitem[\protect\citeauthoryear{Asuncion \bgroup et al\mbox.\egroup
  }{2009}]{Asuncion2009}
Asuncion, A.; Welling, M.; Smyth, P.; and Teh, Y.~W.
\newblock 2009.
\newblock On smoothing and inference for topic models.
\newblock In {\em Proceedings of the Twenty-Fifth Conference on Uncertainty in
  Artificial Intelligence}.
\newblock Arlington, Virginia, United States: AUAI Press.

\bibitem[\protect\citeauthoryear{Bao \bgroup et al\mbox.\egroup
  }{2012}]{Bao2012}
Bao, T.; Cao, H.; Chen, E.; Tian, J.; and Xiong, H.
\newblock 2012.
\newblock An unsupervised approach to modeling personalized contexts of mobile
  users.
\newblock {\em Knowledge and Information Systems} 31(2):345--370.

\bibitem[\protect\citeauthoryear{Blei, Ng, and Jordan}{2003}]{Blei2003}
Blei, D.~M.; Ng, A.~Y.; and Jordan, M.~I.
\newblock 2003.
\newblock Latent dirichlet allocation.
\newblock {\em Journal of Machine Learning Research} 3:993--1022.

\bibitem[\protect\citeauthoryear{Boyd and Vandenberghe}{2004}]{Boyd2004}
Boyd, S., and Vandenberghe, L.
\newblock 2004.
\newblock {\em Convex optimization}.
\newblock Cambridge university press.

\bibitem[\protect\citeauthoryear{Broderick \bgroup et al\mbox.\egroup
  }{2013}]{Broderick2013}
Broderick, T.; Boyd, N.; Wibisono, A.; Wilson, A.~C.; and Jordan, M.~I.
\newblock 2013.
\newblock Streaming variational bayes.
\newblock In {\em Proceedings of the 26th International Conference on Neural
  Information Processing Systems}.
\newblock Curran Associates, Inc.

\bibitem[\protect\citeauthoryear{Cappe and Moulines}{2009}]{Cappe2009}
Cappe, O., and Moulines, E.
\newblock 2009.
\newblock On-line expectation-maximization algorithm for latent data models.
\newblock {\em Journal of the Royal Statistical Society: Series B (Statistical
  Methodology)} 71(3).

\bibitem[\protect\citeauthoryear{Chen \bgroup et al\mbox.\egroup
  }{2016}]{Chen2016}
Chen, J.; Li, K.; Zhu, J.; and Chen, W.
\newblock 2016.
\newblock Warplda: a cache efficient o(1) algorithm for latent dirichlet
  allocation.
\newblock In {\em Proc. VLDB Endow.}
\newblock VLDB Endowment.

\bibitem[\protect\citeauthoryear{Fan \bgroup et al\mbox.\egroup
  }{2008}]{Fan2008}
Fan, R.-E.; Chang, K.-W.; Hsieh, C.-J.; Wang, X.-R.; and Lin, C.-J.
\newblock 2008.
\newblock Liblinear: A library for large linear classification.
\newblock {\em Journal of Machine Learning Research} 9:1871--1876.

\bibitem[\protect\citeauthoryear{Furber \bgroup et al\mbox.\egroup
  }{2014}]{Furber2014}
Furber, S.~B.; Galluppi, F.; Temple, S.; and Plana, L.~A.
\newblock 2014.
\newblock The spinnaker project.
\newblock {\em Proceedings of the IEEE} 102(5):652--665.

\bibitem[\protect\citeauthoryear{Griffiths and Steyvers}{2004}]{Griffiths2004}
Griffiths, T.~L., and Steyvers, M.
\newblock 2004.
\newblock Finding scientific topics.
\newblock {\em Proceedings of the National Academy of Sciences} 101:5228--5235.

\bibitem[\protect\citeauthoryear{Hoffman \bgroup et al\mbox.\egroup
  }{2013}]{Hoffman2013}
Hoffman, M.~D.; Blei, D.~M.; Wang, C.; and Paisley, J.~W.
\newblock 2013.
\newblock Stochastic variational inference.
\newblock {\em Journal of Machine Learning Research} 7:1303--1347.

\bibitem[\protect\citeauthoryear{Hofmann}{1999}]{Hofmann1999}
Hofmann, T.
\newblock 1999.
\newblock Probabilistic latent semantic indexing.
\newblock In {\em SIGIR Forum}.
\newblock New York, NY, USA: ACM.

\bibitem[\protect\citeauthoryear{Kushner and Yin}{2003}]{Kushner2003}
Kushner, H., and Yin, G.
\newblock 2003.
\newblock {\em Stochastic approximation and recursive algorithms}.
\newblock Springer New York.

\bibitem[\protect\citeauthoryear{Mahowald}{1994}]{Mahowald1994}
Mahowald, M.
\newblock 1994.
\newblock {\em An analog VLSI system for stereoscopic vision}.
\newblock Springer Science \& Business Media.

\bibitem[\protect\citeauthoryear{Merolla \bgroup et al\mbox.\egroup
  }{2014}]{Merolla2014}
Merolla, P.~A.; Arthur, J.~V.; Alvarez-Icaza, R.; Cassidy, A.~S.; Sawada, J.;
  Akopyan, F.; Jackson, B.~L.; Imam, N.; Guo, C.; Nakamura, Y.; et~al.
\newblock 2014.
\newblock A million spiking-neuron integrated circuit with a scalable
  communication network and interface.
\newblock {\em Science} 345(6197):668--673.

\bibitem[\protect\citeauthoryear{Neftci \bgroup et al\mbox.\egroup
  }{2014}]{Neftci2014}
Neftci, E.; Das, S.; Pedroni, B.; Kreutz-Delgado, K.; and Cauwenberghs, G.
\newblock 2014.
\newblock Event-driven contrastive divergence for spiking neuromorphic systems.
\newblock {\em Frontiers in Neuroscience} 7:272.

\bibitem[\protect\citeauthoryear{Nessler \bgroup et al\mbox.\egroup
  }{2013}]{Nessler2013}
Nessler, B.; Pfeiffer, M.; Buesing, L.; and Maass, W.
\newblock 2013.
\newblock Bayesian computation emerges in generic cortical microcircuits
  through spike-timing-dependent plasticity.
\newblock {\em PLoS Comput Biol} 9(4):e1003037.

\bibitem[\protect\citeauthoryear{O'Connor \bgroup et al\mbox.\egroup
  }{2013}]{Connor2013}
O'Connor, P.; Neil, D.; Liu, S.-C.; Delbruck, T.; and Pfeiffer, M.
\newblock 2013.
\newblock Real-time classification and sensor fusion with a spiking deep belief
  network.
\newblock {\em Frontiers in Neuroscience} 7:178.

\bibitem[\protect\citeauthoryear{Patterson and Teh}{2013}]{Patterson2013}
Patterson, S., and Teh, Y.~W.
\newblock 2013.
\newblock Stochastic gradient riemannian langevin dynamics on the probability
  simplex.
\newblock In {\em Advances in Neural Information Processing Systems 26}. Curran
  Associates, Inc.

\bibitem[\protect\citeauthoryear{Robinson and Li}{2015}]{Robinson2015}
Robinson, J.~W., and Li, A.~Q.
\newblock 2015.
\newblock Fast latent variable models for inference and visualization on mobile
  devices.
\newblock {\em arXiv preprint arXiv:1510.07035}.

\bibitem[\protect\citeauthoryear{Sivic \bgroup et al\mbox.\egroup
  }{2005}]{Sivic2005}
Sivic, J.; Russell, B.~C.; Efros, A.~A.; Zisserman, A.; and Freeman, W.~T.
\newblock 2005.
\newblock Discovering object categories in image collections.
\newblock Technical report, Computer Science and Artificial Intelligence
  Laboratory, MIT, Cambridge, MA.

\bibitem[\protect\citeauthoryear{Wang \bgroup et al\mbox.\egroup
  }{2014}]{Wang2014}
Wang, Y.; Zhao, X.; Sun, Z.; Yan, H.; Wang, L.; Jin, Z.; Wang, L.; et~al.
\newblock 2014.
\newblock Towards topic modeling for big data.
\newblock {\em ACM Transactions on Intelligent Systems and Technology} 9(4).

\bibitem[\protect\citeauthoryear{Yuan \bgroup et al\mbox.\egroup
  }{2015}]{Yuan2015}
Yuan, J.; Gao, F.; Ho, Q.; Dai, W.; Wei, J.; Zheng, X.; Xing, E.~P.; Liu,
  T.-Y.; and Ma, W.-Y.
\newblock 2015.
\newblock Lightlda: Big topic models on modest compute clusters.
\newblock In {\em Proceedings of the 24th International Conference on World
  Wide Web}.
\newblock International World Wide Web Conferences Steering Committee.

\bibitem[\protect\citeauthoryear{Zeiler}{2012}]{Zeiler2012}
Zeiler, M.~D.
\newblock 2012.
\newblock Adadelta: an adaptive learning rate method.
\newblock {\em arXiv preprint arXiv:1212.5701}.

\bibitem[\protect\citeauthoryear{Zhu, Ahmed, and Xing}{2012}]{Zhu2009}
Zhu, J.; Ahmed, A.; and Xing, E.~P.
\newblock 2012.
\newblock Medlda: maximum margin supervised topic models.
\newblock {\em Journal of Machine Learning Research} 13:2237--2278.

\end{thebibliography}
}

\clearpage
\onecolumn

\appendix
%\numberwithin{equation}{section}

\section{NMS and event-driven update}
\subsection{An introduction to NMS}
Artificial intelligence (AI) algorithms have become the most important computing workload nowadays.
Special hardware is developed to improve the efficiency when implementing some restricted but important types of AI algorithms.
Among the new hardware, we focus on an emerging hardware technology called the neuromorphic multi-chip system (NMS).
NMSs are developed for low-energy implementations of the spiking neural network (SNN) algorithms, which are originally developed to mathematically model the computations in the biological brain.
NMSs have some special designs making it nontrivial to implement an ordinary algorithm.
%Specifically, each chip has limited on-chip memory and inter-chip communication is efficient for binary spikes.
In the following, we summarize their hardware characteristics we concern about.

A NMS is a multi-chip system.
There are computational units and memory units on each chip.
The computational units implement the neurons, and a major kind of memory units implement the synapses (to store and fast retrieve the weights).
Inter-chip communication is effective for spikes.
\begin{enumerate}
\item Computation: A hardware neuron calculates a weighted sum of the input and output spikes following some specific activation methods asynchronously~\cite{Merolla2014}\footnote{The activation methods in SNN are also commonly called SNN dynamics.}.
\item Communication: Memory units are distributive and co-located with computational units over the chips.
The memory on a chip is typically limited, e.g., 52MB and 128MB~\cite{Merolla2014,Furber2014}.
Therefore if a model is large, its neurons and model parameters reside on several chips and requires inter-chip communication, which is only efficient for binary spikes by using the AER protocol~\cite{Mahowald1994}.
Communicating high-precision numbers is either impossible or inefficient.
\item Storage:
Even though a NMS can be integrated with an external memory~\cite{Connor2013} to store some infrequently access data, the external memory is usually limited on devices where low-energy computing is required.
\end{enumerate}

\subsection{Event-driven update}
SpikeCGS and ed-SpikeLDA are event-driven algorithms, that the synaptic weights are updated once a latent neuron fires a spike.
This resembles the STDP synaptic update rule in neuroscience and is suitable for NMS implementation.
The synapses that are modified when a latent neuron fires a spike is shown in Fig.~\ref{fig.synapse update}.
\begin{figure}[!h]
    \vspace{-.4cm}
    \centering
    \captionsetup[subfigure]{labelformat=empty}
    \begin{subfigure}{0.23\textwidth}
        \centering
    \vspace{0pt}
    \resizebox{\textwidth}{!}{
    \begin{tikzpicture}[node distance=\nodedist]
        \def\nodesize{15pt};

        \tikzstyle{every pin edge}=[<-,shorten <=2pt]
        \tikzstyle{neuron}=[circle, draw=black!100, line width=0.5mm, inner sep=0pt, minimum size=\nodesize]
        \tikzstyle{dark_neuron}=[circle, fill=black!100, draw=black!100, line width=0.5mm, inner sep=0.5pt, minimum size=\nodesize]
        \tikzstyle{grey_neuron}=[circle, draw=black!100, line width=1.75mm, inner sep=0.5mm, minimum size=\nodesize - 1mm]
        \tikzstyle{dot}=[circle, draw=black!100, fill=black!100, line width=0.5pt, inner sep=0.5pt, minimum size=0.1mm]
        \tikzstyle{annot} = [text width=10em, text centered]

        % Draw the word neurons
        \foreach \name / \x in {0,...,5}
            \path[xshift=-2cm]
                node[neuron] (w-\name) at ({-\x * \nodesize * 1}, 0) {};
        \foreach \name / \x in {1}
            \path[xshift=-2cm]
                node[grey_neuron] (w-\name) at ({-\x * \nodesize * 1}, 0) {};
        \draw[xshift=-2cm, line width = 0.5mm] ({0.5 * \nodesize}, {0.5 * \nodesize + 0.1}) rectangle ({-5.5 * \nodesize}, {-0.5 * \nodesize - 0.1}) node[right = {3 * \nodesize}, below = {0.1*\nodesize}]{$\mathbf{x}^{\alpha}$};

        % Draw the document neurons
        \foreach \name / \x in {0,...,3}
            \path[xshift=-0.5cm]
                node[neuron] (d-\name) at ({\x * \nodesize * 1}, 0) {};
        \foreach \name / \x in {1}
            \path[xshift=-0.5cm]
                node[grey_neuron] (d-\name) at ({\x * \nodesize * 1}, 0) {};
        %\foreach \name / \x in {0.9,1.5,2.1}
        %    \path[xshift=-0.5cm]
        %        node[dot] (d-\name) at ({\x * \nodesize * 1}, 0) {};
        \draw[xshift=-0.5cm, line width = 0.5mm] ({-0.5 * \nodesize}, {0.5 * \nodesize + 0.1}) rectangle ({3.5 * \nodesize}, {-0.5 * \nodesize - 0.1}) node[left = {2*\nodesize}, below = {0.1*\nodesize}]{$\mathbf{x}^{\beta}$};

        % Draw the hidden layer nodes
        \foreach \name / \x in {0,2}
            \path[xshift=-2.25cm]
                node[neuron] (t-\name) at ({\x * \nodesize * 1}, {3 * \nodesize}) {};
         \foreach \name / \x in {1}
            \path[xshift=-2.25cm]
                node[dark_neuron] (t-\name) at ({\x * \nodesize * 1}, {3 * \nodesize}) {};
        \draw[xshift=-2.25cm, yshift={3 * \nodesize}, line width = 0.5mm] ({-0.5 * \nodesize}, {0.5 * \nodesize + 0.1}) rectangle ({2.5 * \nodesize}, {-0.5 * \nodesize - 0.1}) node[left = {1.5 * \nodesize}, above = \nodesize] {$\mathbf{h}$}; %node[right = {0.5 * \nodesize}, above = 0 * \nodesize] {$\mathbf{b}$};

        \foreach \name / \x in {0,...,5}
            \path[xshift=-2cm]
                node[neuron] (w-\name) at ({-\x * \nodesize * 1}, 0) {};
        \draw[xshift=-2cm, line width = 0.5mm] ({0.5 * \nodesize}, {0.5 * \nodesize + 0.1}) rectangle ({-5.5 * \nodesize}, {-0.5 * \nodesize - 0.1}) node[right = {3 * \nodesize}, below = {0.1*\nodesize}]{$\mathbf{x}^{\alpha}$};

        % Connect every node in the input layer with every node in the
        % hidden layer.
         \foreach \name / \x in {1}
            %\path[xshift=-2cm]
                \draw[line width =0.7mm,>=latex,->,draw=black!100] ({-2cm - \x * \nodesize}, {\nodesize * 0.5}) -- ({-2.1cm + \nodesize * 0.7}, {2.5 * \nodesize});

        %node[left = 1.5cm, below = 0.00cm, scale = 1.2] {$\mathbf{M}^{\alpha}$};

        \foreach \name / \x in {1}
            \draw[line width =0.7mm,>=latex,->,draw=black!100] ({-0.5cm + 1 * \nodesize}, {\nodesize * 0.5}) -- ({-2.25cm + \x * \nodesize * 1}, {2.5 * \nodesize});
        %node[right = 1.6cm, below = 0.00cm, scale = 1.2] {$\mathbf{M}^{\beta}$};

        % draw legend
        %\node[dark_neuron] at ({1 * \nodesize * 1}, {3 * \nodesize}){}; \node[] at ({3.5 * \nodesize * 1}, {3 * \nodesize}) {: fired neuron};
        %\draw[line width =0.7mm,>=latex,->,draw=black!100]  ({0.5 * \nodesize * 1}, {2 * \nodesize}) -- ({1.5 * \nodesize * 1}, {2 * \nodesize});
        %\node[] at ({4.45 * \nodesize * 1}, {2 * \nodesize}) {: a modified synapse};
    \end{tikzpicture}
    }
    \vspace{-.6cm}
    \caption{(a)~SpikeCGS}
    %\vspace{-10pt}
    \end{subfigure}
    \hspace{0mm}
    \begin{subfigure}{0.23\textwidth}
    \centering
    \resizebox{1\textwidth}{!}{
    \begin{tikzpicture}[node distance=\nodedist]
        \def\nodesize{15pt};

        \tikzstyle{every pin edge}=[<-,shorten <=2pt]
        \tikzstyle{neuron}=[circle, draw=black!100, line width=0.5mm, inner sep=0pt, minimum size=\nodesize]
        \tikzstyle{dark_neuron}=[circle, fill=black!100, draw=black!100, line width=0.5mm, inner sep=0.5pt, minimum size=\nodesize]
        \tikzstyle{grey_neuron}=[circle, draw=black!100, line width=1.75mm, inner sep=0.5mm, minimum size=\nodesize - 1mm]
        \tikzstyle{dot}=[circle, draw=black!100, fill=black!100, line width=0.5pt, inner sep=0.5pt, minimum size=0.1mm]
        \tikzstyle{annot} = [text width=10em, text centered]

        % Draw the word neurons
        \foreach \name / \x in {0,...,5}
            \path[xshift=-2cm]
                node[neuron] (w-\name) at ({-\x * \nodesize * 1}, 0) {};
        \foreach \name / \x in {1}
            \path[xshift=-2cm]
                node[grey_neuron] (w-\name) at ({-\x * \nodesize * 1}, 0) {};
        \draw[xshift=-2cm, line width = 0.5mm] ({0.5 * \nodesize}, {0.5 * \nodesize + 0.1}) rectangle ({-5.5 * \nodesize}, {-0.5 * \nodesize - 0.1}) node[right = {3 * \nodesize}, below = {0.1*\nodesize}]{$\mathbf{x}^{\alpha}$};

        % Draw the document neurons
        \foreach \name / \x in {0,...,3}
            \path[xshift=-0.5cm]
                node[neuron] (d-\name) at ({\x * \nodesize * 1}, 0) {};
        \foreach \name / \x in {1}
            \path[xshift=-0.5cm]
                node[grey_neuron] (d-\name) at ({\x * \nodesize * 1}, 0) {};
        %\foreach \name / \x in {0.9,1.5,2.1}
        %    \path[xshift=-0.5cm]
        %        node[dot] (d-\name) at ({\x * \nodesize * 1}, 0) {};
        \draw[xshift=-0.5cm, line width = 0.5mm] ({-0.5 * \nodesize}, {0.5 * \nodesize + 0.1}) rectangle ({3.5 * \nodesize}, {-0.5 * \nodesize - 0.1}) node[left = {2*\nodesize}, below = {0.1*\nodesize}]{$\mathbf{x}^{\beta}$};

        % Draw the hidden layer nodes
        \foreach \name / \x in {0,2}
            \path[xshift=-2.25cm]
                node[neuron] (t-\name) at ({\x * \nodesize * 1}, {3 * \nodesize}) {};
         \foreach \name / \x in {1}
            \path[xshift=-2.25cm]
                node[dark_neuron] (t-\name) at ({\x * \nodesize * 1}, {3 * \nodesize}) {};
        \draw[xshift=-2.25cm, yshift={3 * \nodesize}, line width = 0.5mm] ({-0.5 * \nodesize}, {0.5 * \nodesize + 0.1}) rectangle ({2.5 * \nodesize}, {-0.5 * \nodesize - 0.1}) node[left = {1.5 * \nodesize}, above = \nodesize] {$\mathbf{h}$}; %node[right = {0.5 * \nodesize}, above = 0 * \nodesize] {$\mathbf{b}$};

        \foreach \name / \x in {0,...,5}
            \path[xshift=-2cm]
                node[neuron] (w-\name) at ({-\x * \nodesize * 1}, 0) {};
        \draw[xshift=-2cm, line width = 0.5mm] ({0.5 * \nodesize}, {0.5 * \nodesize + 0.1}) rectangle ({-5.5 * \nodesize}, {-0.5 * \nodesize - 0.1}) node[right = {3 * \nodesize}, below = {0.1*\nodesize}]{$\mathbf{x}^{\alpha}$};

        % Connect every node in the input layer with every node in the
        % hidden layer.
         \foreach \name / \x in {0,...,5}
            %\path[xshift=-2cm]
                \draw[line width =0.7mm,>=latex,->,draw=black!100] ({-2cm - \x * \nodesize}, {\nodesize * 0.5}) -- ({-2.1cm + \nodesize * 0.7}, {2.5 * \nodesize});

        %node[left = 1.5cm, below = 0.00cm, scale = 1.2] {$\mathbf{M}^{\alpha}$};

        \foreach \name / \x in {0,...,2}
            \draw[line width =0.7mm,>=latex,->,draw=black!100] ({-0.5cm + 1 * \nodesize}, {\nodesize * 0.5}) -- ({-2.25cm + \x * \nodesize * 1}, {2.5 * \nodesize});
        %node[right = 1.6cm, below = 0.00cm, scale = 1.2] {$\mathbf{M}^{\beta}$};
    \end{tikzpicture}
    }
    \vspace{-.6cm}
    \caption{(b)~SpikePLSI and SpikeLDA}
    %\vspace{-10pt}
    \end{subfigure}

    \begin{subfigure}{0.5\textwidth}
    \resizebox{0.9\textwidth}{!}{
    \hspace{2cm}
    \begin{tikzpicture}[node distance=\nodedist]
        \def\nodesize{15pt};
        \tikzstyle{every pin edge}=[<-,shorten <=2pt]
        \tikzstyle{neuron}=[circle, draw=black!100, line width=0.5mm, inner sep=0pt, minimum size=\nodesize]
        \tikzstyle{dark_neuron}=[circle, fill=black!100, draw=black!100, line width=0.5mm, inner sep=0.5pt, minimum size=\nodesize]
        \tikzstyle{grey_neuron}=[circle, draw=black!100, line width=1.75mm, inner sep=0.5mm, minimum size=\nodesize - 1mm]
        \tikzstyle{dot}=[circle, draw=black!100, fill=black!100, line width=0.5pt, inner sep=0.5pt, minimum size=0.1mm]
        \tikzstyle{annot} = [text width=10em, text centered]

        % draw legend
        \node[dark_neuron] at ({-1 * \nodesize * 1}, {3 * \nodesize}){}; \node[] at ({2.9 * \nodesize * 1}, {3 * \nodesize}) {\LARGE : spiking neuron};
        \node[grey_neuron] at ({7 * \nodesize * 1}, {3 * \nodesize}){}; \node[] at ({11.4 * \nodesize * 1}, {3 * \nodesize}) {\LARGE: excitatory neuron};
        \draw[line width =0.7mm,>=latex,->,draw=black!100]  ({15.8 * \nodesize * 1}, {3 * \nodesize}) -- ({17.5 * \nodesize * 1}, {3 * \nodesize});
        \node[] at ({21.2 * \nodesize * 1}, {3 * \nodesize}) {\LARGE: modified synapse};
    \end{tikzpicture}
    }
    \end{subfigure}
    %\vspace{-.5cm}
    \caption{The synapses are modified at a spike timing}
    %\vspace{-.4cm}
    \label{fig.synapse update}
\end{figure}
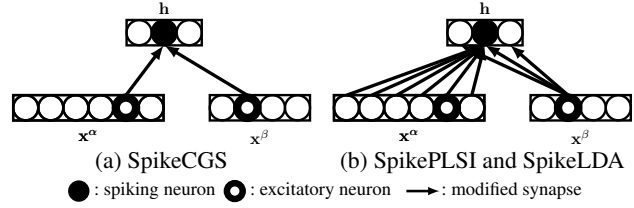

\section{Derivation of Eq.~(\ref{equ.semi ELBO})}\label{appendix.elbo}
In the vanilla LDA model (where there is no prior defined on the global parameter $\boldsymbol\Phi$), the semi-collapsed likelihood can be re-written as the evidence lower bound plus the KL divergence between a variational distribution $q(\mathbf{Z})$ and the exact posterior:
\begin{align*}
    \log p(\mathbf{W}|\boldsymbol\Phi,\boldsymbol\lambda) &= \mathbb{E}_{q(\mathbf{Z})}\log p(\mathbf{W}|\mathbf{Z};\boldsymbol\Phi,\boldsymbol\lambda)+\mathbb{E}_{q(\mathbf{Z})}\log \frac{p(\mathbf{Z}|\boldsymbol\lambda)}{q(\mathbf{Z})} + \text{KL}(q(\mathbf{Z})||p(\mathbf{Z}|\mathbf{W};\boldsymbol\Phi,\boldsymbol\lambda)).
\end{align*}
If $q(\mathbf{Z})=p(\mathbf{Z}|\mathbf{W};\boldsymbol\Phi,\boldsymbol\lambda)$, e.g., by CGS, the KL term is eliminated.
Moreover the second term on the r.h.s. is a constant $C$ w.r.t. $\boldsymbol\Phi$, the variable we optimize.
Therefore we have Eq.~(\ref{equ.semi ELBO}):
\begin{align*}
    \log p(\mathbf{W}|\boldsymbol\Phi,\boldsymbol\lambda)= &\mathbb{E}_{q(\mathbf{Z})}\log p(\mathbf{W}|\mathbf{Z};\boldsymbol\Phi,\boldsymbol\lambda)+ C
\end{align*}

\section{The complete semi-SpikeLDA algorithm}
The complete semi-SpikeLDA algorithm is summarized in Alg.~\ref{alg.semiSpikeLDA}.
\begin{algorithm}[h!]
    \caption{Semi-SpikeLDA, where $\tau_1(x) = \log( \exp x -1)$ and $\tau_2(x) = \log( \exp x +1)$.}
    \begin{algorithmic}[1]
        \Function{CGS in semi-SpikeLDA}{A mini-batch $\hat{D}$ of documents, the number of semi-CGS iterations $T$}
        \For{$d\in \hat{D}$}
        \State Initialize $\mathbf{z}_d^0,$ and $M_{zd}^{\beta} = \log(C_{z,d}+\varphi_z), \forall z$
        \For{Sample $s\in\{1,...,2T\}$}
        \For{Location $i\in\{1,...,N_d\}$}
        \State Negative phase: $M_{z_{di}^{s-1}d}^{\beta} = \tau_1(M_{z_{di}^{s-1}d}^{\beta})$\label{equ.negative phase}
        \State Resample:~$z_{di}^{s}$ = \Call{InferenceDynamics}{$w_{di},d$}\label{alg.line.snn resample}
        \State Positive phase: $M_{{z_{di}^{s}}d}^{\beta} = \tau_2(M_{{z_{di}^{s}}d}^{\beta})$ \label{equ.positive phase}
        \EndFor
        \EndFor
        \EndFor
        \State\Return{$\hat{N}_{z,w}\!\triangleq\!\sum_{s=T+1}^{2T}\sum_{d=1}^D\sum_{i=1}^{N_d}\mathbb{I}(w_{di}\!=\!w,z_{di}^s\!=\!z),\forall w,z$ and $\hat{N}_{z}\!\triangleq\!\sum_{w=1}^V\hat{N}_{z,w},\forall z$.}
        \EndFunction
    \end{algorithmic}

    \begin{algorithmic}[1]
        \Function{Semi-SpikeLDA}{A corpus, the number of semi-CGS iterations $T$}
        \Repeat
        \State Sample a mini-batch of document $\hat{D}$
        \State $\{\hat{N}_{z,w}, \hat{N}_z,\forall w,z\}$ = \Call{CGS in semi-SpikeLDA}{$\hat{D}$, $T$}
        \State Update \begin{align}
    M_{zw}^{\alpha} &\leftarrow M_{zw}^{\alpha} + \eta_t \frac{1}{|\hat{D}|T}\Big[\hat{N}_{z,w}\exp(-M_{zw}^{\alpha}) - \hat{N}_{z}\Big],\forall w,z\nonumber
\end{align}
        \Until{A pre-specified number of iterations is reached.}
        \EndFunction
    \end{algorithmic}
    \label{alg.semiSpikeLDA}
\end{algorithm}

\section{Online SNN implementation of pLSI}
pLSI~\cite{Hofmann1999} is a special case of LDA where there is no prior.
So our SpikeLDA algorithm can be extended to train pLSI.
\subsection{Preliminary of pLSI}
Probabilistic Latent Semantic Indexing (pLSI)~\cite{Hofmann1999} is the most fundamental topic model, which assumes a generative process for the corpus $\mathbf{W}$:
\begin{enumerate}\vspace{-.1cm}
    \item[]for each document $d=1,...,D$,
        \begin{enumerate}%\vspace{-.15cm}
            \item[] for each position in the document, $i=1,...,N_d$,
                \begin{enumerate}%\vspace{-.15cm}
                    \item[] draw a topic assignment $z_{di}\sim\text{Multi}(\boldsymbol\theta_d)$;%\vspace{-.1cm}
                    \item[] draw a word $w_{di}\sim\text{Multi}(\boldsymbol\phi_{z_{di}})$,
                \end{enumerate}\vspace{-.1cm}
        \end{enumerate}
\end{enumerate}
where $\boldsymbol\theta_d$ is a $K$-dimensional topic-mixing proportion vector of document $d$; $K$ is the pre-specified number of topics; $\boldsymbol\phi_k$ is a $V$-dimensional topic distribution vector for topic $k$; and $\text{Multi}(\cdot)$ denotes the Multinomial distribution.

Maximum-likelihood (ML) is a standard parameter estimation method for pLSI.
According to Eq.~(\ref{equ.L stochastic form}) in the main text, the ML problem is
\begin{align}
    \max_{\boldsymbol\Phi,\boldsymbol\Theta}~ \mathbb{E}_{\pi(w,d)} \big[ \log p(w|d;\boldsymbol\Phi,\boldsymbol\Theta) \big],\label{equ.ML stochastic problem}
\end{align}
In the new formulation of Eq.~(\ref{equ.ML stochastic problem}), $\pi(w,d)$ an also be the unknown environment distribution.

\subsection{The SNN algorithm}
The SNN algorithm to train pLSI, called SpikePLSI, is the same as the SpikeLDA algorithm Alg.~\ref{alg.online algorithm} except that the stochastic update rule is replaced by
\begin{align}
    M_{zw}^{\alpha} &\leftarrow M_{zw}^{\alpha} + \eta_t h_z\left.(x_w^{\alpha}\exp(-M_{zw}^{\alpha})-1\right.),\forall w,z,\nonumber\\
    M_{zd}^{\beta} &\leftarrow M_{zd}^{\beta} + \eta_t x_d^{\beta}\left.(h_z\exp(-M_{zd}^{\beta})-1\right.),\forall d,z.\label{equ.ML dicrete dynamics}
\end{align}
In the following, we present the theoretical results.
\subsubsection{Conceived learning problem}
The problem, Def.~\ref{def.SpikePLSI}, resembles the online MLE.
\begin{definition}\label{def.SpikePLSI}
A SpikePLSI problem is defined as
\begin{align}
    \max_{\mathbf{M}}&~~\mathbb{E}_{\pi(w,d)}\log p(x_w^{\alpha}=1|x_d^{\beta}=1;\mathbf{M}),\nonumber\\
    \text{s.t. }&\zeta(\mathbf{M}_{z\cdot}^{\alpha}) = 1,\forall z,\quad \zeta(\mathbf{M}_{\cdot d}^{\beta}) = 1,\forall d.\label{equ.ML SNN problem}
\end{align}
\end{definition}
We prove its equivalence with problems (\ref{equ.ML stochastic problem}) in Lemma~\ref{prop.ML equivalence}.
\begin{lemma}\label{prop.ML equivalence} %(Proof in Appendix \ref{proof.ML equivalence})
\vspace{-.1cm}
The SpikePLSI problem, Def.~\ref{def.SpikePLSI}, is equivalent with the pLSI problem (\ref{equ.ML stochastic problem}).
\end{lemma}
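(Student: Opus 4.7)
The plan is to mirror the proof of Lemma~\ref{prop.MAP equivalence} but with the simpler setting where no Dirichlet prior is involved and both normalization constants are set to one. First I would use the re-parametrization of Tab.~\ref{tlb.Parameter relation}(c) together with the constraints $\zeta(\mathbf{M}_{z\cdot}^{\alpha})=1$ and $\zeta(\mathbf{M}_{\cdot d}^{\beta})=1$ to identify $\phi_{zw}=\exp M_{zw}^{\alpha}$ and $\theta_{dz}=\exp M_{zd}^{\beta}$. Under these identifications, the SNN constraints become exactly the simplex constraints $\sum_w \phi_{zw}=1$ and $\sum_z \theta_{dz}=1$ that define valid pLSI parameters, so the two feasible sets are in bijection.

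Next I would show that the two objective functions coincide pointwise on the feasible set. Starting from Def.~\ref{def.distribution}, evaluating $A(\mathbf{M}_{z\cdot}^{\alpha},\mathbf{M}_{\cdot d}^{\beta}) = \log\zeta(\mathbf{M}_{z\cdot}^{\alpha})+\log\zeta(\mathbf{M}_{\cdot d}^{\beta})$ under the constraints gives $A=0$, so
\begin{align*}
 p(x_w^{\alpha}\!=\!1,h_z\!=\!1\mid x_d^{\beta}\!=\!1;\mathbf{M}) = \exp(M_{zw}^{\alpha}+M_{zd}^{\beta}) = \phi_{zw}\,\theta_{dz}.
\end{align*}
This is precisely the pLSI complete likelihood $p(w,z\mid d;\boldsymbol\Phi,\boldsymbol\Theta) = \phi_{zw}\theta_{dz}$. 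Summing over the latent topic variable $h_z$ then yields the marginal identity $p(x_w^{\alpha}\!=\!1\mid x_d^{\beta}\!=\!1;\mathbf{M}) = \sum_{z}\phi_{zw}\theta_{dz} = p(w\mid d;\boldsymbol\Phi,\boldsymbol\Theta)$. Taking $\log$ and the expectation under $\pi(w,d)$ gives equality of the two objectives, completing the equivalence.

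This lemma is essentially a specialization of Lemma~\ref{prop.re-parametrization} with $\kappa=1$, modulo removing the prior term; so no new analytical machinery is needed. The only step that requires a small amount of care is checking that the re-parametrization is a bijection between the constrained $\mathbf{M}$-space and the product of simplices $\Delta^{V-1}\times\Delta^{K-1}$, which follows because each constraint $\zeta(\cdot)=1$ fixes the otherwise free additive constant in the log-parametrization. I do not expect any substantial obstacle; the main thing to be careful about is stating the equivalence of optimization problems in both directions, i.e. that every feasible $(\boldsymbol\Phi,\boldsymbol\Theta)$ arises from some feasible $\mathbf{M}$ and vice versa, so that the suprema agree.
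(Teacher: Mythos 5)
Your proposal is correct and follows essentially the same route as the paper's proof: it invokes the re-parametrization of Lemma~\ref{prop.re-parametrization} with $\kappa=1$, sums out the topic variable, takes the expectation over $\pi(w,d)$, and concludes by the one-to-one change of variables between the constrained $\mathbf{M}$-space and the simplices. The paper's argument is the same, only stated more tersely.
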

\begin{proof}
\vspace{-.2cm}
According to Lemma~\ref{prop.re-parametrization}, subject to the normalization constrains and after summing $z$ out and taking the expectation over $\pi(w,d)$, we have:
\begin{align}
    &\mathbb{E}_{\pi(w,d)}\!\log\!p(x_w^{\alpha}\!=\!1|x_d^{\beta}\!=\!1;\mathbf{M})\!=\!\mathbb{E}_{\pi(w,d)}\!\log \!p(w|d;\!\boldsymbol\Phi,\!\boldsymbol\Theta).\nonumber%\label{equ.ml equivalent}
\end{align}
Then the two optimization problems are equivalent because they differ from a change of variables and the transformation is one-to-one~\cite{Boyd2004}.
\end{proof}
Finally, we prove that the update rules in the SpikePLSI algorithm is a stochastic parameter update rule whose ML-ODE solves the conceived problem of Def.~\ref{def.SpikePLSI}.
\begin{theorem}\label{prop.Ml stochastic update} (Proof in Appendix \ref{proof.ML dynamics})
In Alg.~\ref{alg.online algorithm},
the ML-ODE of the update rule Eq.~(\ref{equ.ML dicrete dynamics}) solve the SpikePLSI problem.
So does the stochastic update rule Eq.~(\ref{equ.ML dicrete dynamics}). This algorithm is called the SpikePLSI algorithm.
\end{theorem}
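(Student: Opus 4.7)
The plan is to mirror the proof strategy of Theorem~\ref{prop.MAP stochastic update} that is outlined in the main text, adapting it to the simpler (prior-free) SpikePLSI setting. First I would pass from the stochastic update Eq.~(\ref{equ.ML dicrete dynamics}) to its ML-ODE by taking the expectation with respect to the joint distribution over $(w,d)\sim\pi(w,d)$ and the topic sample $\mathbf{h}$ produced by \textsc{InferenceDynamics}. Using Lemma~\ref{prop.re-parametrization}, the conditional law of $h_z$ given $(w,d)$ is the softmax $p(h_z=1\mid \mathbf{x}^{\alpha},\mathbf{x}^{\beta};\mathbf{M})\propto\exp(u_z)$, so the drift for $M_{zw}^{\alpha}$ becomes
\begin{align*}
\mathbb{E}\bigl[h_z(x_w^{\alpha}\exp(-M_{zw}^{\alpha})-1)\bigr]
=\sum_{d}\pi(w,d)\,p(h_z=1\mid w,d;\mathbf{M})\exp(-M_{zw}^{\alpha})-r_z^{\alpha},
\end{align*}
where $r_z^{\alpha}=\sum_{w,d}\pi(w,d)\,p(h_z=1\mid w,d;\mathbf{M})$, and an analogous expression holds for $M_{zd}^{\beta}$. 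These are the two components of the mean vector field that the ML-ODE follows.

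Next I would establish the two-phase behavior announced in the proof sketch of Theorem~\ref{prop.MAP stochastic update}. For phase one, I would take the candidate Lyapunov quantities $L_z^{\alpha}(\mathbf{M})=\zeta(\mathbf{M}_{z\cdot}^{\alpha})-1$ and $L_d^{\beta}(\mathbf{M})=\zeta(\mathbf{M}_{\cdot d}^{\beta})-1$ and differentiate them along the ML-ODE. Because $\partial\zeta(\mathbf{M}_{z\cdot}^{\alpha})/\partial M_{zw}^{\alpha}=\exp(M_{zw}^{\alpha})$, the cross-terms cancel and the derivative reduces to a scalar multiple of $1-\zeta(\mathbf{M}_{z\cdot}^{\alpha})$ (and symmetrically $1-\zeta(\mathbf{M}_{\cdot d}^{\beta})$), which is negative when the corresponding constraint is violated. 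This shows that each $L_z^{\alpha}$ and $L_d^{\beta}$ decays monotonically to $0$, so the trajectory converges to the normalization manifold $\mathcal{M}=\{\zeta(\mathbf{M}_{z\cdot}^{\alpha})=1,\ \zeta(\mathbf{M}_{\cdot d}^{\beta})=1\}$.

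For phase two, on $\mathcal{M}$ I would substitute the constraints $\zeta(\mathbf{M}_{z\cdot}^{\alpha})=1$ and $\zeta(\mathbf{M}_{\cdot d}^{\beta})=1$ into the drift and show that it coincides with the natural (Fisher-preconditioned) gradient of the SpikePLSI objective $J(\mathbf{M})=\mathbb{E}_{\pi(w,d)}\log p(x_w^{\alpha}=1\mid x_d^{\beta}=1;\mathbf{M})$. The standard computation using $p(w\mid d;\mathbf{M})=\sum_z p(w,z\mid d;\mathbf{M})$ and Bayes' rule yields $\partial J/\partial M_{zw}^{\alpha}=\mathbb{E}_{\pi(w,d)}[p(z\mid w,d;\mathbf{M})]-\exp(M_{zw}^{\alpha})\,r_z^{\alpha}$, which matches the drift after reparameterizing via the exponential link, so $\mathcal{M}$ is invariant and the flow on it is natural-gradient ascent on $J$. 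Combining this with Lemma~\ref{prop.ML equivalence} identifies the stable fixed points with the local optima of problem~(\ref{equ.ML SNN problem}). Finally, invoking the Kushner--Yin stochastic approximation theorem~\cite{Kushner2003} under the Robbins--Monro step-size condition transfers convergence from the ML-ODE to the discrete stochastic iteration Eq.~(\ref{equ.ML dicrete dynamics}).

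The main obstacle will be the phase-one analysis: one has to verify that the two families of constraints (one per topic $z$, one per document $d$) are simultaneously driven to satisfaction and that $\mathcal{M}$ remains forward-invariant throughout phase two despite the coupling between $\mathbf{M}^{\alpha}$ and $\mathbf{M}^{\beta}$ in the softmax posterior. Once that invariance is secured, the rest is a direct reuse of the calculation behind Theorem~\ref{prop.MAP stochastic update} with the prior-related terms $(\lambda_z-1)/N_d$ and $1/\kappa$ set to zero, so no new machinery is required.
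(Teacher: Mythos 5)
Your proposal follows essentially the same route as the paper's proof: the same two-phase ML-ODE analysis---showing that the normalization deviations $\zeta(\mathbf{M}_{z\cdot}^{\alpha})-1$ and $\zeta(\mathbf{M}_{\cdot d}^{\beta})-1$ decay to zero (hence convergence to, and forward-invariance of, the constraint manifold), and then identifying the drift on the manifold with the natural (Fisher-preconditioned) gradient of the SpikePLSI objective so that trajectories reach its critical points---followed by the same Kushner--Yin transfer to the discrete stochastic updates under the Robbins--Monro condition. Apart from minor notational looseness (the first term of your gradient formula should be the partial sum over $d$ with $w$ held fixed, and the constraint deviations decay in absolute value rather than being uniformly negative), this matches the paper's argument.
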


\section{Details of the experiments}
\subsection{Datasets}
The datasets are standard datasets from the UCI machine learning repository.
KOS is a collection of blog entries on US current events; NIPS contains full papers of NIPS from $1990$ to $2002$; Enron is a collection of emails from about $150$ users of Enron; NYTimes and Pubmed consist of news articles and biomedical literature abstracts.
Each dataset is randomly split into the training set and testing set with ratio $9:1$.
And each document in the testing set is split into two halves: one half belongs to a observed testing set, and the other half belongs to a holdout testing set.
Then we use the fold-in method in~\cite{Asuncion2009} to calculate the perplexity.
\begin{table}[h!]
\centering
\begin{tabular}{|l|l l l l|}\hline
Dataset&   $D$&   $N$&  $V$&   $T/D$\\ \hline
    KOS&  3.4K&  467K&  69K&   136\\ \hline
   NIPS&  1.5K&  1.9M&  12K&   1266\\ \hline
  Enron&   39K&  6.4M&  28K&   160\\ \hline
  Nytimes& 300K& 100M& 102K&   332\\ \hline
  PubMed& 8.2M&  738M&  141K&   90\\\hline
    \end{tabular}
\caption{Statistics of the datasets}
\label{tlb.dataset}
\vspace{-10pt}
\end{table}

\subsection{Network initialization}
All synapses are randomly sampled from $N(1,1)$, except that each synapse of the $\mathbf{M}^{\beta}$ is initialized to $\log(1/K)$ in SpikePLSI.

\subsection{Adaptive step size schedule}
\subsubsection{For SpikePLSI and ed-SpikeLDA}
For all parameter $\mathbf{M}$ in SpikePLSI, we use variance tracking~\cite{Nessler2013}.
For $\mathbf{M}^{\alpha}$ of ed-SpikeLDA, we use variance tracking too; but for its $\mathbf{M}^{\beta}$, we use AdaGrad.
To avoid the annealing step size problem of AdaGrad \cite{Zeiler2012}, we amplify the step size by a factor $c$ at the end of every iteration.
For KOS, $c=1$. For 20NG, $c=2$ when $\varphi=1.01$, $c=4$ when $\lambda = 1.05$. For Enron, $c=24$ when $\lambda = 1.01$, $c=72$ when $\lambda = 1.05$, $c=2.7$ when $\lambda = 1.1$.
\subsubsection{For du-SpikeLDA and semi-SpikeLDA}
Tab.~\ref{tlb.nips},~\ref{tlb.nytimes} and ~\ref{tlb.pubmed} list the parameters used in the algorithms.
For the du-SpikeLDA, $\eta_{\alpha}$ means the step size for $\mathbf{M}^{\alpha}$ and $\eta_{\beta}$ means the step size for $\mathbf{M}^{\beta}$.

\begin{table}[h!]
    \centering
    \begin{tabular}{|c|c|c|c|c|c|}\hline
    Algorithm & $\varphi$ & $\lambda$ & Adaptive step size method & Local iterations\\\hline
    SVI & 0.51 & 0.51,~0.6,~0.7,~1,~1.25,~1.5 & $a\cdot(1+t/b)^{-c},a=0.1,b=10000,c=0.7$& 5\\\hline
    SGMCMC & 0.01 & 0.01,~0.1,~0.2,~0.5,~0.75,~1 & $a\cdot(1+t/b)^{-c},a=0.05,b=1000,c=0.7$& 10\\\hline
    semi-SpikeLDA & $\backslash$ & 0.01,~0.1,~0.2,~0.5,~0.75,~1 & RMSP with $\eta = 0.5$ & 10\\\hline
    du-SpikeLDA & $\backslash$ & 1.01,~1.1,~1.2,~1.5,~1.75,~2 & RMSP with $\eta_{\alpha}=0.5$, RMSP with $\eta_{\beta}=1$ & 30\\\hline
    \end{tabular}
    \caption{Dataset: NIPS, $K = 50$, global iterations: 1000}%, 8 threads}
    \label{tlb.nips}

    \begin{tabular}{|c|c|c|c|c|}\hline
    Algorithm & $\varphi$ & $\lambda$ & Adaptive step size method & Local iterations \\\hline
    SVI & 0.51 & 0.51,~0.6,~0.7,~1,~1.25,~1.5 & $a\cdot(1+t/b)^{-c},a=0.01,b=10000,c=0.6$& 30 \\\hline
    SGMCMC & 0.01 & 0.01,~0.1,~0.2,~0.5,~0.75,~1 & $a\cdot(1+t/b)^{-c},a=0.005,b=1000,c=0.7$& 30 \\\hline
    semi-SpikeLDA & $\backslash$ & 0.01,~0.1,~0.2,~0.5,~0.75,~1 & RMSP with $\eta = 0.1$ & 30 \\\hline
    du-SpikeLDA & $\backslash$ & 1.01,~1.1,~1.2,~1.5,~1.75,~2 & RMSP with $\eta_{\alpha}=0.05$, RMSP with $\eta_{\beta}=1$ & 30 \\\hline
    \end{tabular}
    \caption{Dataset: Nytimes, $K = 50$, global iterations: 3000, mini-batch size: 1000}%, 4 threads}
    \label{tlb.nytimes}

    \begin{tabular}{|c|c|c|c|c|}\hline
    Algorithm & $\varphi$ & $\lambda$ & Adaptive step size method & Local iterations \\\hline
    SVI & 0.51 & 0.51,~0.6,~0.7,~1,~1.25,~1.5 & $a\cdot(1+t/b)^{-c},a=0.05,b=10000,c=0.6$& 30 \\\hline
    SGMCMC & 0.01 & 0.01,~0.1,~0.2,~0.5,~0.75,~1 & $a\cdot(1+t/b)^{-c},a=0.0001,b=1000,c=0.7$& 30 \\\hline
    semi-SpikeLDA & $\backslash$ & 0.01,~0.1,~0.2,~0.5,~0.75,~1 & RMSP with $\eta = 0.05$ & 30 \\\hline
    du-SpikeLDA & $\backslash$ & 1.01,~1.1,~1.2,~1.5,~1.75,~2 & RMSP with $\eta_{\alpha}=0.05$, RMSP with $\eta_{\beta}=1$ & 30 \\\hline
    \end{tabular}
    \caption{Dataset: PubMed, $K = 50$, global iterations: 5000, mini-batch size: 1000}%, 4 threads}
    \label{tlb.pubmed}
\end{table}

\subsection{NIPS results}
The NIPS results are shown in Fig.~\ref{fig.NIPS result 1} and~\ref{fig.NIPS result 2}.
\begin{figure}[h!]
    \centering
    \includegraphics[width=0.3\textwidth]{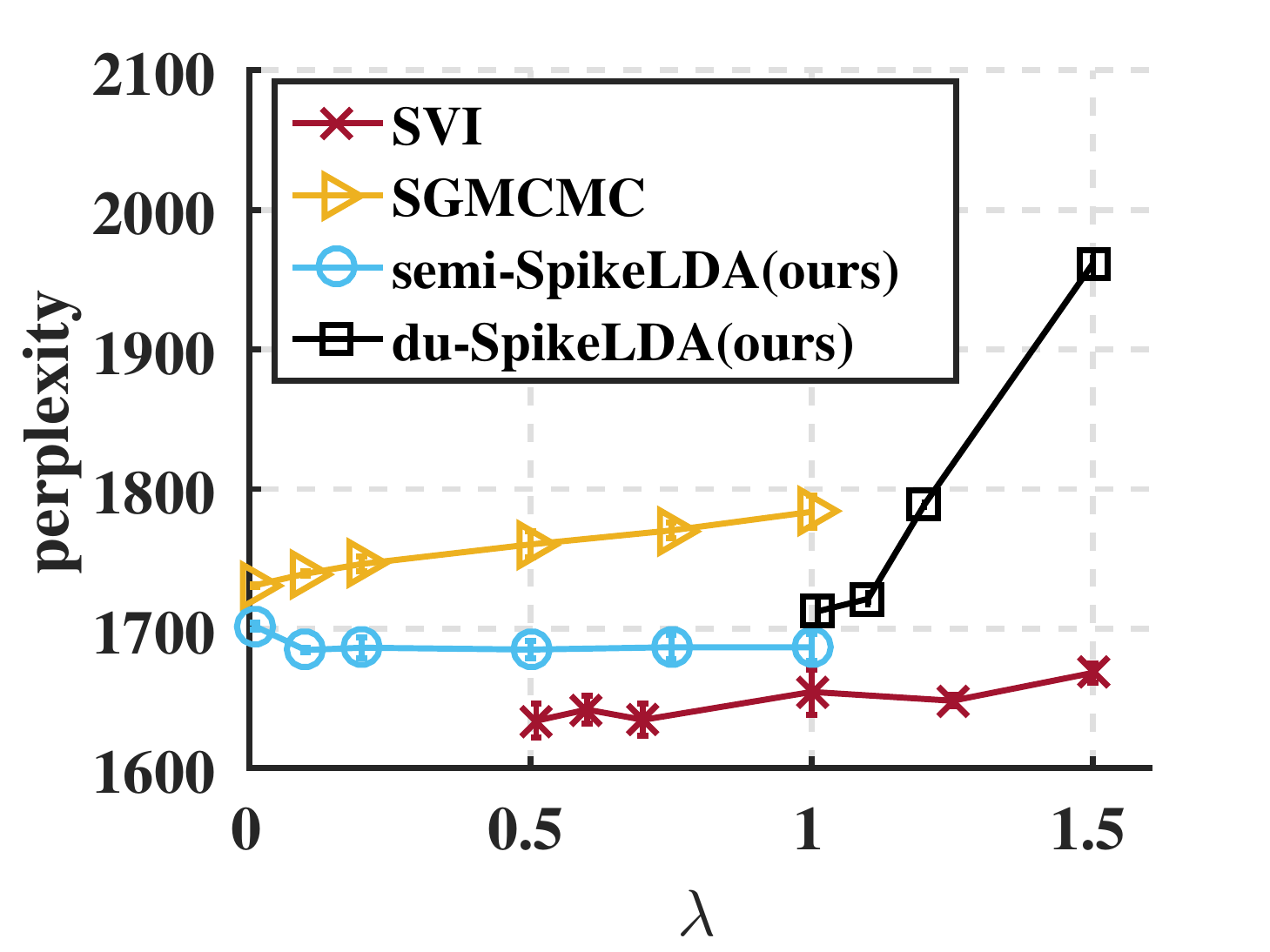}
    \caption{NIPS result. Impact of $\lambda$. $K=50$. Iterations number: $1000$}
    \label{fig.NIPS result 1}
\end{figure}
\begin{figure}[h!]
    \centering
        \includegraphics[width=0.3\textwidth]{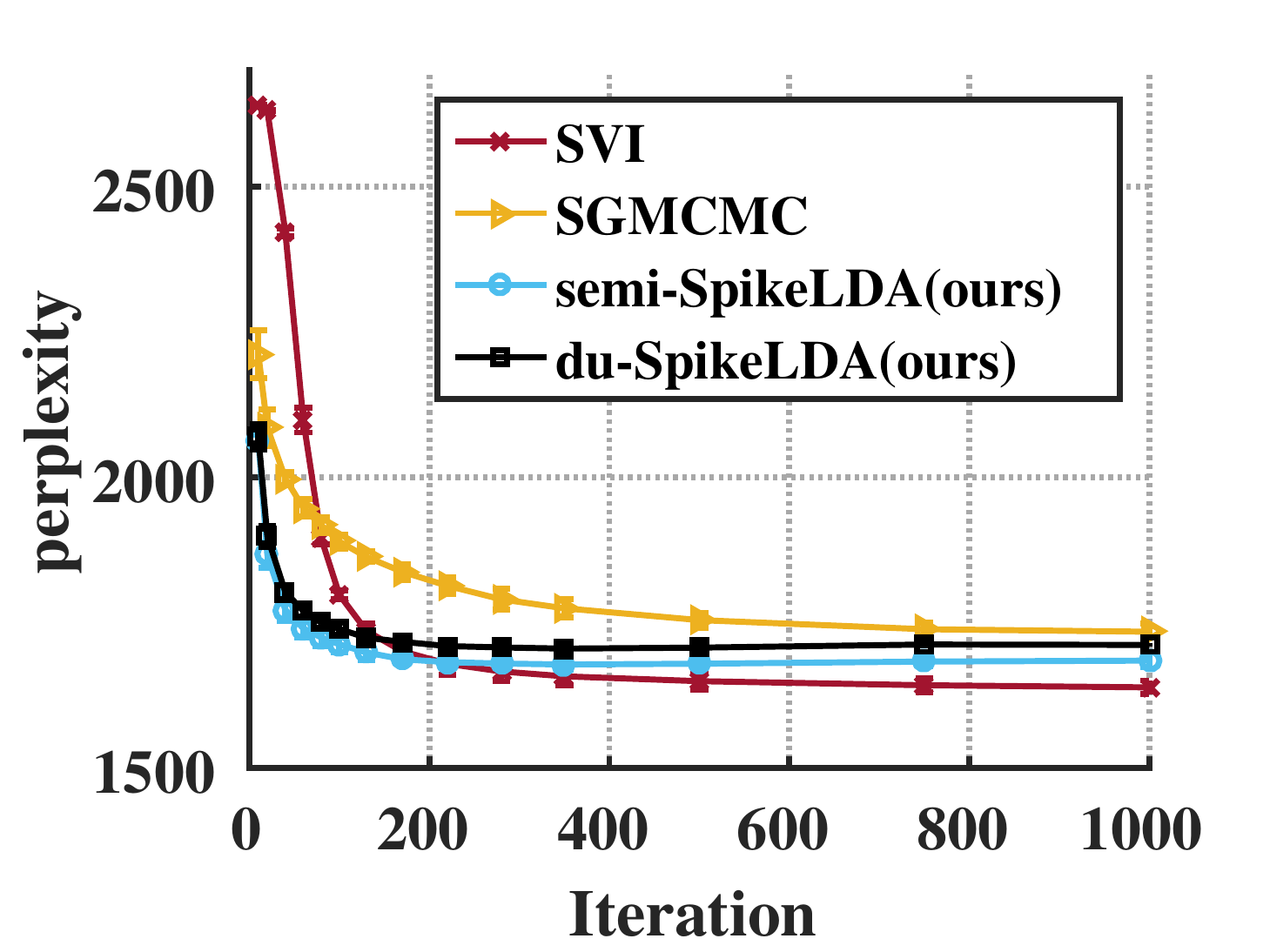}
        \caption{NIPS result. Convergence results. $K=50$.}
        \label{fig.NIPS result 2}
\end{figure}

\section{Proofs to the theoretical results}
\subsection{Proof of lemma \ref{prop.re-parametrization}}
If the following conditions holds,
\begin{enumerate}
\item $\zeta(\mathbf{M}_{z\cdot}^{\alpha})\!=\!1,\forall z$ and $ \zeta(\mathbf{M}_{\cdot d}^{\beta})\!=\!\kappa
,\forall d$,
where $\kappa$ is some constant;
\item the variables and parameters are related by Tab.~\ref{tlb.Parameter relation}(a,c);
\end{enumerate}
then Eq.~(\ref{equ.SNN ML}) equals the complete likelihood that a word $w$ in document $d$ is assigned the topic $z$ in LDA:
\begin{align*}
p(x_w^{\alpha} \!=\! 1, h_z \!=\! 1|x_d^{\beta} \!=\! 1;\mathbf{M}) = p(w,z|d;\boldsymbol\Phi,\boldsymbol\Theta),
\end{align*}

Moreover, the conditional distribution of the topic assignment for a token is:
\begin{align*}
    p(h_z=1|\mathbf{x}^{\alpha},\mathbf{x}^{\beta};\mathbf{M})&\propto\exp(u_z),\forall z,
\end{align*}
where $u_z = \mathbf{M}_{z\cdot}^{\alpha} \cdot \mathbf{x}^{\alpha}  + \mathbf{M}_{z\cdot}^{\beta}\cdot\mathbf{x}^{\beta}$ is the weighted sum of the input $\mathbf{x}^{\alpha},\mathbf{x}^{\beta}$.
\begin{num_proof}\label{proof.re-parameterization}
The probabilistic model defined on the SNN is Eq.~(\ref{equ.SNN ML}), i.e.,
\begin{align}
 p(x_w^{\alpha} &= 1, h_z = 1|x_d^{\beta} = 1;\mathbf{M}^{\alpha},\mathbf{M}^{\beta})
 = \exp\big[\mathbf{M}_{z\cdot}^{\alpha}\cdot\mathbf{x}^{\alpha}  +  \mathbf{M}_{z\cdot}^{\beta}\cdot\mathbf{x}^{\beta}  - A(\mathbf{M}_{z\cdot}^{\alpha}, \mathbf{M}_{\cdot d}^{\beta})\big].\nonumber
\end{align}
Under the assumption, $\zeta(\mathbf{M}_{z\cdot}^{\alpha})=\sum_{w=1}^V\exp(M_{zw}^{\alpha})=1,\forall k,\zeta(\mathbf{M}_{\cdot d}^{\beta})=\sum_{z=1}^K\exp(M_{zd}^{\beta})=\kappa,\forall d$, it is
\begin{align}
    p(x_w^{\alpha} = 1, h_z = 1|x_d^{\beta} = 1;\mathbf{M}^{\alpha}, \mathbf{M}^{\beta})
    = \exp\left.(M_{zw}^{\alpha}+M_{zd}^{\beta} - \log \kappa\right.). \label{equ.proof re-parametrization 1}
\end{align}
Under the transformations in Table \ref{tlb.Parameter relation}(c),
\begin{align}
    \phi_{zw} &= \exp M_{zw}^{\alpha},\nonumber\\
    \theta_{dz} &=\exp\left.(M_{zd}^{\beta} - \log \kappa\right.),
\end{align}
Eq.~(\ref{equ.proof re-parametrization 1}) is re-written as
\begin{align}
    p(x_w^{\alpha} = 1, h_z = 1|x_d^{\beta} = 1;\mathbf{M}^{\alpha}, \mathbf{M}^{\beta}) &= \phi_{zw}\theta_{dz} = p(w,z|d;\boldsymbol\Phi,\boldsymbol\Theta),
\end{align}
and $\theta_{dz}$ and $\phi_{zw}$ are normalized:
\begin{align}
    \sum_{w=1}^V\phi_{zw} &= \sum_{w=1}^V\exp(M_{zw}^{\alpha}) = 1,\nonumber\\
    \sum_{z=1}^K\theta_{dz} &= \sum_{z=1}^K\exp(M_{dz}^{\beta} - \log \kappa) = 1,
\end{align}
where $p(w,z|d;\boldsymbol\Phi,\boldsymbol\Theta)$ is the definition of the likelihood of a word $w$ assigned to topic $z$ in document $d$ in pLSI.
The transformations are one-to-one, since $\exp(\cdot)$ is monotone increasing, and one-hot representation is obviously invertible.

$p(h_z=1|\mathbf{x}^{\alpha},\mathbf{x}^{\beta};\mathbf{M})\propto\exp(u_z)$ means that the posterior is proportional to the joint.
Because $A(\mathbf{M}_{z\cdot}^{\alpha}, \mathbf{M}_{\cdot d}^{\beta})$ is a constant under conditions $\zeta(\mathbf{M}_{z\cdot}^{\alpha})=1,\forall z$ and $ \zeta(\mathbf{M}_{\cdot d}^{\beta})=\kappa
,\forall d$, the result follows.
\qed
\end{num_proof}
\leavevmode\newline
\subsection{Proof of lemma \ref{prop.ML equivalence}}
The SpikePLSI problem, Def.~\ref{def.SpikePLSI}, is equivalent with the pLSI problem (\ref{equ.ML stochastic problem}).
\begin{num_proof}\label{proof.ML equivalence}
According to lemma \ref{prop.re-parametrization}, under the condition $\zeta(\mathbf{M}_{z\cdot}^{\alpha}) =1,\forall z, \zeta(\mathbf{M}_{\cdot d}^{\beta})=1, \forall d$, the following equality holds:
\begin{align*}
    p(x_w^{\alpha} = 1, h_z = 1|x_d^{\beta} = 1;\mathbf{M}^{\alpha}, \mathbf{M}^{\beta}) = p(w,z|d;\boldsymbol\Phi,\boldsymbol\Theta).
\end{align*}
Summing the l.h.s. over $h_z=1,\forall z$ and summing the r.h.s. over $z$, and taking the expectation over $\pi(w,d)$, the following equality holds:
\begin{align*}
    \mathbb{E}_{\pi(w,d)}\log p(x_w^{\alpha} = 1|x_d^{\beta} = 1;\mathbf{M}^{\alpha}, \mathbf{M}^{\beta}) = \mathbb{E}_{\pi(w,d)}\log p(w|d;\boldsymbol\Phi,\boldsymbol\Theta).
\end{align*}
The two optimization problems,
\begin{align*}
    \max_{\mathbf{M}^{\alpha},\mathbf{M}^{\beta}}\mathbb{E}_{\pi(w,d)}\log p(x_w^{\alpha} = 1|x_d^{\beta} = 1;\mathbf{M}^{\alpha}, \mathbf{M}^{\beta})
\end{align*}
and
\begin{align*}
    \max_{\boldsymbol\Phi,\boldsymbol\Theta}\mathbb{E}_{\pi(w,d)}\log p(w|d;\boldsymbol\Phi,\boldsymbol\Theta)
\end{align*}
are equivalent because they differ from a change of variables and the transformation is one-to-one \cite{Boyd2004}.\qed
\end{num_proof}

\subsection{Proof of lemma \ref{prop.MAP equivalence}}
The SpikeLDA problem, Def.~\ref{def.SpikeLDA}, is equivalent to the LDA problem (\ref{equ.MAP stochastic problem}), when $\kappa\!=\!\sum_{z=1}^K (\lambda_z\!-\!1); \forall z, \lambda_z\!\geq\!0$.
\begin{num_proof}\label{proof.MAP equivalence}
According to lemma \ref{prop.re-parametrization}, under the condition $\zeta(\mathbf{M}_{z\cdot}^{\alpha}) =1,\forall z, \zeta(\mathbf{M}_{\cdot d}^{\beta})=\kappa=\sum_{z=1}^K (\lambda_z - 1), \forall d$, we have
\begin{align}
    \mathbb{E}_{\pi(w,d)}\log p(x_w^{\alpha}=1|x_d^{\beta}=1;\mathbf{M}^{\alpha}, \mathbf{M}^{\beta})= \mathbb{E}_{\pi(w,d)}\log p(w|d;\boldsymbol\Phi,\boldsymbol\Theta).
\end{align}
Now we only need to prove $p(\mathbf{M}^{\beta};\boldsymbol\lambda) \propto p(\boldsymbol\Theta;\boldsymbol\lambda)$. The following derivations make it clear:
\begin{align*}
    p(\mathbf{M}^{\beta}_{\cdot d};\boldsymbol\lambda) &= \prod_{z=1}^K p(M_{zd}^{\beta};\boldsymbol\lambda)\nonumber\\
    &\propto \prod_{z=1}^K \exp((\lambda_z - 1) M_{zd}^{\beta})\exp(-e^{M_{zd}^{\beta}})\nonumber\\
    &= \exp(-\sum_{z=1}^K e^{M_{zd}^{\beta}}) \prod_{z=1}^K \exp((\lambda_z - 1) M_{zd}^{\beta})\nonumber\\
    &= \exp(-\kappa) \prod_{z=1}^K \exp((\lambda_z - 1) M_{zd}^{\beta})\nonumber\\
    &\propto \prod_{z=1}^K \exp((\lambda_z - 1) M_{zd}^{\beta})\nonumber\\
    &= \prod_{z=1}^K \theta_{dz}^{\lambda_z - 1}\nonumber\\
    &\propto \text{Dir}(\boldsymbol\theta_d;\boldsymbol\lambda)\nonumber\\
    &= p(\boldsymbol\theta_{d};\boldsymbol\lambda),\forall d.
\end{align*}
Therefore, $\log p(\mathbf{M}^{\beta}_{\cdot d};\boldsymbol\lambda) = \log p(\boldsymbol\theta_{d};\boldsymbol\lambda) + C,\forall d$, where $C$ is a constant respective to $\mathbf{M}$ and $\boldsymbol\Theta$.
The third equality is from the constraint $\sum_{z=1}^K\exp(M_{zd}^{\beta})=\kappa,\forall d$.
The two optimization problems,
\begin{align*}
    \max_{\mathbf{M}^{\alpha},\mathbf{M}^{\beta}}\mathbb{E}_{\pi(w,d)}\Big[\log p(x_w^{\alpha} = 1|x_d^{\beta} = 1;\mathbf{M}^{\alpha}, \mathbf{M}^{\beta})+\frac{1}{N_d}\log p(\mathbf{M}_{\cdot d}^{\beta}|\boldsymbol\lambda)\Big]
\end{align*}
and
\begin{align*}
    \max_{\boldsymbol\Phi,\boldsymbol\Theta}\mathbb{E}_{\pi(w,d)}\Big[\log p(w|d;\boldsymbol\Phi,\boldsymbol\Theta)+\frac{1}{N_d}\log p(\boldsymbol\theta_d|\boldsymbol\lambda)\Big]
\end{align*}
are equivalent because they differ from a change of variables and the transformation is one-to-one \cite{Boyd2004}.\qed
\end{num_proof}

\subsection{Proofs of Thm. \ref{prop.Ml stochastic update}, Thm. \ref{prop.MAP stochastic update} and Thm.~\ref{prop.semi stochastic update}}
The proofs to the three theorems share common structures.
Remember that the theorems are about that, for a particle $\mathbf{M}$ randomly initialized in the space, if its dynamics in the space follows the "mean limit" ordinary differential equations (ML-ODEs) of the stochastic update rules, the convergence points of the particle is the same as the local maximum of the corresponding conceived constrained optimization problems.
We separate the proofs into two part.
The first part~\ref{proof.convergence to manifold} proves that the particle converges to the manifold defined by the constrains.
The second part~\ref{proof.convergence to maximum} proves that the particle converges to the local maximum on the manifold.

Notations are introduced.
We denote the ML-ODE of any stochastic update rule as
\begin{align}
    \frac{d}{ds}M_{zw}^{\alpha} &= g^{\alpha}(M_{zw}^{\alpha}),~~~ \forall z,w,\nonumber\\
    \frac{d}{ds}M_{zd}^{\beta} &= g^{\beta}(M_{zd}^{\beta}),~~~ \forall z,d.\label{equ.proof ML ML-ODE}
\end{align}
Let $g(\mathbf{M})$ collectively denote the temporal derivatives of all components in $\mathbf{M}$, i.e. $\frac{d}{ds}\mathbf{M}$.
Now we define some sets in the $\mathbf{M}$ space to describe the convergence behavior. The positive limit set of a set $B$, describing the dynamics when $s\rightarrow\infty$ and initialization $\mathbf{M}(0)\in B$, is defined as
\begin{align}
    F(B) \triangleq \lim_{s\rightarrow\infty}\cup_{\mathbf{M}\in B}\{\mathbf{M}(s'),s'>s:\mathbf{M}(0) = \mathbf{M}\}.\label{equ.proof ML continuous 6}
\end{align}
The set of stationary points in $B$ is defined as:
\begin{align}
    S(B) \triangleq \{\mathbf{M}\in B : g(\mathbf{M}) = \mathbf{0}\}.\label{equ.proof ML continuous 7}
\end{align}
We define a set of function $\zeta_{z}^{\alpha}(\mathbf{M}),\zeta_{d}^{\beta}(\mathbf{M}),z=1,...,K,d=1,...,D$ to represent the deviation of $\mathbf{M}$ from each of the constraints:
\begin{align*}
    \zeta_{z}^{\alpha}(\mathbf{M}) = \sum_{w=1}^V \exp(M_{zw}^{\alpha}) - 1,\forall z\nonumber\\
    \zeta_{d}^{\beta}(\mathbf{M}) = \sum_{z=1}^K \exp(M_{zd}^{\beta}) - \kappa,\forall d.
\end{align*}
Therefore, the manifold defined by the constraints is
\begin{align}
H\triangleq \{\mathbf{M} : \zeta_z^{\alpha}(\mathbf{M}) = 0, \forall z, \zeta_d^{\beta}(\mathbf{M}) = 0, \forall d\}.\label{equ.proof ML manifold def}
\end{align}
For any initial state $\mathbf{M}(0)$, we can define a set
\begin{align}
     C\triangleq \{\mathbf{M} : |\zeta_z^{\alpha}(\mathbf{M})| \leq |\zeta_z^{\alpha}(\mathbf{M}(0))|,\forall z, |\zeta_d^{\beta}(\mathbf{M})|\leq |\zeta_d^{\beta}(\mathbf{M}(0))|,\forall d\}.\label{equ.proof ML close set def}
\end{align}
It is straightforward that $H\subseteq C$ by the definitions Eq.~(\ref{equ.proof ML manifold def}) and (\ref{equ.proof ML close set def}).
\subsubsection{Convergence to the manifold defined by the constrains}
Let $\hat{\pi}(w,d,z) = \pi(w,d)q(z|w,d)$ denote the joint probability that a token $(w,d)$ is observed and the topic $z$ is assigned to it where $q(z|w,d)$ is arbitrary stationary distribution, e.g., the $p(h_z|x_w^{\alpha},x_d^{\beta};\mathbf{M}^{\alpha},\mathbf{M}^{\beta})$.
The specific form of $q(z|w,d)$ doesn't change the proof.
Now we prove that the convergence to manifold results for SpikePLSI, SpikeLDA and semi-SpikeLDA respectively.
\begin{num_proof}\label{proof.ML dynamics}\label{proof.convergence to manifold}
~~
\subsubsection{SpikePLSI}
In SpikePLSI algorithm, the $\kappa=1$ and the ML-ODE is
\begin{align}
    \frac{d}{ds}M_{zw}^{\alpha} &= g^{\alpha}(M_{zw}^{\alpha})\triangleq \mathbb{E}_{\hat{\pi}(w^*,d^*,z^*)}\Big\{\mathbb{I}(z=z^*)\big[\mathbb{I}(w=w^*)\exp(-M_{zw}^{\alpha}) - 1\big]\Big\},\forall z,w,\nonumber\\
    \frac{d}{ds}M_{zd}^{\beta} &= g^{\beta}_{ml}(M_{zd}^{\beta})\triangleq \mathbb{E}_{\hat{\pi}(w^*,d^*,z^*)}\Big\{\mathbb{I}(d=d^*)\big[\mathbb{I}(z=z^*)\exp(-M_{zd}^{\beta}) - 1\big]\Big\},\forall z,d.\label{equ.proof ML ML-ODE}
\end{align}
The temporal evolutions of deviations from the manifold are:
\begin{align}
    \frac{d}{ds}\zeta_{z}^{\alpha}(\mathbf{M}(s)) &= \nabla \zeta_{z}^{\alpha}(\mathbf{M}(s))\cdot g(\mathbf{M}(s))\nonumber\\
    &= \sum_{w=1}^V\exp(M_{zw}^{\alpha}) \mathbb{E}_{\hat{\pi}(w^*,d^*,z^*)}\Big\{\mathbb{I}(z=z^*)\big[\mathbb{I}(w=w^*)\exp(-M_{zw}^{\alpha}) - 1\big]\Big\}\nonumber\\
    &= \sum_{w=1}^V\mathbb{E}_{\hat{\pi}(w^*,d^*,z^*)}\Big\{\mathbb{I}(z=z^*)\big[\mathbb{I}(w=w^*) - \exp(M_{zw}^{\alpha})\big]\Big\}\nonumber\\
    &=  \mathbb{E}_{\hat{\pi}(w^*,d^*,z^*)}\Big\{\mathbb{I}(z=z^*)\big[1 - \sum_{w=1}^V\exp(M_{zw}^{\alpha})\big]\Big\}\nonumber\\
    &= -\zeta_z^{\alpha}(\mathbf{M}(s))\mathbb{E}_{\hat{\pi}(w^*,d^*,z^*)}\mathbb{I}(z=z^*),\forall z\label{equ.proof ML continuous 1}\\
    \frac{d}{ds}\zeta_{d}^{\beta}(\mathbf{M}(s)) &= \nabla \zeta_{d}^{\beta}(\mathbf{M}(s)) \cdot g(\mathbf{M}(s))\nonumber\\
    &= \sum_{z=1}^K \exp(M_{zd}^{\beta}) \mathbb{E}_{\hat{\pi}(w^*,d^*,z^*)}\Big\{\mathbb{I}(d=d^*)\big[\mathbb{I}(z=z^*)\exp(-M_{zd}^{\beta}) - 1\big]\Big\}\nonumber\\
    &= \sum_{z=1}^K \mathbb{E}_{\hat{\pi}(w^*,d^*,z^*)}\Big\{\mathbb{I}(d=d^*)\big[\mathbb{I}(z=z^*) - \exp(M_{zd}^{\beta})\big]\Big\}\nonumber\\
    &= \mathbb{E}_{\hat{\pi}(w^*,d^*,z^*)}\Big\{\mathbb{I}(d=d^*)\big[1 - \sum_{z=1}^K\exp(M_{zd}^{\beta})\big]\Big\}\nonumber\\
    &= -\zeta_d^{\beta}(\mathbf{M}(s))\mathbb{E}_{\pi(w^*,d^*)}\mathbb{I}(d=d^*),\forall d.
\label{equ.proof ML continuous 2}
\end{align}
In another word, their absolute values strictly monotonically decrease over time,
\begin{align*}
    \frac{d}{ds}|\zeta_{z}^{\alpha}(\mathbf{M}(s))| &= -|\zeta_z^{\alpha}(\mathbf{M}(s))|\mathbb{E}_{\hat{\pi}(w^*,d^*,z^*)}\mathbb{I}(z=z^*)< 0,\forall z,\nonumber\\
    \frac{d}{ds}|\zeta_{d}^{\beta}(\mathbf{M}(s))| &= -|\zeta_d^{\beta}(\mathbf{M}(s))|\mathbb{E}_{\hat{\pi}(w^*,d^*,z^*)}\mathbb{I}(d=d^*)< 0,\forall d,
\end{align*}
until $\mathbf{M}$ reaches the manifold $H$.
Therefore regardless of $\mathbf{M}(0)$, $\mathbf{M}(s)$ monotonically converges to the manifold $H$.
Informally speaking, the more distant the $\mathbf{M}$ is from the manifold, the faster the the distance decreases, e.g., $\frac{d}{ds}|\zeta_{z}^{\alpha}(\mathbf{M}(s))|$. The monotonic decreasing property has two consequences. One is $\forall s\geq 0, \mathbf{M}(s)\in C$ by the definitions of the set, Eq.~(\ref{equ.proof ML close set def}); at the time limit, $\mathbf{M}$ converges to the normalization manifold, $F(C) \subseteq H$. The other is that, if at time $s'$, $\mathbf{M}(s')\in H$, then the trajectory will not escape from the manifold, $\forall s\geq s', \mathbf{M}(s)\in H$.

Another relation about the sets that we now prove is that, $S(C)\subseteq H$. From Eq.~(\ref{equ.proof ML ML-ODE}), we can solve the stationary points $g(\mathbf{M^*}) = \mathbf{0}$ implicitly:
\begin{align}
    g(M_{zw}^{\alpha,*}) &= 0 \leftrightarrow M_{zw}^{\alpha,*} = \log \frac{\mathbb{E}_{\hat{\pi}(w^*,d^*,z^*)}\mathbb{I}(w=w^*,z=z^*)}{\mathbb{E}_{\pi(w^*,d^*,z^*)}\mathbb{I}(z=z^*)},\forall z,w,\nonumber\\
    g(M_{zd}^{\beta,*}) &= 0 \leftrightarrow M_{zd}^{\beta,*} = \log \frac{\mathbb{E}_{\hat{\pi}(w^*,d^*,z^*)}\mathbb{I}(d=d^*,z=z^*)}{\mathbb{E}_{\pi(w^*,d^*)}\mathbb{I}(d=d^*)},\forall z,d.
\end{align}
Simple algebra yields that $\sum_{w=1}^V \exp(M_{zw}^{\alpha,*}) = 1, \forall z, \sum_{z=1}^K \exp(M_{zd}^{\beta,*})=1,\forall d$, showing $S(C)\subseteq H$.

\subsubsection{SpikeLDA}
In SpikeLDA algorithm, the ML-ODE is
\begin{align}
    \frac{d}{ds}M_{zw}^{\alpha} &= g^{\alpha}(M_{zw}^{\alpha}),\forall z,w,\nonumber\\
    \frac{d}{ds}M_{zd}^{\beta} &= g^{\beta}_{map}(M_{zd}^{\beta})\triangleq \mathbb{E}_{\hat{\pi}(w^*,d^*,z^*)}\Big\{\mathbb{I}(d=d^*)\big[\big(\mathbb{I}(z=z^*)+\frac{\lambda_z-1}{N_d}\big)\exp(-M_{zd}^{\beta}) - \frac{1}{\kappa}-\frac{1}{N_d}\big]\Big\},\forall z,d,\label{equ.proof MAP ML-ODE}
\end{align}
where $g^{\alpha}(M_{zw}^{\alpha})$ is defined in Eq.~(\ref{equ.proof ML ML-ODE}).

As the first result, $\mathbf{M}(s)$ monotonically converges to the normalization manifold $H$. To see this, the temporal evolutions of deviations from the manifold are:
\begin{align*}
    \frac{d}{ds}\xi_{z}^{\alpha}(\mathbf{M}(s)) = -\xi_z^{\alpha}(\mathbf{M}(s))\mathbb{E}_{\hat{\pi}(w^*,d^*,z^*)}\mathbb{I}(z=z^*),\forall z
\end{align*}
as derived in Eq.~(\ref{equ.proof ML continuous 1}), and
\begin{align*}
    \frac{d}{ds}\xi_{d}^{\beta}(\mathbf{M}(s)) &= \nabla \xi_{d}^{\beta}(\mathbf{M}(s)) \cdot g(\mathbf{M}(s))\nonumber\\
    &= \sum_{z=1}^K \exp(M_{zd}^{\beta}) \mathbb{E}_{\pi(w^*,d^*,z^*)}\bigg\{\mathbb{I}(d=d^*)\Big[\big(\mathbb{I}({z=z^*})+\frac{\lambda_z - 1}{N_d}\big)\exp(-M_{zd}^{\beta}) -\frac{1}{\kappa}- \frac{1}{N_d})\Big]\bigg\}\nonumber\\
    &= \sum_{z=1}^K \mathbb{E}_{\pi(w^*,d^*,z^*)}\bigg\{\mathbb{I}(d=d^*)\Big[\mathbb{I}({z=z^*})+\frac{\lambda_z - 1}{N_d} -\big(\frac{1}{\kappa}+ \frac{1}{N_d})\exp(M_{zd}^{\beta}) \Big]\bigg\}\nonumber\\
    &= \mathbb{E}_{\pi(w^*,d^*,z^*)}\bigg\{\mathbb{I}(d=d^*)\Big[1+\frac{\sum_{z=1}^K (\lambda_z - 1)}{N_d} -\big(\frac{1}{\kappa}+ \frac{1}{N_d})\sum_{z=1}^K \exp(M_{zd}^{\beta}) \Big]\bigg\}\nonumber\\
    &= \mathbb{E}_{\pi(w^*,d^*,z^*)}\bigg\{\mathbb{I}(d=d^*)\Big[1+\frac{\kappa}{N_d} -\big(\frac{1}{\kappa}+ \frac{1}{N_d})\sum_{z=1}^K \exp(M_{zd}^{\beta}) \Big]\bigg\}\nonumber\\
    &= -\frac{N_d+\kappa}{\kappa N_d}\pi(d)\zeta_d^{\beta}(\mathbf{M}(s)),\forall d.
%\label{equ.proof ML continuous 2}
\end{align*}
Then the monotonically convergence to $H$ follows, analogue to the arguments in proof \ref{proof.ML dynamics}.

Next, we prove $S(C)\subseteq H$. From Eq.~(\ref{equ.proof MAP ML-ODE}), we can solve the stationary points $g(\mathbf{W^*}) = \mathbf{0}$ implicitly:
\begin{align*}
    g(M_{zw}^{\alpha,*}) &= 0 \leftrightarrow M_{zw}^{\alpha,*} = \log \frac{\mathbb{E}_{\hat{\pi}(w^*,d^*,z^*)}\mathbb{I}(w=w^*,z=z^*)}{\mathbb{E}_{\hat{\pi}(w^*,d^*)}\mathbb{I}(z=z^*)},\forall z,w,\nonumber\\
    g(M_{zd}^{\beta,*}) &= 0 \leftrightarrow M_{zd}^{\beta,*} = \log \frac{\mathbb{E}_{\hat{\pi}(w^*,d^*,z^*)}\Big[\mathbb{I}(d=d^*,z=z^*)+\mathbb{I}(d=d^*)\frac{\lambda_z - 1}{N_d}\Big]}{\mathbb{E}_{\pi(w^*,d^*)}\bigg[\mathbb{I}(d=d^*)\Big(\frac{1}{\kappa}+\frac{1}{N_d}\Big)\bigg]},\forall z,d.
\end{align*}
Simple algebra yields that $\sum_{w=1}^V \exp(M_{zw}^{\alpha,*}) = 1, \forall z, \sum_{z=1}^K \exp(M_{zd}^{\beta,*})=\kappa,\forall d$, showing $S(C)\subseteq H$.

Following the same proof as in proof~\ref{proof.ML dynamics}, once $\mathbf{M}$ enters the manifold $H$ at time $s'$, it stays on the manifold for $s\geq s'$.

\subsubsection{semi-SpikeLDA}
In semi-SpikeLDA, we only consider $\mathbf{M}^{\alpha}$ for optimization.
The proof is the same as that in the SpikePLSI algorithm.\qed
\end{num_proof}

\subsubsection{Convergence to the local maximum on the manifold}
\begin{num_proof}\label{proof.convergence to maximum}
As proved above, once $\mathbf{M}$ enters the manifold $H$ at time $s'$, it stays on the manifold for $s\geq s'$.
In this part, we prove that $\mathbf{M}$ following the ML-ODEs converges to the local maximum of the corresponding conceived objective functions, for SpikePLSI, SpikeLDA and semi-SpikeLDA respectively.

\subsubsection{SpikePLSI} We denote the objective function as $\mathbb{L}^{plsi}(\mathbf{M})$ for brevity:
$$\mathbb{L}(\mathbf{M})^{plsi}\triangleq\mathbb{E}_{\pi(w,d)}\log p(x_w^{\alpha}=1|x_d^{\beta}=1;\mathbf{M}^{\alpha}, \mathbf{M}^{\beta}).$$
Given $\mathbf{M}(s)\in H$, $\mathbb{L}^{plsi}$ strictly monotonically increases over time until $\mathbf{M}$ reaches a stationary point $g(\mathbf{M}) = \mathbf{0}$:
\begin{align}
    \frac{d}{ds} \mathbb{L}^{plsi}(\mathbf{M}(s))=& \nabla \mathbb{L}^{plsi}(\mathbf{M}) \cdot g(\mathbf{M})\nonumber\\
    =& \sum_{z=1}^K\sum_{w=1}^V \exp(M_{zw}^{\alpha})g(M_{zw}^{\alpha})^2 + \sum_{d=1}^D\sum_{z=1}^K \exp(M_{zd}^{\beta})g(M_{zd}^{\beta})^2\geq 0. \label{equ.proof ML continuous 3}
\end{align}
In another word, $F(C)\subseteq S(C)$. Combining the results above, $F(C)\subseteq S(C)\cap H = S(C)$. The second equality in Eq.~(\ref{equ.proof ML continuous 3}) is from
\begin{align}
     \frac{\partial}{\partial M_{zw}^{\alpha}}\mathbb{L}^{plsi}
    =&\frac{\partial}{\partial M_{zw}^{\alpha}} \mathbb{E}_{\pi(w^*,d^*)}\log p(x_{w^*}^{\alpha}=1|x_{d^*}^{\beta}=1;\mathbf{M}^{\alpha}, \mathbf{M}^{\beta})\nonumber\\
    =& \mathbb{E}_{\hat{\pi}(w^{*},d^{*},z^{*})} \frac{\partial}{\partial M_{zw}^{\alpha}}  \log \sum_{z=1}^K \exp(M_{zw^{*}}^{\alpha}+M_{zd^{*}}^{\beta} - \log A(M_{z\cdot}^{\alpha},M_{\cdot d^{*}}^{\beta}))\nonumber\\
    =& \mathbb{E}_{\pi(w^{*},d^{*})} \bigg\{\frac{\exp(M_{zw^*}^{\alpha}+M_{zd^{*}}^{\beta} - \log A(M_{z\cdot}^{\alpha},M_{\cdot d^{*}}^{\beta}))}{\sum_{z=1}^K \exp(M_{zw^*}^{\alpha}+M_{z d^{*}}^{\beta} - \log A(M_{z\cdot}^{\alpha},M_{\cdot d^{*}}^{\beta}))}\Big[\mathbb{I}(w=w^*) - \exp(M_{zw}^{\alpha})\Big]\bigg\}\nonumber\\
    =& \mathbb{E}_{\pi(w^{*},d^{*})} \bigg\{p(h_z=1|x_{w^*}^{\alpha}=1,x_{d^*}^{\beta}=1)\Big[\mathbb{I}(w=w^*) - \exp(M_{zw}^{\alpha})\Big] \bigg\}\nonumber\\
    =& \mathbb{E}_{\hat{\pi}(w^{*},d^{*},z^{*})} \bigg\{\mathbb{I}(w=w^*,z=z^*) - \mathbb{I}(z=z^*)\exp(M_{zw}^{\alpha}) \bigg\}\nonumber\\
    =& \exp(M_{zw}^{\alpha})g(M_{zw}^{\alpha}),\forall z,w,\label{equ.proof ML continuous 4}\\
     \frac{\partial}{\partial M_{zd}^{\beta}}\mathbb{L}^{plsi}
    =&\frac{\partial}{\partial M_{zd}^{\beta}} \mathbb{E}_{\pi(w^*,d^*)}\log p(x_{w^*}^{\alpha}=1|x_{d^*}^{\beta}=1;\mathbf{M}^{\alpha}, \mathbf{M}^{\beta})\nonumber\\
    =& \mathbb{E}_{\pi(w^*,d^*)} \frac{\partial}{\partial M_{zd}^{\beta}}  \log \sum_{z=1}^K \exp(M_{zw^*}^{\alpha}+M_{zd^*}^{\beta} - \log A(M_{z\cdot}^{\alpha},M_{\cdot d^*}^{\beta}))\nonumber\\
    =& \mathbb{E}_{\pi(w^*,d^*)}\bigg\{\mathbb{I}(d=d^*)\Big.[\frac{\exp(M_{zw^*}^{\alpha}+M_{zd}^{\beta} - \log A(M_{z\cdot}^{\alpha},M_{\cdot d^{*}}^{\beta})}{\sum_{z=1}^K \exp(M_{zw^*}^{\alpha}+M_{zd}^{\beta} - \log A(M_{z\cdot}^{\alpha},M_{\cdot d^{*}}^{\beta})}- \exp(M_{zd}^{\beta})\Big]\bigg\}\nonumber\\
    =& \mathbb{E}_{\pi(w^*,d^*)}\bigg\{\mathbb{I}(d=d^*)\Big[(p(h_z=1|x_{w^*}^{\alpha}=1,x_d^{\beta} = 1) - \exp(M_{zd}^{\beta})\Big]\bigg\}\nonumber\\
    =& \mathbb{E}_{\hat{\pi}(w^*,d^*,z^*)}\bigg\{\mathbb{I}(d=d^*)\Big[\mathbb{I}(z=z^*) - \exp(M_{zd}^{\beta})\Big]\bigg\}\nonumber\\
    =& \exp(M_{zd}^{\beta})g(M_{zd}^{\beta}),\forall z,d.\label{equ.proof ML continuous 5}
\end{align}
In another word, $g(\mathbf{M})$ is the natural gradient, where the metric is the expected Fisher information matrix of a Poisson likelihood \cite{Patterson2013}. $S(C)$ is not only the set of stationary points, but also the set of all the critical points of the objective function $\mathbb{L}^{plsi}$ on the constrained manifold. It is obvious that $F(C) \supseteq S(C)$:
\begin{align}
    \lim_{s\rightarrow\infty} \mathbf{M}(s) = \mathbf{M}(0), \text{if }g(\mathbf{M}(0)) = \mathbf{0}.
\end{align}
So $F(C) = S(C)$. The positive limit set $F(C)$ equals the set of critical points to the constrained optimization problem (\ref{equ.ML SNN problem}). With probability one, $\mathbf{M}(s)$ converges to a local maximum as $s\rightarrow \infty$.

So if $\sum_t\eta_t = \infty,\sum_t \eta_t^2 <\infty$, all sequence under the stochastic update rule Eq.~(\ref{equ.ML dicrete dynamics}):
\begin{align}
    M_{zw}^{\alpha} &= M_{zw}^{\alpha} + \eta_t h_z\left.(x_w^{\alpha}\exp(-M_{zw}^{\alpha})-1\right.),\forall z,w,\nonumber\\
    M_{zd}^{\beta} &= M_{zd}^{\beta} + \eta_t x_d^{\beta}\left.(h_z\exp(-M_{zd}^{\beta})-1\right.),\forall z,d
\end{align}
converge, and the set of all stable convergence points is the same as the set of local maxima of the optimization problem (\ref{equ.ML SNN problem}).
The stochastic update rule can solve the optimization problem.\qed

\subsubsection{SpikeLDA}
We denote the objective function as $\mathbb{L}^{lda}(\mathbf{M})$ for brevity
$$\mathbb{L}^{lda}(\mathbf{M})\triangleq\mathbb{E}_{\pi(w,d)}\Big[\log p(x_w^{\alpha}=1|x_d^{\beta}=1;\mathbf{M}^{\alpha}, \mathbf{M}^{\beta}) + \frac{1}{N_d}\log p(\mathbf{M}_{\cdot d}^{\beta};\boldsymbol\lambda)\Big].$$
Given $\mathbf{M}(s)\in H$, $\mathbb{L}^{lda}$ monotonically increase over time until $\mathbf{M}$ reaches a stationary point $g(\mathbf{M}) = \mathbf{0}$:
\begin{align}
    \frac{d}{ds} \mathbb{L}^{lda}(\mathbf{M}(s))=& \nabla \mathbb{L}^{lda}(\mathbf{M}) \cdot g(\mathbf{M})\nonumber\\
    =& \sum_{z=1}^K\sum_{w=1}^V \exp(M_{zw}^{\alpha})g^{\alpha}(M_{zw}^{\alpha})^2 + \sum_{d=1}^D \sum_{z=1}^K \exp(M_{zd}^{\beta})g^{\beta}_{map}(M_{zd}^{\beta})^2\nonumber\\
    \geq & 0.\label{equ.proof MAP continuous 1}
\end{align}
In another word, $F(C)\subseteq S(C)$. Combining the results above, $F(C)\subseteq S(C)\cap H = S(C)$. The second equality in Eq.~(\ref{equ.proof MAP continuous 1}) is from
\begin{align*}
     \frac{\partial}{\partial M_{zw}^{\alpha}}\mathbb{L}^{lda}
    =&\frac{\partial}{\partial M_{zw}^{\alpha}} \mathbb{E}_{\pi(w,d)}\Big[\log p(x_w^{\alpha}=1|x_d^{\beta}=1;\mathbf{M}^{\alpha}, \mathbf{M}^{\beta}) + \log p(\mathbf{M}^{\beta}_{\cdot,d};\boldsymbol\lambda)\Big]\nonumber\\
    =& \frac{\partial}{\partial M_{zw}^{\alpha}}  \mathbb{E}_{\pi(w,d)}\log p(x_w^{\alpha}=1|x_d^{\beta}=1;\mathbf{M}^{\alpha}, \mathbf{M}^{\beta})\\
    =& \exp(M_{zw}^{\alpha})g(M_{zw}^{\alpha}),\forall z,w,
\end{align*}
where the last equality is from Eq.~(\ref{equ.proof ML continuous 4}). And

\begin{align*}
    & \frac{\partial}{\partial M_{zd}^{\beta}}\mathbb{L}^{lda}\nonumber\\
    =&\frac{\partial}{\partial M_{zd}^{\beta}} \mathbb{E}_{\pi(w,d)}\Big[\log p(x_w^{\alpha}=1|x_d^{\beta}=1;\mathbf{M}^{\alpha}, \mathbf{M}^{\beta}) + \frac{1}{N_d}\log p(\mathbf{M}^{\beta}_{\cdot d};\boldsymbol\lambda)\Big]\nonumber\\
    =& \mathbb{E}_{\pi(w,d^*)}\Bigg\{\mathbb{I}(d=d^*) \bigg\{\frac{\partial}{\partial M_{zd}^{\beta}}  \log \sum_{z=1}^K \exp(M_{zw}^{\alpha}+M_{zd}^{\beta} - \log A(M_{z\cdot}^{\alpha},M_{\cdot d}^{\beta}))+ \frac{1}{N_d}\frac{\partial}{\partial M_{zd}^{\beta}}\log \exp\Big[(\lambda_z - 1) M_{zd}^{\beta})\exp(-e^{M_{zd}^{\beta}})\Big]\bigg\}\Bigg\}\nonumber\\
    =& \mathbb{E}_{\pi(w,d^*)}\Bigg\{\mathbb{I}(d=d^*) \bigg\{\Big[\frac{\exp(M_{zw}^{\alpha}+M_{zd}^{\beta} - \log A(M_{z\cdot}^{\alpha},M_{\cdot d}^{\beta}))}{\sum_{z=1}^K \exp(M_{zw}^{\alpha}+M_{zd}^{\beta} - \log A(M_{z\cdot}^{\alpha},M_{\cdot d}^{\beta}))}- \frac{\exp(M_{zd}^{\beta})}{\kappa} \Big]+ \frac{1}{N_d}\Big[(\lambda_z - 1) - e^{M_{zd}^{\beta}}\Big]\bigg\}\Bigg\}\nonumber\\
    =& \mathbb{E}_{\hat{\pi}(w,d^*,z^*)}\Bigg\{\mathbb{I}(d=d^*) \bigg\{\Big[\mathbb{I}(z=z^*)- \frac{\exp(M_{zd}^{\beta})}{\kappa} \Big]+ \frac{1}{N_d}\Big[(\lambda_z - 1) - e^{M_{zd}^{\beta}}\Big]\bigg\}\Bigg\}\nonumber\\
    =& \mathbb{E}_{\hat{\pi}(w,d^*,z^*)}\Bigg\{\mathbb{I}(d=d^*) \bigg\{\Big[\mathbb{I}(z=z^*)+\frac{\lambda_z-1}{N_d}\Big]+ \Big[-\frac{\exp(M_{zd}^{\beta})}{\kappa} - \frac{e^{M_{zd}^{\beta}}}{N_d}\Big]\bigg\}\Bigg\}\nonumber\\
     =& \exp(M_{zd}^{\beta})\mathbb{E}_{\hat{\pi}(w,d^*,z^*)}\Bigg\{\mathbb{I}(d=d^*) \bigg\{\Big[\mathbb{I}(z=z^*)+\frac{\lambda_z-1}{N_d}\Big]\exp(-M_{zd}^{\beta}) - \Big(\frac{1}{\kappa} + \frac{1}{N_d}\Big)\bigg\}\Bigg\}\nonumber\\
    =& \exp(M_{zd}^{\beta})g^{\beta}_{map}(M_{zd}^{\beta}),\forall z,d.%\label{equ.proof ML continuous 5}
\end{align*}
In another word, $g(\mathbf{M})$ is the natural gradient, where the metric is the expected Fisher information matrix of a Poisson likelihood \cite{Patterson2013}. $S(C)$ is not only the set of stationary points, but also the set of all the critical points of the objective function $\mathbb{L}^{lda}$ on the constrained manifold. It is obviously $F(C) \supseteq S(C)$:
\begin{align*}
    \lim_{s\rightarrow\infty} \mathbf{M}(s) = \mathbf{M}(0), \text{if }g(\mathbf{M}(0)) = \mathbf{0}.
\end{align*}
So $F(C) = S(C)$. The positive limit set $F(C)$ is the set of all the critical points of the constrained optimization problem (\ref{equ.MAP SNN problem}). With probability one, $\mathbf{M}(s)$ converges to a local maximum as $s\rightarrow \infty$.

So if $\sum_t\eta_t = \infty,\sum_t \eta_t^2 <\infty$, all sequences under the stochastic update rule Eq.~(\ref{equ.MAP discrete dynamics}):
\begin{align*}
    M_{zw}^{\alpha} &= M_{zw}^{\alpha} + \eta_t h_z\big(x_w^{\alpha}\exp(-M_{zw}^{\alpha})-1\big)\forall w,z,\\
    M_{zd}^{\beta} &= M_{zd}^{\beta} + \eta_t x_d^{\beta}\Big[(h_z+\frac{\lambda-1}{N_d})\exp(-M_{zd}^{\beta})- \frac{1}{\kappa} - \frac{1}{N_d}\Big],\forall d,z
\end{align*}
converge, and the set of all stable convergence points is the same as the set of local maxima of the optimization problem (\ref{equ.MAP SNN problem}).
The stochastic update rule can solve the optimization problem.
\subsubsection{semi-SpikeLDA}
We denote the objective function as $\mathbb{L}^{semi}(\mathbf{M})$ for brevity: $$
\mathbb{L}^{semi}(\mathbf{M}^{\alpha})\triangleq\mathbb{E}_{\pi(d)}\sum_{w=1}^V\sum_{z=1}^K\mathbb{E}_{\mathbf{z}_d|\mathbf{w}_d;\mathbf{M}^{\alpha},\mathbf{M}^{\beta}}[C_{d,z,w}]\log p(x_w^{\alpha}=1|h_z=1;\mathbf{M}^{\alpha}).$$
Given $\mathbf{M}^{\alpha}(s)\in H$, $\mathbb{L}^{semi}$ monotonically increase over time until $\mathbf{M}^{\alpha}$ reaches a stationary point $g(\mathbf{M}^{\alpha}) = \mathbf{0}$:
\begin{align}
    \frac{d}{dt} \mathbb{L}^{semi}(\mathbf{M}^{\alpha}(t))= \nabla \mathbb{L}^{semi}(\mathbf{M}^{\alpha}) \cdot g(\mathbf{M}^{\alpha})
    =\sum_{z=1}^K\sum_{w=1}^V \exp(M_{zw}^{\alpha})g^{\alpha}(M_{zw}^{\alpha})^2
    \geq  0.\label{equ.proof semi continuous 1}
\end{align}
In another word, $F(C)\subseteq S(C)$. Combining the results above, $F(C)\subseteq S(C)\cap H = S(C)$. The second equality in Eq.~(\ref{equ.proof semi continuous 1}) is from
\begin{align*}
     \frac{\partial}{\partial M_{zw}^{\alpha}}\mathbb{L}^{semi}
    =&\frac{\partial}{\partial M_{zw}^{\alpha}} \mathbb{E}_{\pi(d)}\sum_{w^*=1}^V\sum_{z^*=1}^K\mathbb{E}_{\mathbf{z}_d|\mathbf{w}_d;\mathbf{M}^{\alpha},\mathbf{M}^{\beta}}[C_{d,z^*,w^*}]\log p(x_{w^*}^{\alpha}=1|h_{z^*}^{\beta}=1;\mathbf{M}^{\alpha})\nonumber\\
    =&\frac{\partial}{\partial M_{zw}^{\alpha}} \mathbb{E}_{\pi(d)}\sum_{w^*=1}^V\sum_{z^*=1}^K\mathbb{E}_{\mathbf{z}_d|\mathbf{w}_d;\mathbf{M}^{\alpha},\mathbf{M}^{\beta}}[C_{d,z^*,w^*}]\log \exp(M_{z^*w^*}^{\alpha} - \log(\sum_{w'}\exp M_{z^*w'}))\nonumber\\
    =& \mathbb{E}_{\pi(d)}\sum_{w^*=1}^V\sum_{z^*=1}^K\mathbb{E}_{\mathbf{z}_d|\mathbf{w}_d;\mathbf{M}^{\alpha},\mathbf{M}^{\beta}}[C_{d,z^*,w^*}]\Big[\mathbb{I}(w=w^*,z=z^*)-\mathbb{I}(z=z^*)\exp(M_{zw}^{\alpha})\Big]\nonumber\\
    =& \mathbb{E}_{\pi(d)}\mathbb{E}_{\mathbf{z}_d|\mathbf{w}_d;\mathbf{M}^{\alpha},\mathbf{M}^{\beta}}\Big[C_{d,z,w}-C_{d,z}\exp(M_{zw}^{\alpha})\Big]\nonumber\\
    =& \exp(M_{zw}^{\alpha})g(M_{zw}^{\alpha}),~~~\forall z,w,
\end{align*}
where the last equality is from Eq.~(\ref{equ.proof ML continuous 4}).
In another word, $g(\mathbf{M})$ is the natural gradient, where the metric is the expected Fisher information matrix of a Poisson likelihood \cite{Patterson2013}. $S(C)$ is not only the set of stationary points, but also the set of all the critical points of the objective function $\mathbb{L}^{semi}$ on the constrained manifold. It is obviously $F(C) \supseteq S(C)$:
\begin{align*}
    \lim_{s\rightarrow\infty} \mathbf{M}^{\alpha}(s) = \mathbf{M}^{\alpha}(0), \text{if }g(\mathbf{M}^{\alpha}(0)) = \mathbf{0}.
\end{align*}
So $F(C) = S(C)$. The positive limit set $F(C)$ is the set of all the critical points of the constrained optimization problem (\ref{equ.MAP SNN problem}). With probability one, $\mathbf{M}(s)$ converges to a local maximum as $s\rightarrow \infty$.

So if $\sum_t\eta_t = \infty,\sum_t \eta_t^2 <\infty$, all sequences under the stochastic update rule Eq.~(\ref{equ.semi-likelihood discrete dynamics}):
\begin{align*}
    M_{zw}^{\alpha} &\leftarrow M_{zw}^{\alpha} + \eta_t \frac{1}{|\hat{D}|T}\Big[\hat{N}_{z,w}\exp(-M_{zw}^{\alpha}) - \hat{N}_{z}\Big],\forall w,z
\end{align*}
converge, and the set of all stable convergence points is the same as the set of local maxima of the optimization problem (\ref{equ.semi SNN problem}).
The stochastic update rule can solve the optimization problem.
\qed
\end{num_proof}

\section{Network pruning}\label{sec.pruning}
Due to the limitation of hardware technology, some NMSs constrain the number of hardware synapses connecting to a neuron.
For instance, TruthNorth~\cite{Merolla2014} limits the fan-in to a neuron to not greater than $256$.
To accommodate such hardware constrain, we propose an pruned extension of the ed-SpikeLDA, where the number of connections to a topic neuron is not greater than $256$ as required.

The pruning method has three steps.
First, we pre-train the ed-SpikeLDA for several iterations on a GPC.
After the pre-training, the top-200 words for each topic are identified.
Second, we prune the pre-trained network by tying some model parameters (synapses) before deploying on the NMS.
For a topic $z$, each word synapse $M_{wz}^{\alpha}$ of the top-200 words enjoys an unique adaptive hardware synapse; and
the other words synapses share only one hardware synapse fixed to $\log(\frac{p}{V-200})$, where $p$ is the sum of $\phi_{zw}$  of the $V-200$ less-relevant words calculated according to Tab.~\ref{tlb.Parameter relation}.
As the document synapses are local parameters,
we only keep the $\mathbf{M}_{\cdot d}$ of the $50$ most frequent documents on the NMS, and store the others in an external memory.
After the second step, each topic neuron is connected to only $251$ hardware synapses.
Third, ed-SpikeLDA continues its training on the NMS.
\subsection{Impacts of network pruning}
The pruned network would output a high perplexity because the $V-200$ less-relevant words equally share their weights in the topic distribution.
However, if we examine the learnt latent representations, we find that the network still outputs useful representations for discrimination (see Tab.~\ref{tlb.pruned classification}).
The discriminative experiment is the same as that in the main text.
\begin{table}[h!]
    \vspace{-.2cm}
    \centering
    \resizebox{0.33\textwidth}{!}{%
    \begin{tabular}{|c|c c |}\hline
       & Binary &  Mutli-class\\ \hline
    CGS& $72.8\pm 3.8$& $63.5\pm 0.2$ \\
    ed-SpikeLDA&  $72.5\pm 2.4$& $57.6\pm 3.3$\\
    ed-SpikeLDA(p-15)& $73.0\pm0.1$ &$60.0\pm1.3$\\
    ed-SpikeLDA(p-20)& $73.8\pm 1.2$ &$62.2\pm 1.4$\\\hline
    \end{tabular}%
    }
    \vspace{-.1cm}
    \caption{Classification on 20NG, where the $p$-$x$ means pre-training takes $x$ iterations.}
    \label{tlb.pruned classification}
\vspace{-0cm}
\end{table}

\end{document}